\documentclass{article} %
\usepackage{iclr2027_conference,times}

\usepackage{amsmath,amsfonts,bm,amsthm}

\newcommand{\taumin}{\tau_{\min}}
\newcommand{\vtaumin}{\vtau_{\min}}

\newcommand{\mat}[1]{\mathbf{#1}}

\newtheorem{proposition}{Proposition}
\newtheorem{corollary}{Corollary}
\theoremstyle{remark}

\def\eqref#1{equation~\ref{#1}}

\def\1{\bm{1}}

\def\vmu{{\bm{\mu}}}

\def\vb{{\bm{b}}}

\def\vs{{\bm{s}}}

\def\vx{{\bm{x}}}
\def\vy{{\bm{y}}}
\def\vz{{\bm{z}}}
\def\vtau{{\bm{\tau}}}

\def\veta{{\bm{\eta}}}
\def\vepsilon{{\bm{\varepsilon}}}

\def\vlambda{{\bm{\lambda}}}

\DeclareMathAlphabet{\mathsfit}{\encodingdefault}{\sfdefault}{m}{sl}
\SetMathAlphabet{\mathsfit}{bold}{\encodingdefault}{\sfdefault}{bx}{n}

\newcommand{\R}{\mathbb{R}}

\usepackage{hyperref}
\usepackage{url}
\newcommand{\diff}{\mathrm d}
\usepackage{float}

\usepackage{enumitem}
\usepackage{tabularx}
\usepackage{booktabs} %
\usepackage{graphicx}
\usepackage{bm}
\usepackage{hyperref}
\usepackage{cleveref}
\usepackage{multirow}
\usepackage{siunitx}
\usepackage{algorithm}
\usepackage{algorithmicx}
\usepackage{algpseudocode}
\usepackage{arydshln}
\usepackage{subcaption}

\title{DAWIS: Data Assimilation with Windowed\\Inverse Sampling 
via Multitask Interpolants
}
\iclrfinalcopy

\author{Erik Wikingsson$^{1}$\thanks{Equal contribution} \quad Martin Andrae$^{1}$\footnotemark[1] \quad Tomas Landelius$^{1,2}$ \quad Fredrik Lindsten$^{1}$ \\
$^1$Linköping University, Sweden \quad $^2$Swedish Meteorological and
Hydrological Institute\\
\texttt{erik.wikingsson@gmail.com, martin.andrae@liu.se} \\
\texttt{tomas.landelius@smhi.se, fredrik.lindsten@liu.se}
}

\begin{document}

\maketitle

\begin{abstract}
Flow- and diffusion-based generative models have recently emerged as flexible and highly efficient forecasting models for dynamical systems. When combined with inference-time guidance, they offer a promising route to high-dimensional non-Gaussian data assimilation (DA), the problem of combining forecasts with observations to estimate latent system states. Existing filters, however, condition on a fixed history and assimilate only the most recent observation, leaving them unable to revise past states when new observations arrive. Estimates then stay tethered to a history that later observations may contradict, and errors accumulate over the assimilation run. To this end, we introduce DAWIS,  a unified DA method covering filtering, fixed-lag smoothing, and block smoothing within a single framework. DAWIS replaces the single flow time of a state-level prior with a multitask stochastic interpolant over a window of consecutive states, assigning a separate flow time to each. An assimilation cycle inverts the window to a vector of per-state turning points and regenerates it under observation guidance, with the turning points controlling how strongly each state is held fixed, revised, or generated from scratch. The same construction can also absorb the forecast into the assimilation cycle, removing the need for a separate forecasting model. Experiments on challenging nonlinear systems show that DAWIS improves on both filtering and smoothing baselines under sparse, noisy, and nonlinear observations.
The code for DAWIS is available at \href{https://github.com/Erik-Wikingsson/DAWIS}{https://github.com/Erik-Wikingsson/DAWIS}
\end{abstract}

\section{Introduction}\label{intro}
Many scientific and engineering problems, from weather forecasting and fluid mechanics to neuroscience and robotics, require estimating latent states from incomplete and noisy observations \citep{asch_DA_2016}. 
Data assimilation (DA) combines prior knowledge of system dynamics with sparse, noisy observations $\vy_{1:n} := (\vy_1, \dots, \vy_n)$ of a latent state sequence $\vx_{0:n}$, where $\vy_n$ relates to $\vx_n$ through $p(\vy_n|\vx_n)$. 
DA comes in two complementary forms. {\em Filtering} tracks the {\em filtering distribution} $p(\vx_n|\vy_{1:n})$ online as observations arrive, whereas {\em smoothing} characterizes the {\em smoothing distribution} $p(\vx_{0:N}|\vy_{1:N})$ or its marginals $p(\vx_n|\vy_{1:N})$ over a fixed window, using future observations to refine past estimates.

Both are particularly challenging in high-dimensional systems such as the atmosphere, where states evolve chaotically and observations are sparse, noisy, and heterogeneous. 
Existing methods for approximating these distributions can be broadly grouped by their formulation of inference and uncertainty representation. 
{\em Variational methods}, such as 4DVar, estimate the most probable trajectory over an assimilation window via optimization. They scale to very high-dimensional systems but traditionally require tangent-linear and adjoint models and primarily provide point estimates. 
{\em Ensemble Kalman methods}, including the EnKF and ensemble Kalman smoothers, propagate an ensemble to approximate filtering or smoothing distributions through their means and covariances. They are computationally attractive but rely on Gaussian or approximately Gaussian representations and often require careful tuning of localization and inflation \citep{calvello2024accuracy,bannister2017review}. 
{\em Hybrid ensemble-variational methods}, such as En4DVar \citep{bonavita2012en4dvar}, combine variational optimization with flow-dependent ensemble covariances but retain limitations from finite ensembles and covariance-based uncertainty. 
Finally, {\em particle filters and smoothers} \citep{naesseth2019elements} can represent general distributions, but suffer from the curse of dimensionality \citep{bengtsson2008curse}. These limitations motivate flexible, data-driven methods capable of representing complex non-Gaussian distributions while remaining computationally tractable in high-dimensional systems.

Generative models provide an alternative route to high-dimensional sampling by representing flexible, non-Gaussian distributions and enabling posterior sampling through inference-time conditioning.
These ideas have been used to target filtering \citep{andrae2026daisi, chen2025flowdas} as well as smoothing \citep{rozet2023score, jia2026forcingdasunifiedrobustdata}. The choice of prior, however, largely dictates which of the two is available: methods that learn a stationary prior over states \citep{andrae2026daisi} or short trajectories \citep{rozet2023score} and those that learn a forecast model $p(\vx_{n+1}|\vx_n)$ \citep{chen2025flowdas, savary2025training} each commit to one regime. 
We generalize this construction from individual states to a {\em multitask trajectory prior}, assigning a separate flow time to each state in a window.
This brings filtering, fixed-lag smoothing, and block smoothing into a single framework and removes the need for an external forecast model.

\begin{figure}[t]
  \centering
  \includegraphics[width=\linewidth]{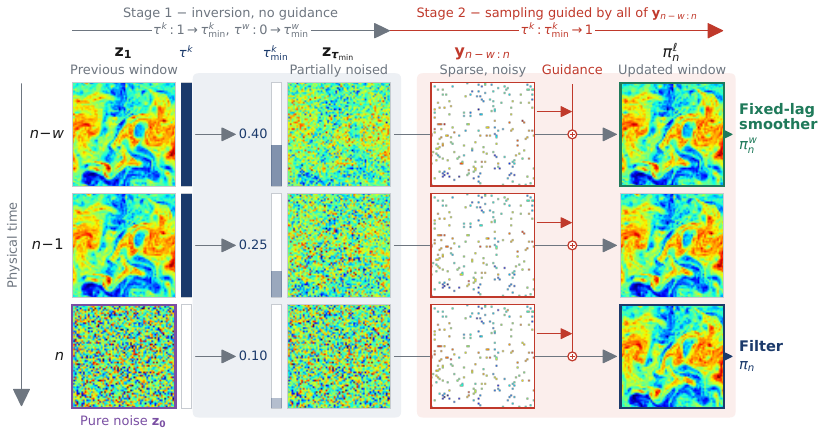}
  \vspace{-1.5em}
  \caption{
  The DAWIS assimilation cycle, here visualized for DAWIS-Joint
    with a prior of $w+1=3$ states. 
    (i)~\emph{Initialization:} The samples $\hat{\vx}_{n-w:n-1}$ from the previous cycle start at flow time $1$ and the new state $\vz_{\bm 0} \sim \rho_{\bm 0}$. 
    (ii)~\emph{Inversion:} the unconditional SDE moves each slot $k$ to its turning point $\taumin^k$, partially noising the old states and running the new one forward from noise. 
    (iii)~\emph{Guidance:} the guided SDE regenerates the window from $\vz_{\vtaumin}$, conditioned on observations $\vy_{n-w:n}$.}

  \label{fig:dawis}
\end{figure}

\paragraph{Contributions:}
We present DAWIS ({\bf D}ata {\bf A}ssimilation with {\bf W}indowed {\bf I}nverse {\bf S}ampling), a unified framework for data assimilation built on flow-based generative models:%
\begin{enumerate}[label=(\roman*)]
\item \textbf{Windowed inversion.} We extend the inversion-based assimilation of \cite{andrae2026daisi} from a single state to whole windows. Each cycle inverts the previous estimate to a vector of per-state turning points $\vtaumin$ and regenerates it under observation guidance, so new observations can revise past states instead of conditioning on a fixed history.
\item \textbf{A unified framework with one prior for several DA tasks.} Filtering, fixed-lag smoothing and block smoothing can all be done with the same trained model by changing the choice of $\vtaumin$. A single sliding-window cycle gives both a filter and a lagged smoother.
\item \textbf{Forecasting with or without an external model.} DAWIS can incorporate an existing forecasting model or, with DAWIS-Joint, generate forecasts within the same generative inference process, eliminating the need for a separate model.

\end{enumerate}

\section{Preliminaries}
\subsection{Data Assimilation}\label{sec:data_assimilation}
Consider the state-space model
\begin{align}\label{eq:ssm}
    \vx_0 \sim p(\vx_0), \qquad
    \vx_n \sim p(\vx_n \mid \vx_{n-1}), \qquad
    \vy_n \sim p(\vy_n \mid \vx_n), \qquad n = 1, \ldots, N,
\end{align}
where $\vx_n \in \mathbb{R}^d$ is the latent state and $\vy_n$ a noisy, partial observation of it. The states form a Markov chain and the observations are conditionally independent given the states. The transition is written as a density for convenience, but is only ever accessed through simulation and may equally be a deterministic simulator or a pre-trained generative model.

The aim of DA is to infer the latent states $\vx_{0:N}$ from the observations $\vy_{1:N}$. The different DA tasks we consider are distinguished by which observations are available when estimating each state. Given observations up to the present, the target is the {\em filtering distribution}
$\pi_n := p(\vx_n \mid \vy_{1:n})$,
computed sequentially as observations arrive. 
Allowing $\ell$ steps of look-ahead instead gives the {\em fixed-lag smoothing distribution}
$\pi^{\ell}_n := p(\vx_n \mid \vy_{1:n+\ell})$,
which recovers the filter at $\ell = 0$ and admits more future information as $\ell$ grows, with diminishing returns once $\ell$ exceeds the forgetting time of the dynamics~\citep{olsson2008sequential}. Using every observation gives the \emph{joint smoothing distribution}, which by \cref{eq:ssm} factorizes as
\begin{align}\label{eq:joint-posterior}
    p(\vx_{0:N} \mid \vy_{1:N}) \propto p(\vx_0) \prod_{n=1}^{N} p(\vx_n \mid \vx_{n-1}) \, p(\vy_n \mid \vx_n),
\end{align}
and whose marginals $p(\vx_n \mid \vy_{1:N})$ are the limit of the fixed-lag smoother as $\ell$ grows. Computing these targets exactly is intractable for the high-dimensional, nonlinear systems we consider. Our aim is to develop a practical approximation that cover all of them in a single unified framework.

\subsection{Multitask Stochastic Interpolants}
\label{sec:msi}

Our proposed DA method leverages recent advances in generative modeling. In the static setting (i.e., ignoring time evolution), flow-based models such as stochastic interpolants \citep{albergo2023stochastic} enable sampling from a target distribution $\rho_1$ by learning a transport map from a simple latent distribution $\rho_0$. In their standard form, these constructions transport the entire sample along a single time axis. 
However, when $\rho_1$ is a distribution over a trajectory of states $\vx_{n:n+w}$, we want access not only to the joint distribution but also to its conditionals. This can be achieved by assigning a separate time to each state in the window, allowing the states to be noised and denoised at different rates.
In this work, we adopt the framework of multitask interpolants \citep{negrel2025multitask}, though closely related formulations appear in the diffusion literature \citep{chen2024diffusion}. 

We define a {\em multitask interpolant} as the stochastic process
\begin{align}\label{eq:mt-interpolant}
  \vz_{\vtau} 
  = \alpha_{\vtau} \odot \vz_{\mathbf{0}} + \beta_{\vtau} \odot \vz_{\mathbf{1}}, \qquad \vtau \in [0,1]^{D},
\end{align}
where $\vz_\mathbf{0} \sim \rho_\mathbf{0}$ and $\vz_\mathbf{1} \sim \rho_\mathbf{1}$, and each entry of $\vtau$ specifies the {\em flow time} of the corresponding component. The scalar curves $\alpha, \beta \in C^2([0,1])$ satisfy $\alpha(0) = \beta(1) = 1$ and $\alpha(1) = \beta(0) = 0$ and all evaluations and multiplications are understood elementwise. 

In order to use the multitask interpolant for sampling, we must specify how the component-wise flow times evolve during generation. We therefore introduce a path $\vtau_t\in C^2([0,1];[0,1]^D)$ and consider the time-indexed interpolant $\vz_{\vtau_t}$. Different choices of $\vtau_t$ give rise to different sampling procedures. 
For example, $\vtau_t=t\bm{1}$ recovers the standard stochastic interpolant, 
while fixing $\tau_t^k\equiv1$ for $k\in\mathcal{K}$ and setting the corresponding data components to $\vz_1^k=\vx_k$ yields samples from $p(\vx\mid\vx_{\mathcal{K}})$.
Likewise, choosing a decreasing path ($\dot{\vtau}_t\leq 0$) reverses the transport, mapping data toward the latent space.

Let $\rho_{\vtau_t}:=\operatorname{Law}(\vz_{\vtau_t})$ denote the distribution of $\vz_{\vtau_t}$ with density $p_{\vtau_t}$, and define its {\em score} $\vs_\vtau(t, \vz) := \nabla \log p_{\vtau_t}(\vz)$ and {\em drift} $\vb_\vtau(t, \vz) := \mathbb{E}[\dot{\vz}_{\vtau_t} | \vz_{\vtau_t}=\vz]$. Then \citet{negrel2025multitask} show that for any non-negative $\vepsilon_\vtau\in C^0([0,1];\mathbb{R}_+^D)$, the SDE
\begin{subequations}\label{eq:forward-sde}
    \begin{align}
    &\diff \vz_{\vtau_t}= (\vb_{\vtau}(t, \vz_{\vtau_t}) + \vepsilon_{\vtau_t}\odot \vs_{\vtau}(t, \vz_{\vtau_t})) \diff t + \sqrt{2\vepsilon_{\vtau_t}}\odot\diff W_t, \quad \vz_{\vtau_0} \sim \rho_{\vtau_0}, \quad t \in [0, 1], \label{eq:forward-sde-ic}
    \end{align}
\end{subequations}
where $W_t$ is Brownian motion, has the same marginal laws $\rho_{\vtau_t}$ as the interpolant $\vz_{\vtau_t}$.
Sampling from \cref{eq:forward-sde} requires access to the drift and score. Rather than learning these quantities separately for each path \(\vtau_t\), we learn the path-independent conditional expectations
\begin{align}\label{eq:etas}
  \veta_{\bm{0}}(\vtau, \vz) := \mathbb{E}[\vz_{\mathbf{0}} \mid \vz_{\vtau} = \vz],
  \qquad
  \veta_{\bm{1}}(\vtau, \vz) := \mathbb{E}[\vz_{\mathbf{1}} \mid \vz_{\vtau} = \vz],
\end{align}
from which the drift and score can be recovered for any chosen path. The resulting sampling procedure therefore supports different paths without retraining. Details of the training objectives and the corresponding drift and score expressions are given in Appendix~\ref{app:interpolant}.

\subsection{Conditional Generation via Guidance}
\label{sec:cond-gen}
Given an observation $\vy \sim p(\vy | \vx)$ and the SDE in \cref{eq:forward-sde}, 
we can sample from the posterior $\rho^{\vy}_{\bm{1}}(\vx) \propto p(\vy | \vx) \rho_{\bm{1}}(\vx)$ by solving the following SDE with {\em guidance}:
\begin{subequations}\label{eq:forward-guided-sde}
    \begin{align}
    &\diff \vz_{\vtau_t}= (\vb_{\vtau}(t, \vz_{\vtau_t}; \vy) + \vepsilon_{\vtau_t}\odot \vs_{\vtau}(t, \vz_{\vtau_t}; \vy)) \diff t + \sqrt{2\vepsilon_{\vtau_t}}\odot\diff W_t, \quad \vz_{\vtau_0} \sim \rho_{\vtau_0}, \quad t \in [0, 1], \label{eq:forward-guided-sde-ic}
    \end{align}
\end{subequations}
where guided score and drift are given by
\begin{subequations}\label{eq:guided-drift}
    \begin{align}
    \vs_{\vtau}(t, \vz_{\vtau_t}; \vy) &= \vs_{\vtau}(t, \vz_{\vtau_t}) + \nabla_{\vz_{\vtau_t}} \log p(\vy | \vz_{\vtau_t}), \\
    \vb_{\vtau}(t, \vz_{\vtau_t}; \vy) &=
    \vb_{\vtau}(t, \vz_{\vtau_t}) + 
\vlambda_{\vtau_t} \odot \nabla_{\vz_{\vtau_t}} \log p(\vy | \vz_{\vtau_t}),
\end{align}
\end{subequations}
for $\vlambda_{\vtau_t}$ defined as in Appendix \ref{app:guidance}.
This also implies that the SDE \cref{eq:forward-guided-sde} share the same marginal laws $\rho_{\vtau_t}^\vy$ as the interpolant \cref{eq:mt-interpolant} where the endpoint $\vz_{\bm 1}$ is a sample from the posterior $\rho^{\vy}_{\bm{1}}$.

In general, the guidance term is intractable as we usually only have access to $p(\vy \mid \vz_{\bm{1}})$ but not $p(\vy \mid \vz_{\vtau_t}) = \mathbb{E}_{\vz_{\bm{1}} \mid \vz_{\vtau_t}}[p(\vy \mid \vz_{\bm{1}})]$. Various approaches exist to approximate this term, including diffusion posterior sampling (DPS) \cite{chung2023diffusion}, moment matching posterior sampling \citep[MMPS;][]{Rozet_Andry_Lanusse_Louppe_2024}, and recent asymptotically exact methods such as Meta Flow Maps \citep{potaptchik2026metaflowmapsenable}.
We use MMPS for its simplicity, but note that any gradient-based guidance method can be used within the DAWIS framework. See Appendix \ref{app:guidance} for details and \cite{daras2024survey} for a survey on guidance methods.

\section{Method}

DAWIS is a unified approach to filtering and smoothing in high-dimensional non-Gaussian DA, built on a single learned prior over windows of states $p(\vx_{n:n+w})$. 
The prior is parameterized as a multitask interpolant and is learned from trajectory data.
We assume a single flow time for each state within the window, giving a vector $\vtau = (\tau^0, \dots, \tau^w)$ of $w+1$ flow times. This enables generating a state from noise, holding it fixed, or anything in between, independently for each state in the window.
This flexibility allows us to simultaneously: (i) retain information from previous assimilation cycles for efficiency and temporal consistency, by \emph{partial} inversion and regeneration, and (ii) give the method sufficient flexibility to update old state variables as new data arrives.

DAWIS is an ensemble method that generates $J$ independent samples $\{\hat \vx_\cdot^j\}_{j=1}^J$ to approximate the filtering or smoothing distribution as an empirical distribution. We describe the methodology for generating a single sample $\hat \vx$. 
Each DAWIS assimilation cycle consists of three stages: initialization, inversion, and guidance. We build up the methodology by exemplifying these stages for filtering and fixed-lag smoothing, guided forecast models, and block smoothing, respectively, in the three subsections below. 
\Cref{alg:dawis} presents the general method and \cref{tab:modes} the design choices that recover the different instances.

\subsection{Filtering and Fixed-lag Smoothing}\label{sec:filter-smoother}
Consider first the filtering problem of sampling sequentially from $\pi_n$.
One option is to guide a generative forecast model trained to simulate from $p(\vx_n \mid \vx_{n-1})$ to target the conditional $p(\vx_n \mid \vy_n,  \vx_{n-1})$ (see further \cref{sec:guided-forecast}) \citep{chen2025flowdas, savary2025training}.
However, this requires conditioning the simulation on a current sample $\vx_{n-1}$, which makes the assimilation locked to a particular history. A remedy is to regenerate a longer window, simulating $p(\vx_{n-w+1:n} \mid \vy_{n-w+1:n}, \vx_{n-w})$ \citep{shysheya2024on}. However, this simulation is still based on a hard conditioning on a historical state, just further back in time, and furthermore requires regenerating the whole window from scratch.

An alternative is to use the \emph{inversion strategy} introduced in the DAISI method by \citet{andrae2026daisi}. They start by producing a forecast $\vx^{\rm f}_n$, using an off-the-shelf forecasting model, conditionally on the sample $\vx_{n-1}$. The forecast state is then inverted to a latent representation $\vz_{\taumin}$ by simulating the SDE in \eqref{eq:forward-sde} from $\tau_0 = 1$ to $\taumin$ (with the prior restricted to a single state). The inversion is followed by a guided "regeneration" of $\hat\vx_n$ using the guided SDE in \eqref{eq:forward-guided-sde}, running from $\taumin$ to $\tau=1$, while conditioning on $\vy_n$.

DAWIS generalizes this inversion strategy to windowed sampling, where the inversion of each state within the window can be controlled independently.
Let $\mathcal W = \{n-w, \dots, n\}$ denote the window at assimilation cycle $n$.
From the previous cycle, each ensemble member $\hat{\vx}_{n-w-1:n-1}$ is an approximate draw from $p(\vx_{n-w-1:n-1}\mid \vy_{1:n-1})$. We \textbf{initialize} cycle $n$ by shifting the sample by one time step and augmenting it with a forecast $\vx^{\rm f}_n$. As in DAISI, this can be obtained using an off-the-shelf forecast model, but an alternative is to use the DAWIS prior itself; see \cref{sec:guided-forecast} below.
This gives the initial state $\vz_{\bm 1} = [\hat{\vx}_{n-w:n-1}, \vx^{\rm f}_n]$, at  flow times $\vtau = \bm 1$. This follows from the fact that, with this initialization, all states within the window "live in the ambient state space", corresponding to $\tau = 1$.

Next, analogously to DAISI, the state $\vz_{\bm 1}$ is \textbf{inverted} 
to a latent representation.
However, for the \emph{windowed inversion}, each slot $k$ within the window is assigned its own {\em turning point} $\taumin^k \in [0,1]$, collected in $\vtaumin=(\taumin^0,\dots,\taumin^w)$. The unconditional SDE in \cref{eq:forward-sde} is integrated backwards from $\vtau = \bm 1$ to $\vtaumin$. The resulting latents $\vz_{\vtaumin}$ encode the information in the current sample in an intermediate latent space. 
Finally, we perform \textbf{guided} sampling given $\vy_{\mathcal W}$ by integrating the guided SDE in \cref{eq:forward-guided-sde} from $\vtaumin$ back to $\vtau = \bm 1$. This produces an assimilated sample $\hat{\vx}_{\mathcal W}$. 

This formulation combines the inversion strategy with windowed sampling. Importantly, by choosing a different turning point for each state, we have individual control over how far each state is inverted. Setting $\taumin^k = 1$ freezes slot $k$, while smaller values invert the slot further, allowing for more change.

The same procedure does not only produce filtering estimates, but also fixed-lag smoothing estimates. 
Because the window spans $w+1$ physical times, 
slot $w$ of the window holds the current state and gives an approximate filter $\vx_n \sim \pi_n$. Slot $0$, on the other hand, holds the state $w$ steps back and gives a fixed-lag smoother $\vx_{n-w} \sim \pi^w_{n-w}$, which has been constrained by the $w$ observations that arrived since. We refer to these as {\em DAWIS Filter} and {\em DAWIS Lagged Smoother}. The remaining slots slide into cycle $n+1$, so each state remains in the window for $w+1$ cycles and is revised by every observation that enters behind it.

\begin{algorithm}[t]
\caption{DAWIS -- one assimilation cycle}
\label{alg:dawis}
\begin{algorithmic}[1]
\Require current states $\vx_{\mathcal W}$, observations $\vy_{\mathcal W}$, start times $\vtau_0$, turning points $\vtaumin$
\State \textbf{Initialization:} Set $\vz_{\vtau_0} \gets \alpha_{\vtau_0} \odot \vz_{\bm 0} + \beta_{\vtau_0} \odot \vx_{\mathcal W}$ for $\vz_{\bm 0} \sim \rho_{\bm 0}$ %
\State \textbf{Inversion:} $\vz_{\vtaumin} \gets \textsc{Integrate}\big(\vz_{\vtau_0};\ \vtau_0 \to \vtaumin\big)$ \Comment{unconditional}
\State \textbf{Guidance:} $\vz_{\bm 1} \gets \textsc{Integrate}\big(\vz_{\vtaumin};\ \vtaumin \to \bm 1,\, \vy_{\mathcal W}\big)$
       \Comment{conditional on $\vy_{\mathcal W}$}
\State \Return $\hat{\vx}_{\mathcal W} \gets \vz_{\bm 1}$
\end{algorithmic}
\end{algorithm}

\begin{table}[tbh]
\centering
\small
\setlength{\tabcolsep}{4pt}
\caption{
Entries marked $\ast$ are irrelevant since $\beta_0 = 0$. $\vtaumin = \vtau_0$ means the inversion step is empty.
}
\label{tab:modes}
\begin{tabular}{lllll}
\toprule
Method & $\vx_{\mathcal W}$ & $\vtau_0$ & $\vtaumin$ & $\vy_{\mathcal W}$ \\
\midrule
DAWIS Filter / Lagged Sm. & $[\hat{\vx}_{n-w:n-1}, \vx^{\rm f}_n]$ & $(1, \dots, 1)$            & $\vtaumin$        & $\vy_{n-w:n}$ \\
DAWIS-Joint               & $[\hat{\vx}_{n-w:n-1}, \ast]$        & $(1, \dots, 1, 0)$         & $\vtaumin$        & $\vy_{n-w:n}$ \\
DAWIS Block Smoother      & $[\vx_{n:n+w}]$                        & $(1, \taumin, \dots, \taumin, 1)$ & $\vtau_0$ & $\vy_{\mathcal B}$ \\
Block Gibbs Smoother      & $[\vx_{n:n+w}]$                        & $(1, 0, \dots, 0, 1)$      & $\vtau_0$   & $\vy_{\mathcal B}$ \\
Joint AR $w|1$            & $[\hat{\vx}_{n-w}, \ast, \dots, \ast]$ & $(1, 0, \dots, 0)$       & $\vtau_0$   & $\vy_{n-w+1:n}$ \\
Joint AR $1|w$            & $[\hat{\vx}_{n-w:n-1}, \ast]$& $(1, \dots, 1,0)$       & $\vtau_0$   & $\vy_{n}$ \\
\bottomrule
\end{tabular}
\end{table}

\subsection{Guided Forecast models}
\label{sec:guided-forecast}

The procedure above assumes access to a forecast $\vx^{\rm f}_n$ from an external forecasting model $\mathcal{F}$, which in many applications may not be available. 
Since the multitask interpolant can sample from the conditional $p(\vx_{n} \mid \vx_{n-1})$, we could use this as a drop-in replacement for $\mathcal{F}$. However, this would require a separate sampling step, increasing the cost of a cycle.
Instead, inspired by guided forecast models, we propose to initialize the final slot $w$ from noise at flow time $\tau^w=0$, skipping the forecast stage entirely. During the inversion stage, this slot is now run \emph{forward} to $\taumin^w$, yielding a $\vz_{\vtaumin}$ similarly to before. The guidance step remains the same, generating the posterior directly without an extra step. We name this variant {\em DAWIS-Joint} and visualize it in \cref{fig:dawis}.

More generally, the SDE \cref{eq:forward-sde} can be initialized from any $\rho_{\vtau_0}$, allowing us to generate one or more states from pure noise. In particular, the windowed guided forecast model $p(\vx_{n-w+1:n} \mid \vy_{n-w+1:n}, \vx_{n-w})$ is itself an instance of DAWIS, obtained by freezing slot 0 at $\vx_{n-w}$, initializing $\vx_{n-w+1:n}$ from noise, and skipping the inversion stage. As for the DAWIS Lagged Smoother, slot 1 has been constrained by $\vy_{n-w+1:n}$, giving a fixed-lag smoother with lag $w-1$, $\vx_{n-w+1} \sim \pi^{w-1}_{n-w+1}$.

\subsection{Block Smoothing}\label{sec:block_smoothing}
The DAWIS Lagged Smoother performs sequential assimilation by sliding a window forward in time. Although it can utilize future observations, the fixed lag limits the possibility to propagate information from observations outside the window. %

Rather than retraining a prior over increasingly long windows, which can become infeasible for 
high-dimensional systems and requires a separate prior for each trajectory length, we introduce a block smoother that iteratively refines the trajectory while reusing the same prior.

For a window $\vx_{n:n+w}$ with interior $\mathcal B = \{n+1, \dots, n+w-1\}$ and boundary $\partial\mathcal B = \{n, n+w\}$, the Markov structure of \cref{eq:ssm} implies that
\begin{align}\label{eq:block-posterior}
  \pi_{\mathcal B}(\vx_{\mathcal B})
  := p(\vx_{\mathcal B} \mid \vx_{0:N \setminus \mathcal B}, \vy_{1:N})
  = p(\vx_{\mathcal B} \mid \vx_{\partial\mathcal B}, \vy_{\mathcal B})
  \propto p(\vx_{\mathcal B} \mid \vx_{\partial\mathcal B})\, p(\vy_{\mathcal B} \mid \vx_{\mathcal B}).
\end{align}
Repeatedly drawing from \cref{eq:block-posterior} for different blocks is a blocked Gibbs sampler for the joint smoothing distribution \cref{eq:joint-posterior} \citep{carter1994gibbs, singh2017blocking}. 
Each draw is available from the multitask interpolant by holding the boundary slots at $\vtau^{\partial\mathcal{B}}=\bm{1}$ and generating the interior from pure noise under guidance from $\vy_{\mathcal B}$. 
To ensure every state is updated, we use overlapping blocks, alternating forward and backward sweeps. Given exact guidance, this \emph{Block Gibbs Smoother} converges to \cref{eq:joint-posterior} from any initial trajectory \citep{carter1994gibbs}.

In practice, the guidance method will incur a bias, limiting the ability to converge to the true posterior despite many iterations. 
Instead, we use the block smoother to refine an already assimilated trajectory, using the alternating sweeps to propagate information from observations further away. However, since the interior is resampled from pure noise, much of the information is thrown away at every iteration, limiting its effectiveness. 

To remedy this, we propose the {\em DAWIS Block Smoother}, an extension of the DAWIS update to the block smoothing setting. Instead of discarding the previous sample, we initialize the interior $\vz^{\mathcal B}_{\vtaumin}$ from the interpolant 
\begin{align}\label{eq:block-noising}
  \vz^{\mathcal B}_{\vtaumin} = \alpha_{\taumin}\,\vz_{\bm 0} + \beta_{\taumin}\,\vx_{\mathcal B}, \qquad \vz_{\bm 0} \sim \rho_{\bm 0}.
\end{align}
preserving part of the information from the previous iteration. We then proceed the same way as before, running the guidance step conditioned on observations $\vy_{\mathcal{B}}$. 

For this to be a valid replacement of the Gibbs draw, it must leave the block posterior invariant under $\pi_{\mathcal{B}}$. 
To see that our update satisfies this, we let $\rho_\tau^\vy$ be the law of the interpolant \cref{eq:mt-interpolant} when the endpoints $\vz^{\partial\mathcal{B}}_{\bm 1} = \vx_{\partial\mathcal{B}}$ and the interior $\vz^\mathcal{B}_{\bm 1}\sim \pi_\mathcal{B}$. 
Under exact guidance, we know from \cref{sec:cond-gen} that the guided SDE \cref{eq:forward-guided-sde} shares the same marginals as this interpolant. Thus, it transports $\rho_\tau^\vy$ to $\rho_{\bm 1}^\vy=\pi_\mathcal{B}$, implying the invariance. For a formal proof of this, see \cref{prop:invariance}. We also note that similar observations have been made by \cite{hill2026exact, kang2026sampling}.

Together with the overlapping blocks above, the DAWIS Block Smoother is a valid MCMC scheme with \cref{eq:joint-posterior} as its stationary distribution under perfect guidance 
(see \cref{cor:sweep}). 
At $\taumin=0$, the noising discards the interior, recovering the Block Gibbs Smoother. Larger values of $\taumin$ act as a smaller step size, keeping the interior closer to its current value. This slows mixing, but shrinks the integration interval to $1-\taumin$, making each cycle cheaper and exposing it to less guidance error. In the experiments, we initialize from the DAWIS Lagged Smoother and run a few sweeps with $\taumin$ close to one, refining an already assimilated trajectory rather than sampling it from scratch.

\section{Related Works}
\paragraph{Guiding forecast models.}
Many ML-based filters guide a learned forecast model towards the incoming observations, sampling from $p(\vx_n \mid \vy_n, \vx_{n-1})$ \citep{chen2025flowdas, savary2025training} or from a longer window $p(\vx_{n-w+1:n} \mid \vy_{n-w+1:n}, \vx_{n-w})$ \citep{shysheya2024on}. Following \citet{shysheya2024on} we refer to these as {\em Joint AR $1|w$} and {\em Joint AR $w|1$}; both are instances of \cref{alg:dawis} with an empty inversion stage (\cref{tab:modes}). A softer variant treats $\vx_{n-w}$ as a trailing pseudo-observation rather than a hard anchor \citep{shysheya2024on}, used as the {\em SDA-Filter} baseline in \citet{andrae2026daisi}.

\paragraph{Decoupled forecast-analysis steps.}
Classical methods such as the Kalman filter and cycled 3DVar target filtering by alternating forecast and analysis steps \citep{kalnay2002da}. 
DAISI \citep{andrae2026daisi} adapts this to ML-based filters by coupling a forecast model with a stationary generative prior. DAWIS generalizes this inversion step to windows (\cref{sec:filter-smoother}).
Like the classical schemes, these assimilate only the latest observation, assuming past observations have already entered the estimate.
4DVar solves this by coupling several forecasting steps in a row, optimizing the initial state based on a window of observations \citep{kalnay2002da}. While extensions such as 
the Ensemble of Data Assimilations (EDA) with 4D-Var \citep{bonavita2012en4dvar}, referred to here as En4DVar, propagate an ensemble, the method still relies on a MAP estimate and requires inflation and perturbation to maintain ensemble spread.

\paragraph{All-at-once smoothing.}
Smoothing instead targets the full trajectory posterior $p(\vx_{0:N} | \vy_{1:N})$. Learning a flow-based prior over an entire trajectory is intractable, since the state dimension grows with $N$. Score-based Data Assimilation ({\em SDA}) \citep{rozet2023score} circumvents this by learning a prior only over local windows and coupling them at inference time, enabling parallel all-at-once sampling of trajectories of arbitrary length. The limited field of view comes at a cost: information propagates between distant states only through repeated local updates, so many solver steps and additional correction steps are needed before trajectories become coherent. With multimodal observations this can fail outright, as two ends of a trajectory may settle on incompatible modes. We compare SDA to DAWIS in the experiments.

\paragraph{Unified frameworks.}
Closest to our work are methods that assign an independent noise level to each state in a window, using a single model to cover several tasks. The idea originates in sequence generation, where rolling schedules noise later frames more heavily than earlier ones and the window slides forward during sampling \citep{pmlr-v235-ruhe24a, cachay2025elucidated, chen2024diffusion}. ForcingDAS \citep{jia2026forcingdasunifiedrobustdata} brings this construction to DA, learning a joint trajectory prior and selecting filtering, fixed-lag or full-sequence smoothing through the inference schedule. 
Its filter, however, is a conditional forecast that does not use the window to revise past states, and its full-sequence smoother requires a prior over the entire trajectory. 
Its fixed-lag smoother, {\em ForcingDAS-Pyr}, is closest to DAWIS, and we compare to a windowed version of it in our experiments. 
Like Joint AR $w|1$, it samples from $p(\vx_{n-w+1:n} \mid \vy_{n-w+1:n}, \vx_{n-w})$, conditioning on a fixed $\vx_{n-w}$.
But, instead of running each slot all the way to a clean state, it keeps the window at rolling noise levels and denoises each slot one level per cycle.
DAWIS shares the aim of one model for all tasks, but through inversion it revises all states in the window, including the nachor, and scales to full-trajectory smoothing without a costly full-sequence prior.

\section{Experiments}

We evaluate DAWIS on the SQG system and the \texttt{SEVIR} dataset with multiple observation operators for both filtering and smoothing. We present the CRPS for all experiments in \cref{tab:crps} and additional results in \cref{apx:additional_results}. We evaluate 10 trajectories with 20 ensemble members for each experiment and report both the mean and standard deviation. Our proposed method DAWIS consistently performs well across the whole range of experiments with different data and observation operators. Additional details about the experiments and the hyperparameters used is presented in \cref{apx:experiment_details}.
\begin{table}[tb]
\centering
\small
\caption{The CRPS for experiments on SQG and \texttt{SEVIR}.
We display the mean and standard deviation across 10 independent trajectories, averaged over the last 20 (10 for \texttt{SEVIR}) steps.
The best score for each experiment is highlighted in \textbf{bold} and the second best with an \underline{underline}.
Smoothing methods calculate the metrics over the whole trajectory.}
\label{tab:crps}
\setlength{\tabcolsep}{4pt}
\begin{tabular}{l c c c c c}
\toprule
\textbf{Method}
& \textbf{Noisy}
& \textbf{Sparse}
& \textbf{Multimodal}
& \textbf{Saturating}
& \textbf{SEVIR} \\

\midrule
\multicolumn{6}{l}{\textit{Filtering}} \\
\midrule

EnSF \citep{bao2024ensemble}
& $4.95{\scriptstyle \pm0.83}$
& $4.11{\scriptstyle \pm0.61}$
& $1.79{\scriptstyle \pm1.38}$
& $5.79{\scriptstyle \pm1.72}$
& $21.92{\scriptstyle \pm20.35}$ \\

FlowDAS \citep{chen2025flowdas}
& $2.73{\scriptstyle \pm0.50}$
& $3.22{\scriptstyle \pm0.56}$
& $2.50{\scriptstyle \pm0.48}$
& $3.66{\scriptstyle \pm0.41}$
& $15.66{\scriptstyle \pm7.17}$ \\

LETKF \citep{liang2025ensemble}
& $1.32{\scriptstyle \pm0.14}$
& $2.22{\scriptstyle \pm0.25}$
& $4.86{\scriptstyle \pm1.11}$
& $2.28{\scriptstyle \pm1.06}$
& $4.67{\scriptstyle \pm3.17}$ \\

DAISI \citep{andrae2026daisi}
& $1.30{\scriptstyle \pm0.14}$
& $1.69{\scriptstyle \pm0.20}$
& $1.55{\scriptstyle \pm0.09}$
& $1.79{\scriptstyle \pm0.45}$
& $3.97{\scriptstyle \pm2.65}$ \\

\addlinespace[0.2em]
\hdashline
\addlinespace[0.2em]

SDA-Filter \citep{andrae2026daisi}
& $1.32{\scriptstyle \pm0.13}$
& $1.75{\scriptstyle \pm0.25}$
& $2.03{\scriptstyle \pm0.32}$
& $4.26{\scriptstyle \pm0.70}$
& $\underline{3.79}{\scriptstyle \pm2.25}$ \\

Joint AR 1$|$W \citep{shysheya2024on}
& $3.83{\scriptstyle \pm0.41}$
& $4.01{\scriptstyle \pm0.54}$
& $3.97{\scriptstyle \pm0.41}$
& $3.77{\scriptstyle \pm0.44}$
& $6.42{\scriptstyle \pm3.86}$ \\

Joint AR W$|$1 \citep{shysheya2024on}
& $1.27{\scriptstyle \pm0.14}$
& $1.62{\scriptstyle \pm0.15}$
& $\underline{1.21}{\scriptstyle \pm0.10}$
& $1.20{\scriptstyle \pm0.16}$
& $4.28{\scriptstyle \pm2.73}$ \\

DAWIS Filter
& $\mathbf{1.03}{\scriptstyle \pm0.10}$
& $\mathbf{1.22}{\scriptstyle \pm0.14}$
& $\mathbf{1.04}{\scriptstyle \pm0.08}$
& $\mathbf{0.86}{\scriptstyle \pm0.13}$
& $3.86{\scriptstyle \pm2.12}$ \\

DAWIS-Joint Filter
& $\underline{1.18}{\scriptstyle \pm0.13}$
& $\underline{1.26}{\scriptstyle \pm0.16}$
& $\underline{1.21}{\scriptstyle \pm0.13}$
& $\underline{0.99}{\scriptstyle \pm0.16}$
& $\mathbf{3.58}{\scriptstyle \pm2.30}$ \\

\midrule
\multicolumn{6}{l}{\textit{Fixed-lag Smoothing}} \\
\midrule

SDA-Filter \citep{andrae2026daisi}
& $1.37{\scriptstyle \pm0.08}$
& $1.54{\scriptstyle \pm0.15}$
& $1.75{\scriptstyle \pm0.24}$
& $3.32{\scriptstyle \pm0.56}$
& $5.08{\scriptstyle \pm2.83}$ \\

ForcingDAS-Pyr \citep{jia2026forcingdasunifiedrobustdata}
& $1.36{\scriptstyle \pm0.08}$
& $1.28{\scriptstyle \pm0.11}$
& $\underline{1.05}{\scriptstyle \pm0.09}$
& $1.09{\scriptstyle \pm0.13}$
& $3.60{\scriptstyle \pm2.53}$ \\

Joint AR W$|$1 \citep{shysheya2024on}
& $1.13{\scriptstyle \pm0.06}$
& $1.29{\scriptstyle \pm0.06}$
& $1.06{\scriptstyle \pm0.05}$
& $1.05{\scriptstyle \pm0.10}$
& $3.59{\scriptstyle \pm2.61}$ \\

DAWIS Lagged Smoother
& $\mathbf{0.93}{\scriptstyle \pm0.05}$
& $\mathbf{1.05}{\scriptstyle \pm0.06}$
& $\mathbf{0.94}{\scriptstyle \pm0.04}$
& $\mathbf{0.76}{\scriptstyle \pm0.06}$
& $\underline{3.36}{\scriptstyle \pm2.03}$ \\

DAWIS-Joint Lagged Smoother
& $\underline{1.07}{\scriptstyle \pm0.06}$
& $\underline{1.08}{\scriptstyle \pm0.06}$
& $1.09{\scriptstyle \pm0.06}$
& $\underline{0.83}{\scriptstyle \pm0.07}$
& $\mathbf{3.34}{\scriptstyle \pm2.07}$ \\

\midrule
\multicolumn{6}{l}{\textit{Joint Smoothing}} \\
\midrule

SDA \citep{rozet2023score}
& $1.09{\scriptstyle \pm0.07}$
& $1.05{\scriptstyle \pm0.06}$
& $2.74{\scriptstyle \pm0.35}$
& $1.15{\scriptstyle \pm0.07}$
& $3.78{\scriptstyle \pm2.57}$ \\

Block Gibbs Smoother
& $\underline{0.92}{\scriptstyle \pm0.06}$
& $\underline{0.99}{\scriptstyle \pm0.07}$
& $\underline{0.90}{\scriptstyle \pm0.05}$
& $\underline{0.86}{\scriptstyle \pm0.09}$
& $\underline{3.58}{\scriptstyle \pm2.35}$ \\

DAWIS Block Smoother
& $\mathbf{0.90}{\scriptstyle \pm0.05}$
& $\mathbf{0.98}{\scriptstyle \pm0.06}$
& $\mathbf{0.89}{\scriptstyle \pm0.05}$
& $\mathbf{0.68}{\scriptstyle \pm0.06}$
& $\mathbf{3.18}{\scriptstyle \pm2.17}$ \\

\bottomrule
\end{tabular}
\end{table}

\paragraph{Surface Quasi-Geostrophic (SQG) Dynamics.}
To evaluate DAWIS, we consider a Surface Quasi-Geostrophic (SQG) system, which provides a compact yet challenging setting for studying data assimilation in turbulent flows. The model evolves a scalar field $\theta$ through nonlinear advection together with forcing and dissipative processes, resulting in complex multiscale dynamics and strong sensitivity to the initial state \citep{tulloch2009quasigeostrophic}. We use the SQG configuration from \citet{andrae2026daisi} to facilitate direct and fair comparison with previous results, discretizing the state on a $64\times64$ spatial grid and considering trajectories of 100 steps with a 3-hour interval between consecutive states. The observation scenarios, including the observation operators, noise levels, and spatial sparsity, are varied as specified in \cref{tab:experiments}. 
We use the numerical forward model for DAWIS Filter/Lagged Smoother and all other methods that require an external forward model.
For further details of the underlying model and simulation setup we refer to \citet{andrae2026daisi}.

\paragraph{Precipitation Nowcasting using \texttt{SEVIR}.}
We also evaluate DAWIS on the \texttt{SEVIR} dataset \citep{veillette2020sevir}, which consists of vertically integrated liquid radar observations of convective storms over the United States. Following the experimental setup of \citet{chen2025flowdas}, we use $384\times384$ spatial fields at \SI{3}{\kilo\meter} resolution with observations every \SI{10}{\minute}. We adopt their linear Gaussian observation setting with \SI{10}{\percent} spatial coverage and use the corresponding pretrained forecasting model for experiments that require an external forecast. Further experimental details are given by \citet{chen2025flowdas}. We use the FlowDAS \citep{chen2025flowdas} forward model for all methods that require an external forward model.

\paragraph{Filtering.}
DAWIS Filter has the lowest CRPS and RMSE of all filters in the four SQG settings, and DAWIS-Joint Filter is second in most of them (\cref{tab:crps,tab:rmse,fig:scorecard-crps,fig:scorecard-rmse,fig:crps-time-filtering}). On \texttt{SEVIR}, DAWIS-Joint Filter is the best filter and also improves on DAWIS Filter, which relies on the pretrained FlowDAS forecast model, showing that the forecast generated by the window prior is competitive with that of a dedicated forecast model. While the DAWIS models are robust across all experiments, several baselines break down in individual settings. LETKF struggles on \texttt{Multimodal}, where a Gaussian update cannot represent the bimodal posterior (\cref{fig:fields-filtering-multimodal}); SDA-Filter fails on \texttt{Saturating} (\cref{fig:fields-filtering-saturating}); EnSF and FlowDAS do not reproduce the high-frequency content of the fields (\cref{fig:spectra-filtering}).

\paragraph{Fixed-lag Smoothing.}
Both DAWIS/DAWIS-Joint Lagged Smoothers improve on the corresponding filters in every setting (\cref{tab:crps,tab:rmse,fig:crps-time-smoothing,fig:rmse-time-smoothing}). Since the two estimates come from the same assimilation cycle, the smoother requires no additional computation. DAWIS Lagged Smoother is the best fixed-lag smoother on all four SQG settings, and on \texttt{SEVIR} DAWIS-Joint Lagged Smoother has the lowest CRPS with DAWIS Lagged Smoother close behind.

\paragraph{Joint Smoothing.}
The block smoothers are initialized from the DAWIS Lagged Smoother for SQG and ther DAWIS-Joint Lagged Smoother for \texttt{SEVIR} and refine the trajectory with alternating forward and backward sweeps. The (approximative guidance-based) Block Gibbs Smoother, which regenerates each block from pure noise, gives small gains on \texttt{Noisy}, \texttt{Sparse}, and \texttt{Multimodal} but makes the initialization worse on \texttt{Saturating} and \texttt{SEVIR} (\cref{tab:crps,tab:rmse,fig:scorecard-crps,fig:scorecard-rmse}). We attribute this to the guidance error accumulating over the full regeneration path, whereas a turning point close to one keeps each update small. The DAWIS Block Smoother improves on the initialization in all settings, including \texttt{Saturating} and \texttt{SEVIR}, and has the lowest CRPS and RMSE of all methods on all experiments.

\begin{figure}[tb]
    \centering
    \begin{subfigure}[t]{\textwidth}
        \centering
        \includegraphics[width=\textwidth]{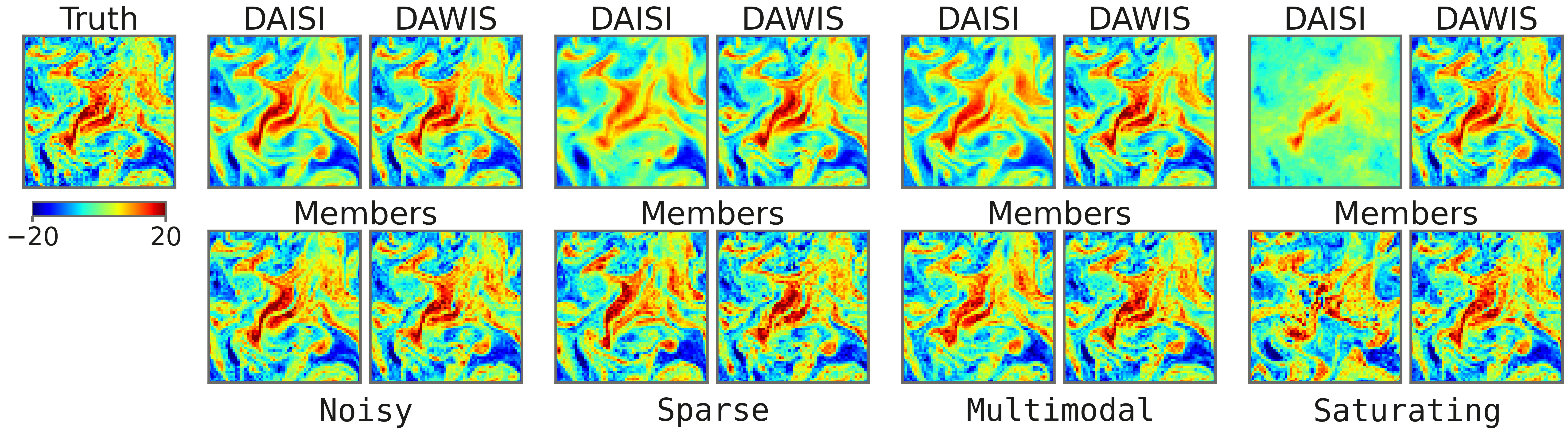}
        \caption{\textbf{Filtering:} DAISI vs DAWIS Filter }%
        \label{fig:states-filtering}
    \end{subfigure}
    \\[1em]
    \begin{subfigure}[t]{\textwidth}
        \centering
        \includegraphics[width=\textwidth]{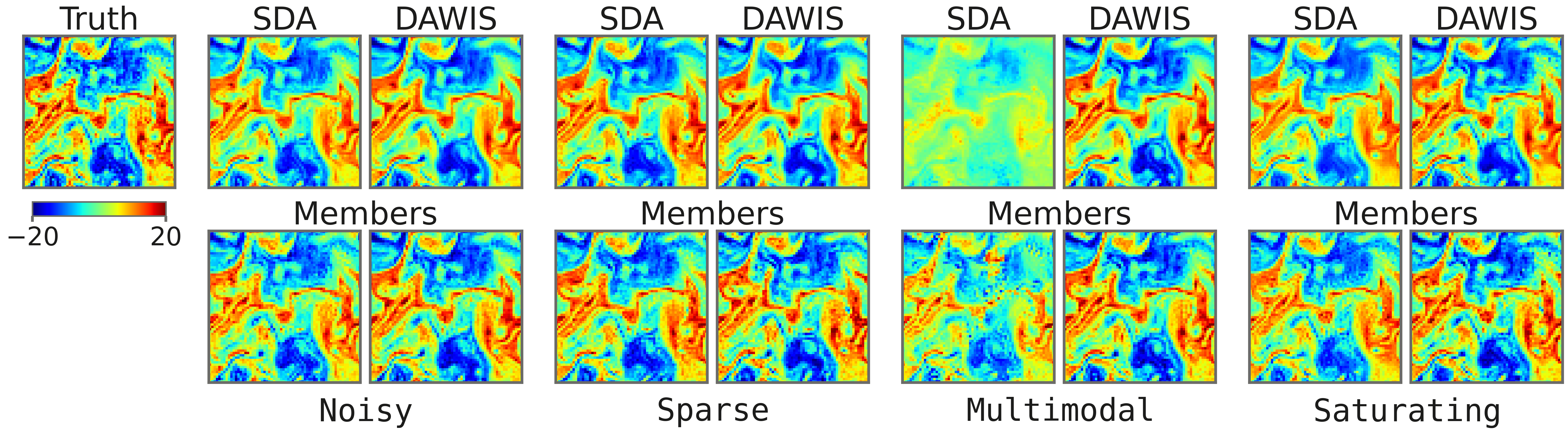}
        \caption{\textbf{Smoothing:} SDA vs DAWIS Block Smoother}%
        \label{fig:states-smoothing}
    \end{subfigure}
    \caption{
    Ensemble mean and a single member for all SQG experiments. Here visualized for the endpoint for filtering and the midpoint for smoothing.
    DAWIS exhibits robust performance for all settings, including the challenging multimodal and saturating observations where DAISI and SDA struggle in comparison.
    }
    \label{fig:states}
\end{figure}

\section{Conclusion}
We presented DAWIS, which treats data assimilation as inversion and guided regeneration of a window of states under a multitask trajectory prior. With a separate flow time for each state, the same trained model can perform filtering, fixed-lag smoothing, or block smoothing, with the task determined solely by the turning-point vector rather than retraining. The forecast can also be drawn from the prior itself, which removes the dependence on an external forecast model at a small cost in accuracy. Across multiple observation operators on SQG and on \texttt{SEVIR}, DAWIS demonstrates competitive skill and calibration for filtering, fixed lag smoothing, and block smoothing.

\paragraph{Limitations \& Future Work.}
Our evaluation covers a $64\times64$ SQG system and the \texttt{SEVIR} dataset with synthetic observations, a simplified setting compared to the state dimension of operational systems and without the heterogeneous, imperfectly known operators of a real observing network. Scaling DAWIS to higher-dimensional states and real observations is the natural next step.
Since the framework assumes nothing about the underlying dynamics, we also expect it to transfer to other domains where windowed generative priors are natural, such as computational fluid dynamics and video.
A second direction is sampling cost: each cycle inverts and regenerates a window of $W$ states using 100 solver steps, which could potentially be reduced using distillation approaches such as consistency models or flow maps. 
Finally, DAWIS inherits the limitations of whichever approximation of $p(\vy|\vz_{\tau_t})$ it is paired with. In our experiments, we use MMPS \citep{Rozet_Andry_Lanusse_Louppe_2024}, but more accurate guidance methods should transfer directly to DAWIS and improve the performance.

\subsection*{AI use statement}
In this work, we used generative AI tools to assist with translation. 
We have additionally used AI to formulate mathematical claims, provide critical ingredients for proving mathematical claims and assist in the writing of proofs.
We have not used generative AI tools to generate synthetic data sets, help develop theoretical models or conceptual frameworks, propose or refine hypotheses, design or provide feedback on research methodology or experiments, implement methods, clean and reformat dataset, support qualitative and thematic data analysis, interpret results. 
Additionally, we used generative AI tools to edit the research paper to improve readability and to assist software development. We have reviewed all AI-assisted work. LLM-generated code and text suggestions were verified and tested for correctness by the authors. We take responsibility for the final content of this work, including text, claims or artifacts produced with the aid of generative AI.

\subsection*{Ethics statement}
We have no ethical concerns or conflicts of interest to declare.

\subsection*{Reproducibility statement}
We have taken several steps to make our results reproducible. The DAWIS method is fully specified in \cref{alg:dawis}, with the design choices that recover each filter and smoother given in \cref{tab:modes} and described in \cref{sec:filter-smoother,sec:guided-forecast,sec:block_smoothing}. The notation is collected in \cref{app:notation}, and the multitask interpolant and the MMPS guidance, including the relation between drift, score and the learned conditional expectations, are derived in \cref{app:interpolant,app:guidance}. The theoretical results in \cref{app:theorems} state their assumptions (A1)--(A3) explicitly and come with complete proofs. \Cref{apx:experiment_details} gives the details needed to rerun our experiments: the DAWIS hyperparameters and how they were tuned (\cref{tab:experiment_hyperparams,tab:hyperparams_block}), the backbone architecture and optimizer settings (\cref{tab:optimizer}), the configuration and hyperparameters of every baseline (\cref{tab:LETKF_hyperparams,tab:hyperparams_flowdas,tab:hyperparams_ensf,tab:hyperparams_sda_filter,tab:hyperparams_daisi,tab:hyperparams_jointar,tab:hyperparams_sda}), and the data generation, train/validation/test splits, normalization and observation operators for SQG and \texttt{SEVIR} (\cref{tab:experiments,fig:obs}). All results are reported as the mean and standard deviation over 10 independent trajectories with 20 ensemble members (\cref{tab:crps,tab:rmse,tab:ssr}). The source code for training, assimilation and evaluation, together with the configuration files for all experiments, is available at \href{https://github.com/Erik-Wikingsson/DAWIS}{https://github.com/Erik-Wikingsson/DAWIS}.

\subsubsection*{Acknowledgments}
This research is financially supported by the Swedish Research Council (grant no: 2024-05011)
the Wallenberg AI, Autonomous Systems and Software Program (WASP) funded by the Knut and Alice Wallenberg Foundation,
and
the Excellence Center at Linköping--Lund in Information Technology (ELLIIT).
Our computations were enabled by the Berzelius resource at the National Supercomputer Centre, provided by the Knut and Alice Wallenberg Foundation.
Landelius was financially supported by the Swedish Foundation for Strategic Research.

\bibliography{iclr2027_conference}
\bibliographystyle{iclr2027_conference}

\newpage
\appendix
\section{Notation}\label{app:notation}
For reference, we collect the notation used throughout the paper. We keep physical (assimilation) time, indexed by $n$, separate from the flow time $\tau$ of the generative model. Subscripts index physical time and superscripts index positions within a window.

\paragraph{Indices and physical time.}
\begin{itemize}
  \item $n$: physical (assimilation) time index; $N$: final time index.
  \item $w$: window length, so a window holds $w+1$ states; $k = 0, \dots, w$: slot (position) within the window.
  \item $\vx_n \in \R^{d}$: physical state at time $n$; $d$ its dimension.
  \item   $\vx^{\rm f}_n$: forecast of $\vx_n$ from an external forecasting model.
  \item $\hat{\vx}_n \in \R^{d}$: current sample (analysis) of $\vx_n$ produced by an earlier cycle
  \item $\vx_{n:n+w} := (\vx_n, \dots, \vx_{n+w})$: state trajectory over a window; $D = (w+1)d$ its dimension.
  \item $\mathcal{W} = \{n, \dots, n+w\}$: the set of time indices in a window; $\vx_{\mathcal W}$, $\vy_{\mathcal W}$, $\hat{\vx}_{\mathcal W}$: the states, observations and updated estimate over it.
  \item $\vy_n$: observation at time $n$, with likelihood $p(\vy_n \mid \vx_n)$.
  \item $\mathcal{B}$: the interior of a block; $\partial\mathcal{B} \subseteq \{n, n+w\}$: its boundary, the window endpoints held fixed; $\vy_{\mathcal B}$: observations of the interior states.
  \item $J$: ensemble size; $j = 1, \dots, J$: ensemble member index.
\end{itemize}

\paragraph{Flow time and the multitask schedule.}
\begin{itemize}
  \item $\vtau \in [0,1]^{D}$: flow-time vector, one entry per component; $\tau = 0$ is noise, $\tau = 1$ is data. In DAWIS, $\vtau$ is constant within each state and we write $\vtau = (\tau^0, \dots, \tau^w)$ for the $w+1$ free flow times.
  \item $t \in [0,1]$: path parameter, always running from $0$ to $1$; not itself a flow time.
  \item $\vtau_t \in C^2([0,1]; [0,1]^D)$: the path along which the flow times evolve during sampling; $\dot{\vtau}_t$ its time derivative.
  \item $\vtaumin = (\taumin^0, \dots, \taumin^w)$: turning points, the flow times to which each slot is inverted before regeneration; $\taumin$: the common turning point of the interior slots in the block smoother.
  \item $\alpha_{\vtau}, \beta_{\vtau}$: interpolant coefficients, evaluated elementwise; $\alpha', \beta'$: their derivatives with respect to flow time.
  \item $\vz_{\bm 0} \sim \rho_{\bm 0}$, $\vz_{\bm 1} \sim \rho_{\bm 1}$: latent (noise) and data endpoints of the interpolant.
  \item $\vz_{\vtau}$: multitask interpolant \cref{eq:mt-interpolant}; $\vz_{\vtau_t}$ its value along a path. Superscripts select slots, e.g.\ $\vz^{k}_{\vtau}$, $\vz^{\mathcal B}_{\vtau}$, $\vz^{\partial\mathcal B}_{\bm 1}$.
  \item $\rho_{\vtau}$, $p_{\vtau}$: law and density of $\vz_{\vtau}$; $\rho^{\vy}_{\vtau}$: the same for the posterior interpolant, with $\rho^{\vy}_{\bm 1}$ the posterior target under guidance.
  \item $\vepsilon_{\vtau_t} \ge \bm 0$: diffusion coefficient of the generative SDE; $W_t$: Brownian motion.
  \item $\odot$: elementwise multiplication.
\end{itemize}

\paragraph{Distributions.}
\begin{itemize}
  \item $\pi_n := p(\vx_n \mid \vy_{1:n})$: filtering distribution.
  \item $\pi_{n}^{\ell} := p(\vx_n \mid \vy_{1:n+\ell})$: fixed-lag smoothing distribution with $\ell$ steps of look-ahead.
  \item $p(\vx_{0:N} \mid \vy_{1:N})$: joint smoothing distribution.
  \item $\pi_{\mathcal B} := p(\vx_{\mathcal{B}} \mid \vx_{\partial\mathcal{B}}, \vy_{\mathcal{B}})$: block posterior.
\end{itemize}

\section{Details on Multitask Stochastic Interpolants}
This appendix covers the multitask stochastic interpolant framework that DAWIS is built on. \Cref{app:interpolant} shows that the drift and score along any flow-time path follow from the same two learned conditional expectations. This is why a single network can be used for  filtering, fixed-lag smoothing and block smoothing without retraining. \Cref{app:guidance} extends these quantities to posterior sampling and gives the MMPS approximation of the likelihood score, which we use in all DAWIS experiments.

\subsection{Multitask Interpolants}
\label{app:interpolant}

Recall the multitask interpolant
\begin{align}\label{eq:interpolant-appendix}
  \vz_{\vtau} = \alpha_{\vtau} \odot \vz_{\bm 0} + \beta_{\vtau} \odot \vz_{\bm 1},
  \qquad \vz_{\bm 0} \sim \rho_{\bm 0}, \; \vz_{\bm 1} \sim \rho_{\bm 1}, \qquad \vtau \in [0,1]^D,
\end{align}
and the conditional expectations
\begin{align}\label{eq:etas-appendix}
  \veta_{\bm 0}(\vtau, \vz) := \mathbb{E}[\vz_{\bm 0} \mid \vz_{\vtau} = \vz],
  \qquad
  \veta_{\bm 1}(\vtau, \vz) := \mathbb{E}[\vz_{\bm 1} \mid \vz_{\vtau} = \vz].
\end{align}
Both are functions of the point $\vtau$ alone and do not depend on the path taken to reach it. 
Let $\vtau_t \in C^2([0,1])^D$ be a path and let $\rho_{\bm 0} = \mathcal{N}(\bm 0, \mat I)$. Then the drift and score of the interpolant \cref{eq:interpolant-appendix} along $\vtau_t$ are given by
\begin{align}
  \vb_{\vtau}(t, \vz) &= \dot{\vtau}_t \odot \Big( \alpha'_{\vtau_t} \odot \veta_{\bm 0}(\vtau_t, \vz) + \beta'_{\vtau_t} \odot \veta_{\bm 1}(\vtau_t, \vz) \Big), \label{eq:drift-from-etas} \\
  \vs_{\vtau}(t, \vz) &= -\alpha_{\vtau_t}^{-1} \odot \veta_{\bm 0}(\vtau_t, \vz), \label{eq:score-from-etas}
\end{align}
where $\alpha', \beta'$ denote derivatives of the scalar curves with respect to their own argument and all operations are elementwise. 

Two consequences are worth emphasizing. First, \cref{eq:drift-from-etas} and \cref{eq:score-from-etas} depend on the path only through the factor $\dot{\vtau}_t$ and through the point $\vtau_t$ at which $\veta_{\bm 0}, \veta_{\bm 1}$ are evaluated. A single model therefore serves every path, and no retraining is required when the schedule changes. Second, the initial condition is the law of the interpolant at $\vtau_0$ rather than pure noise, so a partially noised sample may be used as the starting point; this is what makes the inversion step of \cref{alg:dawis} possible.

In practice we learn $\veta_{\bm 0}, \veta_{\bm 1}$ by minimizing
\begin{align}
\mathcal{L}_i(\theta) =
    \mathbb{E}_{\vz_{\bm 0} \sim \mathcal{N}(\bm 0, \mat I), \, \vz_{\bm 1} \sim \rho_{\bm 1}, \, \vtau \sim \mathcal{U}}
    \Big[ \big\| \veta_{i}^\theta(\vtau, \vz_{\vtau}) - \vz_{i} \big\|^2 \Big],
    \qquad i \in \{\bm 0, \bm 1\},
\end{align}
whose minimizers are exactly the conditional expectations \cref{eq:etas-appendix}. In DAWIS the components of $\vtau$ are constant within each state, so the training distribution $\mathcal{U}$ draws $w+1$ independent times $\tau^k \sim \mathcal{U}[0,1]$, one per position in the window, and assigns $\tau^k$ to every component of state $k$. We predict $\veta_{\bm 0}$ and $\veta_{\bm 1}$ as separate output channels of a single network; see Appendix~\ref{app:training} for details.

\subsection{Guidance}\label{app:guidance}
Replacing the data distribution $\rho_{\mathbf 1}$ by the posterior $\rho_{\mathbf 1}^{\vy}$ leaves the conditional structure of the interpolant unchanged, so the conditional score and drift differ from their unconditional counterparts only through the likelihood score (see \cite{andrae2026daisi} for a derivation):
\begin{align}\label{eq:guided-score-drift}
  \vs^{\vy}(\vtau_t, \vz) &= \vs(\vtau_t, \vz_{\vtau_t}) + \nabla_{\vz_{\vtau_t}} \log p(\vy \mid \vz_{\vtau_t}),
\\
  \vb^{\vy}(\vtau_t, \vz_{\vtau_t}) &= \vb(\vtau_t, \vz_{\vtau_t}) + \vlambda_{\vtau_t} \odot \nabla_{\vz_{\vtau_t}} \log p(\vy \mid \vz_{\vtau_t}),
\end{align}
where the drift is rescaled componentwise by
\begin{align}\label{eq:lambda}
  \vlambda_{\vtau_t} :=  \frac{\alpha_{\vtau_t} \odot \gamma_{\vtau_t}}{\beta_{\vtau_t}},
  \qquad
  \gamma_{\vtau} := \dot{\vtau}_t \odot\left(\beta'_{\vtau} \odot \alpha_{\vtau} - \alpha'_{\vtau} \odot \beta_{\vtau}\right).
\end{align}
Substituting \cref{eq:guided-score-drift} into \cref{eq:forward-sde} gives the guided SDE \cref{eq:forward-guided-sde} of the main text, whose flow transports $\rho_{\vtau_0}$ to the posterior $\rho_{\mathbf 1}^{\vy}$. 

Exact conditional sampling still requires the likelihood score $\nabla_{\vz}\log p(\vy \mid \vz_{\vtau_t} = \vz)$, which is for the most part analytically intractable and requires approximation. Below we present MMPS, one such approximation. In addition we multiply the likelihood score by a guidance strength $\zeta > 0$, tuned per problem setup.

Throughout, let $\mat A_t := \text{diag}(\alpha_{\vtau_t})$ and $\mat B_t := \text{diag}(\beta_{\vtau_t})$ denote the interpolant coefficients of the current schedule point, and recall from Appendix~\ref{app:interpolant} that the learned network provides $\veta_1(\vtau_t, \vz) \approx \mathbb E[\vz_{\mathbf 1} \mid \vz_{\vtau_t} = \vz]$. Since the prior is defined over a window, $\vy$ collects the observations $\vy_{n-w:n}$ of that window and $\mathcal{H}$ acts blockwise, one observation operator per state.

\subsubsection{Moment-Matching Posterior Sampling (MMPS)}
MMPS approximates the denoising posterior by a Gaussian,
\begin{align}
    p(\vz_{\mathbf 1} \mid \vz_{\vtau_t} = \vz) \approx \mathcal N\big(\vmu_t(\vz), \, \mat\Sigma_t(\vz)\big),
    \qquad \vmu_t(\vz) := \veta_1(\vtau_t, \vz),
\end{align}
whose covariance $\mat\Sigma_t(\vz) := \mathbb E[\vz_{\mathbf 1}\vz_{\mathbf 1}^\top \mid \vz] - \vmu_t \vmu_t^\top$ is available from the Jacobian of the mean \citep{Rozet_Andry_Lanusse_Louppe_2024}:
\begin{align}\label{eq:mmps-cov}
    \mat\Sigma_t(\vz) = \nabla_{\vz} \vmu_t(\vz) \, \mat A_t^2 \mat B_t^{-1}.
\end{align}
For a single shared flow time, $\mat A_t^2\mat B_t^{-1}$ reduces to the scalar $\alpha_t^2/\beta_t$ and \cref{eq:mmps-cov} recovers the usual expression; with per-state flow times the factor is a diagonal matrix that does not commute with $\nabla_{\vz}\vmu_t$, so the right-multiplication matters.

Given a Gaussian likelihood $p(\vy \mid \vz_{\mathbf 1}) = \mathcal N(\vy \mid \mathcal H(\vz_{\mathbf 1}), \sigma_{\vy}^2 \mat I)$, marginalizing the Gaussian approximation yields
\begin{align}
    \nabla_{\vz} \log p(\vy \mid \vz)
    &= \nabla_{\vz} \log \left( \int_{\R^{D}} p(\vy \mid \vz_{\mathbf 1}) \, p(\vz_{\mathbf 1} \mid \vz) \diff \vz_{\mathbf 1} \right) \\
    &\approx \nabla_{\vz} \log \left( \int_{\R^{D}} p(\vy \mid \vz_{\mathbf 1}) \, \mathcal N\big(\vz_{\mathbf 1} \mid \vmu_t(\vz), \mat\Sigma_t(\vz)\big) \diff \vz_{\mathbf 1} \right) \\
    &\approx \big(\nabla_{\vz} \vmu_t\big)^{\!\top} \mat H_t^\top
    \Big( \sigma_{\vy}^2 \mat I + \mat H_t \, \nabla_{\vz}\vmu_t \, \mat A_t^2 \mat B_t^{-1} \mat H_t^\top \Big)^{-1}
    \big( \vy - \mathcal H(\vmu_t) \big),
    \label{eq:mmps-score}
\end{align}
where $\mat H_t := \nabla_{\vz_{\mathbf 1}} \mathcal H(\vz_{\mathbf 1}) \big\vert_{\vz_{\mathbf 1} = \vmu_t(\vz)}$. The final step neglects the dependence of $\mat\Sigma_t$ on $\vz$; the marginalization itself is exact whenever $\mathcal H$ is linear, though \cref{eq:mmps-score} remains an approximation of the true likelihood score because $p(\vz_{\mathbf 1} \mid \vz_{\vtau_t})$ is not Gaussian in general.

In practice neither $\nabla_{\vz}\vmu_t$ nor the inverse is formed explicitly: the solve is performed with conjugate gradients, requiring only Jacobian-vector products with $\veta_1$.

\section{Model Details}

Throughout we use the linear interpolant $\alpha_{\vtau} = \bm 1 - \vtau$, $\beta_{\vtau} = \vtau$, for which $\beta'\alpha - \alpha'\beta \equiv 1$. The drift and score \cref{eq:drift-from-etas}--\cref{eq:score-from-etas} then read
\begin{align}
  \vb_{\vtau}(t, \vz) = \dot{\vtau}_t \odot \big(\veta_{\bm 1}(\vtau_t, \vz) - \veta_{\bm 0}(\vtau_t, \vz)\big),
  \qquad
  \vs_{\vtau}(t, \vz) = -\frac{\veta_{\bm 0}(\vtau_t, \vz)}{\bm 1 - \vtau_t},
\end{align}
and \cref{eq:lambda} reduces to $\vlambda_{\vtau_t} = \dot{\vtau}_t \odot (\bm 1 - \vtau_t)/\vtau_t$.

We use the diffusion coefficient $\vepsilon_{\vtau_t} = \varepsilon\,\dot{\vtau}_t(\bm 1 - \vtau_t)$ for a scalar $\varepsilon \ge 0$. The $(\bm 1 - \vtau_t)$ is added to cancel the singularity of the score, and $\dot{\vtau}_t$ to ensure that fixed states do not recieve any added noise. We integrate both SDEs with Euler--Maruyama and use no additional Langevin correction steps.

\section{Theorems}
\label{app:theorems}

Throughout, we assume (A1) $\rho_{\bm 0} = \mathcal N(\bm 0, \mat I)$ 
and interpolant coefficients $\alpha, \beta$ as in \cref{sec:msi} with $\alpha,\beta > 0$ on $(0,1)$. %
, with all operations elementwise; (A2) the conditional expectations $\veta_{\bm 0}, \veta_{\bm 1}$ in \cref{eq:etas} and the likelihood score $\nabla_{\vz} \log p(\vy \mid \vz_{\vtau} = \vz)$ are exact; and (A3) the diffusion coefficient $\vepsilon_{\vtau_t} \ge \bm 0$ vanishes on every slot held at flow time $1$, so that frozen slots stay frozen (the score $-\alpha_{\vtau}^{-1} \odot \veta_{\bm 0}$ is singular at $\alpha = 0$). The linear interpolant $\alpha_{\vtau} = \bm 1 - \vtau$, $\beta_{\vtau} = \vtau$ used in the paper satisfies (A1) with $\beta'\alpha - \alpha'\beta \equiv 1$.

For the block smoother we allow the boundary $\partial\mathcal B \subseteq \{n, n+w\}$ of a window $\{n, \dots, n+w\}$ to contain only the endpoints that have a neighbour outside the window, so that the two edge blocks $n = 0$ and $n+w = N$ have a single boundary slot and $\vx_0$, $\vx_N$ can be updated; \cref{eq:block-posterior} holds verbatim for such blocks by the Markov property of \cref{eq:ssm}.

\begin{proposition}[Invariance of the block update]
\label{prop:invariance}
Fix a block with interior $\mathcal B$ and boundary $\partial\mathcal B$, and let $\pi_{\mathcal B} = p(\vx_{\mathcal B} \mid \vx_{\partial\mathcal B}, \vy_{\mathcal B})$ as in \cref{eq:block-posterior}. 
Let $K_{\taumin}$ be the DAWIS Block Smoother update, i.e.\ \cref{alg:dawis} with $\vtau_0 = \vtaumin$ equal to $1$ on $\partial\mathcal B$ and $\taumin$ on $\mathcal B$: the initialization \cref{eq:block-noising} with $\vz_{\bm 0} \sim \rho_{\bm 0}$ independent of $\vx_{\mathcal B}$, followed by the guidance stage, the guided SDE \cref{eq:forward-guided-sde} with likelihood $p(\vy_{\mathcal B} \mid \cdot)$ run along any path from $\vtaumin$ to $\bm 1$ that keeps $\partial\mathcal B$ at flow time $1$.
Then under \textnormal{(A1)}--\textnormal{(A3)}, for every $\taumin \in [0,1]$,
\begin{align}
  \pi_{\mathcal B} K_{\taumin} = \pi_{\mathcal B},
\end{align}
where $\pi K$ denotes the law of $\vx' \sim K(\vx, \cdot)$ for $\vx \sim \pi$. At $\taumin = 0$, $K_0(\vx_{\mathcal B}, \cdot) = \pi_{\mathcal B}$ for every $\vx_{\mathcal B}$, which is the Block Gibbs update. At $\taumin = 1$, $K_1$ is the identity.
\end{proposition}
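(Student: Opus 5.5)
The plan is to realize a single step $\vx_{\mathcal B} \sim \pi_{\mathcal B}$, $\vx'_{\mathcal B} \sim K_{\taumin}(\vx_{\mathcal B},\cdot)$ as an evaluation of one posterior interpolant. Define $\rho^{\vy}_{\vtau}$ as in the text: it is the law of $\vz_{\vtau} = \alpha_{\vtau}\odot\vz_{\bm 0} + \beta_{\vtau}\odot\vz_{\bm 1}$, where $\vz_{\bm 0}\sim\mathcal N(\bm 0,\mat I)$ is independent of $\vz_{\bm 1}$. The endpoint has boundary part $\vz^{\partial\mathcal B}_{\bm 1} = \vx_{\partial\mathcal B}$ held fixed and interior part $\vz^{\mathcal B}_{\bm 1}\sim\pi_{\mathcal B}$.

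\textbf{Step 1 (initialization).} If $\vx_{\mathcal B}\sim\pi_{\mathcal B}$, then the noising \cref{eq:block-noising} with independent $\vz_{\bm 0}$ gives interior $\alpha_{\taumin}\vz^{\mathcal B}_{\bm 0}+\beta_{\taumin}\vx_{\mathcal B}$. Combined with the boundary held at $\vx_{\partial\mathcal B}$, and noting $\alpha_1=0$, $\beta_1=1$, this has exactly the law $\rho^{\vy}_{\vtaumin}$. This step is immediate from the definition.

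\textbf{Step 2 (transport).} Under (A2) the guided drift and score \cref{eq:guided-drift} are the exact drift and score of the posterior interpolant along the chosen path $\vtau_t$ from $\vtaumin$ to $\bm 1$. Hence, by the marginal-preservation result of \citet{negrel2025multitask} quoted in \cref{sec:cond-gen}, the solution of \cref{eq:forward-guided-sde} started from $\rho^{\vy}_{\vtaumin}$ has marginals $\rho^{\vy}_{\vtau_t}$ for all $t$. In particular its terminal law is $\rho^{\vy}_{\bm 1}$, whose interior marginal is $\pi_{\mathcal B}$, and this gives $\pi_{\mathcal B}K_{\taumin}=\pi_{\mathcal B}$.

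\textbf{Main obstacle: boundary slots.} The delicate point is that the boundary slots sit at flow time $1$, where $\alpha=0$ and the score $-\alpha^{-1}\veta_{\bm 0}$ is singular. I would handle this by showing that those coordinates never move. The drift $\vb$ carries the factor $\dot{\vtau}_t=0$ on $\partial\mathcal B$. The guidance factor $\vlambda$ contains $\gamma\propto\dot{\vtau}_t$, so it also vanishes there. By (A3), $\vepsilon=0$ on those coordinates, so both the score term and the noise drop out. The boundary coordinates therefore stay at $\vx_{\partial\mathcal B}$ almost surely.

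It then remains to run the marginal-preservation argument for the interior coordinates conditionally on the fixed boundary. Equivalently, one views the interior as a multitask interpolant whose prior is $p(\vx_{\mathcal B}\mid\vx_{\partial\mathcal B})$, which is exactly what freezing the slots at $\tau=1$ encodes, as noted in \cref{sec:msi}. If the path has $\taumin=1$ or $\taumin=0$ on some interior slot, I would also check that the continuity and regularity hypotheses of that result still hold.

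\textbf{Special cases.} For $\taumin=0$, $\alpha_0=1$ and $\beta_0=0$, so the initialization is independent of $\vx_{\mathcal B}$ and has law $\rho^{\vy}_{\vtaumin}$ regardless of the input. Step 2 then gives $K_0(\vx_{\mathcal B},\cdot)=\pi_{\mathcal B}$ for every $\vx_{\mathcal B}$, which is the Block Gibbs update. For $\taumin=1$, the noising returns $\vx_{\mathcal B}$ itself and the path is constant, so $\dot{\vtau}_t\equiv0$. The drift, guidance and diffusion all vanish, so $K_1$ is the identity.
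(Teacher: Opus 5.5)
Your proposal is correct and follows the paper's proof essentially step for step. Your Step 1 is the paper's part (b). Your Step 2 together with the boundary discussion is its part (a): you freeze $\partial\mathcal B$ via $\dot{\vtau}_t = 0$ and (A3), then read the interior as the posterior interpolant of the conditional prior $p(\vx_{\mathcal B}\mid\vx_{\partial\mathcal B})$. The cases $\taumin\in\{0,1\}$ are handled identically, and your explicit check that the drift and $\vlambda$ vanish on the boundary is only a more detailed version of the paper's appeal to (A3).
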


\begin{proof}
Consider the interpolant \cref{eq:interpolant-appendix} on the window with the boundary slots held at flow time $1$, $\vz^{\partial\mathcal B}_{\bm 1} = \vx_{\partial\mathcal B}$, and interior endpoint $\vz^{\mathcal B}_{\bm 1} \sim \pi_{\mathcal B}$. Write $\rho^{\vy}_{\tau}$ for the law of its interior at common flow time $\tau$, so that $\rho^{\vy}_{1} = \pi_{\mathcal B}$ and $\rho^{\vy}_{0} = \rho_{\bm 0}$.

(a) Since $\alpha_1 = 0$ and $\beta_1 = 1$, the boundary slots satisfy $\vz^{\partial\mathcal B}_{\vtau} = \vz^{\partial\mathcal B}_{\bm 1}$, so conditioning on $\vz_{\vtau}$ with $\tau^{\partial\mathcal B} = 1$ conditions on $\vx_{\partial\mathcal B}$: on the interior, $\veta_{\bm 0}, \veta_{\bm 1}$ are the conditional expectations of the interpolant with data law $p(\vx_{\mathcal B} \mid \vx_{\partial\mathcal B})$. As $\pi_{\mathcal B} \propto p(\vx_{\mathcal B} \mid \vx_{\partial\mathcal B})\, p(\vy_{\mathcal B} \mid \vx_{\mathcal B})$, the interpolant above is the posterior interpolant of this conditional prior, and by \cref{app:guidance} its drift and score are the guided quantities \cref{eq:guided-score-drift}, which are exact under (A2). By \citet{negrel2025multitask} the SDE preserves its marginals for every admissible $\vepsilon$, and by (A3) the boundary does not move. Started from $\rho^{\vy}_{\taumin}$, the guidance stage thus returns a sample from $\rho^{\vy}_{1} = \pi_{\mathcal B}$.

(b) If $\vx_{\mathcal B} \sim \pi_{\mathcal B}$, the initialization returns $\alpha_{\taumin} \vz_{\bm 0} + \beta_{\taumin} \vx_{\mathcal B}$ with $\vz_{\bm 0} \sim \rho_{\bm 0}$ independent of $\vx_{\mathcal B}$, which is by definition a draw from $\rho^{\vy}_{\taumin}$.

Composing (b) and (a), $K_{\taumin}$ maps a sample from $\pi_{\mathcal B}$ to a sample from $\pi_{\mathcal B}$. At $\taumin = 0$ the initialization returns $\vz_{\bm 0}$ regardless of $\vx_{\mathcal B}$, so by (a) the output has law $\pi_{\mathcal B}$ for every input. At $\taumin = 1$, the initialization is the identity and the guidance stage is skipped.
\end{proof}

\begin{corollary}[Validity of the sweep]
\label{cor:sweep}
Apply the updates of \cref{prop:invariance} to a sequence of blocks, each acting on its interior given the current values of all other states. The composition leaves $p(\vx_{0:N} \mid \vy_{1:N})$ invariant. If every state is interior to some block (which requires the two edge blocks), $\taumin < 1$ and $\vepsilon > \bm 0$ on the interior, then under standard regularity conditions the chain is irreducible and aperiodic, so this is its unique stationary distribution \citep{tierney1994markov, meyn2012markov}.
\end{corollary}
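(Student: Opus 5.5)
The plan is to split the corollary into two parts: invariance of the joint smoothing distribution under the composed sweep, and uniqueness of that stationary law via irreducibility and aperiodicity.

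\emph{Invariance.} Write the full-trajectory kernel for a single block as $\tilde K_{\mathcal B}(\vx_{0:N}, \cdot)$. It leaves $\vx_{0:N\setminus\mathcal B}$ unchanged and draws $\vx'_{\mathcal B} \sim K_{\taumin}(\vx_{\mathcal B}, \cdot)$. The kernel $K_{\taumin}$ depends on the rest of the trajectory only through $\vx_{\partial\mathcal B}$, which is fixed by (A3), and $\vy_{\mathcal B}$ is held fixed. Now let $\vx_{0:N} \sim p(\vx_{0:N}\mid\vy_{1:N})$ and factor
\[
p(\vx_{0:N}\mid\vy_{1:N}) = p(\vx_{0:N\setminus\mathcal B}\mid\vy_{1:N})\,p(\vx_{\mathcal B}\mid\vx_{0:N\setminus\mathcal B},\vy_{1:N}).
\]
By \cref{eq:block-posterior}, the second factor equals $\pi_{\mathcal B}$. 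Conditionally on $\vx_{0:N\setminus\mathcal B}$, \cref{prop:invariance} gives $\pi_{\mathcal B}K_{\taumin}=\pi_{\mathcal B}$. Integrating over the outer variables then shows that $\tilde K_{\mathcal B}$ leaves the joint posterior invariant. For the edge blocks, where the boundary has a single slot, the paper notes that \cref{eq:block-posterior} still holds, so the same argument applies. A composition of kernels that each preserve the same law also preserves it, which settles the first claim by induction over the sweep.

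\emph{Uniqueness.} Here I need to show that the composed sweep kernel $P$ is $\phi$-irreducible (taking $\phi$ to be Lebesgue measure) and aperiodic. I would then invoke the standard result of \citet{tierney1994markov} and \citet{meyn2012markov}: an invariant probability for a $\phi$-irreducible chain is unique, and aperiodicity gives convergence.

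The key step is to show that each single-block kernel has a density with respect to Lebesgue measure on $\R^{|\mathcal B| d}$ that is strictly positive everywhere, for every fixed input. The argument has two parts.
\begin{itemize}[label={}]
\item The initialization \cref{eq:block-noising} with $\taumin<1$ adds Gaussian noise $\alpha_{\taumin}\vz_{\bm 0}$ with $\alpha_{\taumin}>0$ by (A1). So $\vz^{\mathcal B}_{\vtaumin}$ has an everywhere-positive density.
\item The guided SDE has $\vepsilon>\bm 0$ on the interior. Under standard regularity conditions (locally Lipschitz drift, nondegenerate diffusion), its transition law from any initial point has a positive density by a support theorem or a Harnack-type argument.
\end{itemize}
Composing the sweep, every state is interior to some block, so each coordinate is refreshed with positive density. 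The sweep kernel then satisfies $P(\vx,A)>0$ for every $\vx$ and every Lebesgue-positive set $A$. The one thing to check is that states shared by overlapping blocks are not refreshed in a degenerate way, but a positive density on each block, given its conditioning, suffices. This gives irreducibility. Aperiodicity follows because $P(\vx, A)>0$ for all $\vx$ and all $A$ of positive measure, so no cyclic decomposition can exist.

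The main obstacle is positivity of the SDE transition density. The drift $\vs$ and the guidance term $\vlambda$ can be singular near the endpoints, so I would treat this as exactly the "standard regularity conditions" the statement assumes, rather than prove it. Alternatively, it is enough to use only the Gaussian initialization: if the guided flow map from $\taumin$ to $1$ is a.s.\ a diffeomorphism conditional on the noise path, then positivity is preserved onto its image, and one must argue that the image is the full space.
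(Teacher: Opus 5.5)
Your proposal is correct and follows the paper's proof essentially step for step. Each block update is invariant conditionally on the states outside the block, by the Markov factorization and \cref{prop:invariance}, and invariance is preserved under composition. Irreducibility and aperiodicity then come from the strictly positive Gaussian initialization together with the non-degenerate guided SDE, and you delegate positivity of the SDE transition density to the same ``standard regularity conditions'' the paper invokes; your remarks on overlapping blocks and on ruling out cyclic decompositions only spell out what the paper's one-line positivity claim leaves implicit.
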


\begin{proof}
If $\vx_{0:N} \sim p(\cdot \mid \vy_{1:N})$ then, conditionally on the states outside a block, its interior is distributed as $\pi_{\mathcal B}$ by \cref{eq:block-posterior}. \cref{prop:invariance} shows the update preserves this conditional law while leaving the other states unchanged, so the joint law is preserved, and a composition of invariant kernels is invariant. For $\taumin < 1$ the initialization is a Gaussian kernel with variance $\alpha_{\taumin}^2 > 0$ and for $\vepsilon > \bm 0$ the guidance stage is an SDE with non-degenerate noise, so both have strictly positive transition densities on the interior. Since every state is interior to some block, one sweep has a strictly positive density on the whole state space, which gives irreducibility and aperiodicity.
\end{proof}

\paragraph{Related results.}
\Cref{prop:invariance} is an instance of the observation, made in concurrent work by \citet{kang2026sampling} and \citet{hill2026exact}, that a Gaussian noising step followed by exact denoising leaves the data distribution invariant; Gibbs samplers alternating noising and guided denoising with diffusion priors also appear in \citet{coeurdoux2024plug} and \citet{wu2024principled}. The blocked structure of \cref{cor:sweep} follows the block Gibbs samplers for state-space models of \citet{carter1994gibbs}, \citet{shephard1997likelihood} and \citet{singh2017blocking}.

\paragraph{Relation to the inversion stage.}
The block smoother initializes the interior directly at $\vtaumin$ through \cref{eq:block-noising} rather than inverting it from $\bm 1$. The reason is that the unconditional inversion of \cref{alg:dawis} does not in general yield a $\pi_{\mathcal B}$-invariant kernel: it transports the interior along the marginals of the prior interpolant, whereas invariance requires reaching the posterior marginal $\rho^{\vy}_{\taumin}$. Inverting with the guided SDE instead does leave $\pi_{\mathcal B}$ invariant, by the same marginal-preservation argument as in step (a) of \cref{prop:invariance}. The next result shows that for a specific choice of diffusion coefficient the conditional and unconditional inversions coincide, and that both reduce to the initialization \cref{eq:block-noising}. The block smoother is therefore a special case of the inversion cycle rather than a departure from it. The underlying fact, that the generative SDE with this coefficient reduces to the noising step of SDEdit \citep{meng2022sdedit}, is not new, a similar result is Proposition~H.1 of \citet{potaptchik2026metaflowmapsenable}.

\begin{proposition}[Score-free inversion]
\label{prop:sdedit}
Let $\vtau_t$ be a path with $\vtau_0 = \bm 1$, $\dot{\vtau}_t \le \bm 0$ componentwise and $\vtau_t > \bm 0$,  and assume $\beta'\alpha - \alpha'\beta \ge 0$. Let $\vepsilon_{\vtau_t} = -\vlambda_{\vtau_t}$ with $\vlambda_{\vtau_t}$ as in \cref{eq:lambda}. Then $\vepsilon_{\vtau_t} \ge \bm 0$, the SDEs \cref{eq:forward-sde} and \cref{eq:forward-guided-sde} coincide along $\vtau_t$, and the resulting SDE involves neither $\veta_{\bm 0}, \veta_{\bm 1}$ nor the likelihood. Moreover, its solution started from $\vz$ at $t = 0$ has, for every $t$, the law of the interpolant \cref{eq:interpolant-appendix} with $\vz_{\bm 1} = \vz$,
\begin{align}\label{eq:app-noising}
  \vz_{\vtau_t} \overset{d}{=} \alpha_{\vtau_t} \odot \vz_{\bm 0} + \beta_{\vtau_t} \odot \vz, \qquad \vz_{\bm 0} \sim \rho_{\bm 0} \text{ independent of } \vz,
\end{align}
which extends by continuity to $\vtau_t = \bm 0$, where it equals $\vz_{\bm 0}$.
\end{proposition}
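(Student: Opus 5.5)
The plan is to substitute $\vepsilon_{\vtau_t} = -\vlambda_{\vtau_t}$ into the drift and show that all $\veta$-dependence cancels. What remains is a linear SDE, which can be solved in closed form.

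\textbf{Sign of $\vepsilon$.} By \cref{eq:lambda}, $\vlambda_{\vtau_t} = \dot{\vtau}_t\odot\alpha_{\vtau_t}\odot(\beta'\alpha-\alpha'\beta)_{\vtau_t}/\beta_{\vtau_t}$. On the path we have $\dot{\vtau}_t\le\bm 0$, $\alpha\ge 0$, $\beta>0$ (since $\vtau_t>\bm 0$), and $\beta'\alpha-\alpha'\beta\ge 0$. Hence $\vlambda_{\vtau_t}\le\bm 0$ and $\vepsilon_{\vtau_t}\ge\bm 0$.

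\textbf{Cancellation of the guidance term.} In \cref{eq:forward-guided-sde} the likelihood score enters the drift as $(\vlambda_{\vtau_t}+\vepsilon_{\vtau_t})\odot\nabla\log p(\vy\mid\vz)$, by \cref{eq:guided-drift}. This coefficient vanishes, so the guided and unguided SDEs coincide.

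\textbf{Cancellation of the learned quantities.} Use \cref{eq:drift-from-etas,eq:score-from-etas}. The unconditional drift becomes
\[
\dot\vtau\odot(\alpha'\veta_{\bm 0}+\beta'\veta_{\bm 1})+\vlambda\odot\alpha^{-1}\odot\veta_{\bm 0}.
\]
To remove $\veta$, use the identity $\vz=\alpha\odot\veta_{\bm 0}+\beta\odot\veta_{\bm 1}$, which follows by taking conditional expectations of \cref{eq:interpolant-appendix}. This gives $\veta_{\bm 1}=(\vz-\alpha\odot\veta_{\bm 0})/\beta$, which is legitimate since $\beta>0$. Substituting, the coefficient of $\veta_{\bm 0}$ is
\[
\dot\vtau\odot\Big(\alpha'-\frac{\beta'\alpha}{\beta}\Big)+\frac{\dot\vtau\odot(\beta'\alpha-\alpha'\beta)}{\beta}=\bm 0.
\]
The drift therefore reduces to $\dot\vtau\odot(\beta'/\beta)\odot\vz$, which is linear and free of $\veta$ and the likelihood.

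The one delicate point is the division by $\alpha$ where $\alpha=0$, i.e.\ at $\vtau=\bm 1$. There $\vepsilon\odot\vs$ should be read as $-\vlambda\odot\alpha^{-1}\odot\veta_{\bm 0}$, in which $\alpha$ cancels against the factor $\alpha$ in $\vlambda$. With that reading the expression is smooth, consistent with (A3).

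\textbf{Law of the solution.} The SDE is now componentwise linear:
\[
\diff z=\frac{\dot\tau\,\beta'}{\beta}\,z\,\diff t+\sqrt{2\varepsilon}\,\diff W,\qquad \varepsilon=-\frac{\dot\tau\,\alpha(\beta'\alpha-\alpha'\beta)}{\beta}.
\]
Its solution from a deterministic $z$ is Gaussian. The mean $m_t$ solves $\dot m=(\dot\tau\beta'/\beta)m$, so $m_t=\beta_{\tau_t}z$ because $\tfrac{\diff}{\diff t}\log\beta_{\tau_t}=\dot\tau\beta'/\beta$ and $\beta_1=1$.

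The variance $v_t$ solves $\dot v=2(\dot\tau\beta'/\beta)v+2\varepsilon$ with $v_0=0$. It remains to check that $v=\alpha^2$ is a solution:
\[
\frac{\diff}{\diff t}\alpha^2=2\alpha\alpha'\dot\tau,\qquad 2\frac{\dot\tau\beta'}{\beta}\alpha^2-2\frac{\dot\tau\alpha(\beta'\alpha-\alpha'\beta)}{\beta}=2\dot\tau\alpha\alpha'.
\]
The initial value matches since $\alpha_1=0$, so uniqueness of the linear ODE gives $v=\alpha^2$. Independence across components holds because the noise is diagonal. Together these give \cref{eq:app-noising}.

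\textbf{Extension to $\vtau_t=\bm 0$.} As $\tau\to 0$ the law $\mathcal N(\beta z,\alpha^2)$ converges to $\mathcal N(0,1)$, by continuity of $\alpha$ and $\beta$ with $\beta(0)=0$ and $\alpha(0)=1$.

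I expect the main obstacle to be rigour at the endpoints rather than the algebra. At $\tau=1$ the score term needs the cancellation above. At $\tau\to 0$ the drift coefficient $\beta'/\beta$ blows up, which is why the statement requires $\vtau_t>\bm 0$ and obtains the endpoint only as a limit.
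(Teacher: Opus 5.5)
Your proposal is correct and follows the paper's proof essentially step for step: the sign check on $-\vlambda$, the vanishing coefficient $\vlambda+\vepsilon$ on the likelihood score, eliminating $\veta_{\bm 1}$ via $\vz=\alpha\odot\veta_{\bm 0}+\beta\odot\veta_{\bm 1}$ so that the $\veta_{\bm 0}$ coefficient cancels, and matching the mean $\beta_{\tau_t}z$ and variance $\alpha_{\tau_t}^2$ of the resulting linear SDE by ODE uniqueness. One harmless slip: in your remark on $\alpha=0$, $\vepsilon\odot\vs$ equals $+\vlambda\odot\alpha^{-1}\odot\veta_{\bm 0}$, as in your own drift display, not $-\vlambda\odot\alpha^{-1}\odot\veta_{\bm 0}$; the point that the factor $\alpha$ in $\vlambda$ cancels $\alpha^{-1}$ is unaffected.
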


\begin{proof}
Since the drift, score and diffusion coefficient act elementwise and the components of $\vz_{\bm 0}$ are independent, it suffices to treat a single component. We write $\tau_t$ for its flow time, drop the argument $\tau_t$ from $\alpha, \beta, \eta_{\bm 0}, \eta_{\bm 1}, \lambda, \varepsilon$, and let $\ell(z) := \partial_z \log p(\vy \mid z_{\tau_t} = z)$ denote the likelihood score.

By \cref{eq:lambda} and (A1), $\varepsilon = -\lambda = -\dot\tau_t\, \alpha(\beta'\alpha - \alpha'\beta) / \beta \ge 0$, since $\dot\tau_t \le 0$, $\alpha \ge 0$, $\beta > 0$ and $\beta'\alpha - \alpha'\beta \ge 0$. Taking $\mathbb E[\,\cdot \mid z_{\tau_t} = z]$ in \cref{eq:interpolant-appendix} gives $z = \alpha\,\eta_{\bm 0} + \beta\,\eta_{\bm 1}$, and substituting $\eta_{\bm 1} = (z - \alpha \eta_{\bm 0})/\beta$ into \cref{eq:drift-from-etas}--\cref{eq:score-from-etas}, the drift of \cref{eq:forward-sde} becomes
\begin{align}\label{eq:app-drift}
  b + \varepsilon\, s
  = \dot\tau_t\frac{\beta'}{\beta}\,z
    - \frac{\eta_{\bm 0}}{\alpha}\Big(\varepsilon + \dot\tau_t\frac{\alpha(\beta'\alpha - \alpha'\beta)}{\beta}\Big)
  = \dot\tau_t\frac{\beta'}{\beta}\,z - \frac{\eta_{\bm 0}}{\alpha}\big(\varepsilon + \lambda\big),
\end{align}
where the last step is the definition \cref{eq:lambda}. By \cref{eq:guided-score-drift}, the guided SDE adds $(\lambda + \varepsilon)\,\ell$ to this drift. At $\varepsilon = -\lambda$ both the $\eta_{\bm 0}$ term and the likelihood term vanish, so the two SDEs reduce to the same linear SDE
\begin{align}\label{eq:app-linear}
  \diff z_{\tau_t} = \dot\tau_t\frac{\beta'}{\beta}\,z_{\tau_t}\,\diff t + \sqrt{-2\lambda}\,\diff W_t,
\end{align}
with continuous coefficients since $\beta > 0$.

Being linear with deterministic coefficients, the solution of \cref{eq:app-linear} from a fixed $z$ is Gaussian, so it suffices to match its mean $m_t$ and variance $v_t$ with \cref{eq:app-noising}. These satisfy
\begin{align}\label{eq:app-mv}
  \dot m_t = \dot\tau_t\frac{\beta'}{\beta}\,m_t,
  \qquad
  \dot v_t = 2\dot\tau_t\frac{\beta'}{\beta}\,v_t - 2\dot\tau_t\frac{\alpha(\beta'\alpha - \alpha'\beta)}{\beta},
  \qquad m_0 = z,\ v_0 = 0.
\end{align}
By the chain rule, $m_t = \beta_{\tau_t} z$ and $v_t = \alpha_{\tau_t}^2$ satisfy $\dot m_t = \dot\tau_t \beta' z = \dot\tau_t \frac{\beta'}{\beta} m_t$ and $\dot v_t = 2\dot\tau_t \alpha\alpha' = 2\dot\tau_t\frac{\beta'}{\beta}\alpha^2 - 2\dot\tau_t\frac{\alpha(\beta'\alpha - \alpha'\beta)}{\beta}$, with $\beta_1 = 1$ and $\alpha_1^2 = 0$; they are therefore the solutions by uniqueness. Hence $z_{\tau_t} \mid z \sim \mathcal N(\beta_{\tau_t} z, \alpha_{\tau_t}^2)$, which is the componentwise form of \cref{eq:app-noising}.
\end{proof}

\section{Experiment Details}\label{apx:experiment_details}
This appendix gives what is needed to reproduce the results in this paper. We present the DAWIS hyperparameters and how they were tuned, the backbone and training setup, the configuration of every baseline, and the data, splits and observation operators for SQG and \texttt{SEVIR}. DAWIS is tuned only on the filtering task, and the same settings are reused unchanged for both smoothers, so the smoothing results involve no additional tuning.

\subsection{DAWIS Hyperparameters and tuning}
We tune the hyperparameters only for the filtering experiments and use the same settings for the corresponding smoothing experiments. The complete set of hyperparameters is listed in \cref{tab:experiment_hyperparams}.
\begin{table}[H]
\centering
\caption{Hyperparameter configurations for DAWIS for all SQG and \texttt{SEVIR} experiments.}
\label{tab:experiment_hyperparams}
\begin{tabular}{lcccccc}
\toprule
\textbf{Hyperparameter} 
& \texttt{Noisy} 
& \texttt{Sparse} 
& \texttt{Multimodal} 
& \texttt{Saturating} 
& \texttt{SEVIR} \\
\midrule
Guide Method
& MMPS & MMPS & MMPS & MMPS & MMPS \\

Solver Steps
& 100 & 100 & 100 & 100 & 100 \\

$\boldsymbol{\varepsilon}$
& 0.03 & 0.03 & 0.03 & 0.03 & 2.5/1.0 \\

$\boldsymbol{\taumin^{\mathrm{start}}}$
& 0.4 & 0.4 & 0.4 & 0.4 & 0.4 \\

$\boldsymbol{\taumin^{\mathrm{end}}}$
& 0.3 & 0.1 & 0.2 & 0.4 & 0.1 \\

$\boldsymbol{\taumin^{\mathrm{init}}}$
& 0.0 & 0.0 & 0.0 & 0.0 & 1.0 \\

$\boldsymbol{\taumin^{\mathrm{init-IC}}}$
& 0.9 & 0.9 & 0.9 & 0.9 & 0.9 \\

Guidance Strength $\zeta$
& 1 & 1 & 1 & 20 & 2.5 \\

Guide all steps
& False & True & False & True & True \\
\bottomrule
\end{tabular}
\end{table}

\paragraph{Block smoothers:} The Block Gibbs Smoother, DAWIS Block Smoother and DAWIS Soft Block Smoother share the DAWIS window model and the same sweep. Each is initialized from the DAWIS Lagged Smoother on SQG and the DAWIS-Joint Lagged Smoother on \texttt{SEVIR}. Blocks of $w+1=7$ states slide over the trajectory with stride 3 and alternate between forward and backward sweeps, with 2 sweeps on SQG and 4 on \texttt{SEVIR}. 
For SQG, the two boundary states of each block condition the update. 
For a non-Markovian system such as SEVIR, the assumptions of \cref{eq:ssm} break and the block smoother should condition on more than two points.
We tried conditioning on two points for both DAWIS block smoother and Block Gibbs Smoother, but did not see an improvement.

The noising step is 
a sample from the interpolant \cref{eq:block-noising}
followed by 100 guided solver steps with MMPS and $\varepsilon=0.03$. The three methods differ only in their turning points (\cref{tab:hyperparams_block}). The Block Gibbs Smoother inverts the interior to pure noise ($\taumin=0$). The DAWIS Block Smoother uses a tuned interior turning point and keeps the boundary fixed. 
The soft variant also inverts the boundary, to $\taumin^{\partial\mathcal B}=0.9$. The guidance strength is $\zeta=1$ except on \texttt{Saturating}, where $\zeta=150$.
\begin{table}[H]
\centering
\caption{Interior turning point $\taumin$ of the block smoothers. The boundary turning point is $1$ for the Block Gibbs and DAWIS Block Smoothers and $0.9$ for the DAWIS Soft Block Smoother.}
\label{tab:hyperparams_block}
\begin{tabular}{lccccc}
\toprule
\textbf{Method} & \texttt{Noisy} & \texttt{Sparse} & \texttt{Multimodal} & \texttt{Saturating} & \texttt{SEVIR} \\
\midrule
Block Gibbs Smoother       & 0   & 0   & 0   & 0   & 0   \\
DAWIS Block Smoother       & 0.9 & 0.7 & 0.7 & 0.9 & 0.9 \\
DAWIS Soft Block Smoother  & 0.9 & 0.7 & 0.7 & 0.9 & 0.9 \\
\bottomrule
\end{tabular}
\end{table}

\subsection{Backbone Architecture}
\label{app:training}
We parameterize $\eta_{0,1}$ using a U-Net backbone \citep{UNET_Original}, with separate output channels for the two components. Preliminary experiments found this to perform comparably to using separate networks for $\eta_0$ and $\eta_1$ or layer-norm conditioning, while requiring less computation and memory. The backbone follows the general architecture used in \citet{song2020score} and \citet{Karras2022edm}, with the addition of circular padding to respect the periodic boundary conditions of the SQG data. It has three hierarchical levels, with attention at the second level for SQG and only at the deepest level for \texttt{SEVIR} due to its higher spatial resolution. Since the model processes a full window of $W$ states jointly, we scale its capacity with the window size by setting the hidden dimension to $32\times W$ and reduce the batch size accordingly to keep memory usage approximately constant. Unless stated otherwise, all window-based methods use $w+1=7$ states, so the DAWIS Lagged Smoother has a lag of $\ell=6$ steps (\SI{18}{h} on SQG, \SI{60}{min} on \texttt{SEVIR}). We do not investigate more specialized architectures, but DAWIS is compatible with any model that can be trained using a flow-matching objective.

All DAWIS models use the same backbone architecture and are implemented in PyTorch. Models are trained for 50 epochs on 8 A100 GPUs, using the same optimizer settings across datasets, as listed in \cref{tab:optimizer}.

\begin{table}[H]
\centering
\caption{Optimizer hyperparameters, shared across all datasets and window sizes.}
\label{tab:optimizer}
\begin{tabular}{lc}
\toprule
\textbf{Optimizer hyperparameters} & \\
\midrule
 Optimiser & AdamW \citep{loshchilov_fixing_2017} \\
 Initialization & Xavier Uniform \citep{glorot_understanding_2010} \\
 LR decay schedule & Cosine \citep{loshchilov_sgdr_2017} \\
 Peak LR & 1e-3 \\
 Minimum LR & 1e-5 \\
 Weight decay & 1e-2 \\
 Epochs & 50 \\
\bottomrule
\end{tabular}
\end{table}

\subsection{Baselines Details}
Here we provide additional details on the baseline methods, including their implementation and hyperparameter settings. Where a baseline was already tuned for this setup in prior work, we reuse those settings. The SDA-based baselines share the DAWIS backbone and are tuned per experiment, so the comparison with them separates the effect of the inference procedure from that of the learned prior.

\paragraph{LETKF:} We use the hyperparameters listed in \cref{tab:LETKF_hyperparams}, following the values used in \citep{andrae2026daisi}, which were already tuned for the experimental settings considered in this paper.

\begin{table}[H]
\centering
\caption{Hyperparameter configurations for LETKF.}
\label{tab:LETKF_hyperparams}
\begin{tabular}{lcccccccc}
\toprule
\textbf{Hyperparameter}
& \texttt{Noisy}
& \texttt{Sparse}
& \texttt{Multimodal}
& \texttt{Saturating}
& \texttt{SEVIR} \\
\midrule
Localization scale
& \SI{2500}{\kilo\meter}
& \SI{1500}{\kilo\meter}
& \SI{1500}{\kilo\meter}
& \SI{4500}{\kilo\meter}
& \SI{3}{px} \\

Relaxation to prior spread
& 0.5
& 0.5
& 0.5
& 0.9
& 0.0 \\
\bottomrule
\end{tabular}
\end{table}

\paragraph{FlowDAS:} For \texttt{SEVIR}, we use the pretrained backbone and original hyperparameters from \citet{chen2025flowdas}. For SQG, we use the backbone and hyperparameters from \citet{andrae2026daisi}, which were tuned for this experimental setting. All hyperparameters for all experiments for FlowDAS are shown in \cref{tab:hyperparams_flowdas}.
\begin{table}[H]
\centering
\caption{Hyperparameter configurations for FlowDAS for all SQG and \texttt{SEVIR} experiments.}
\label{tab:hyperparams_flowdas}
\begin{tabular}{lcccc}
\toprule
\textbf{Experiment} & \textbf{Solver Steps} & \textbf{MC Samples} & \textbf{Window} &
\textbf{Guidance Strength} $\zeta$ \\
\midrule
\texttt{Noisy}        & 500 & 25 & 6 & 1   \\
\texttt{Sparse}       & 500 & 25 & 6 & 3   \\
\texttt{Multimodal}   & 500 & 25 & 6 & 0.5 \\
\texttt{Saturating}   & 500 & 25 & 6 & 5   \\
\texttt{SEVIR}        & 500 & 25  & 6  & 0.1  \\
\bottomrule
\end{tabular}
\end{table}

\paragraph{EnSF:} We use the same implementation and hyperparameters as in \citet{andrae2026daisi} shown in \cref{tab:hyperparams_ensf}.
\begin{table}[H]
\centering
\caption{Hyperparameter configurations for EnSF for all SQG and \texttt{SEVIR} experiments.}
\label{tab:hyperparams_ensf}
\begin{tabular}{lcc}
\toprule
\textbf{Experiment} & \textbf{Solver Steps} & $\boldsymbol{\epsilon_{\alpha}}$ \\
\midrule
\texttt{Noisy}        & 1000 & 0.05 \\
\texttt{Sparse}       & 1000 & 0.05 \\
\texttt{Multimodal}   & 1000 & 0.05 \\
\texttt{Saturating}   & 1000 & 0.05 \\
\texttt{SEVIR}        & 1000 & 0.05 \\
\bottomrule
\end{tabular}
\end{table}

\paragraph{SDA Filter:} We use the same backbone that we use for DAWIS and we tune the hyperparameters for each experiment. The final hyperparameters are shown in \cref{tab:hyperparams_sda_filter}.
\begin{table}[H]
\centering
\caption{Hyperparameter configurations for SDA Filter for all SQG and \texttt{SEVIR} experiments.}
\label{tab:hyperparams_sda_filter}
\begin{tabular}{lcccccccl}
\toprule
\textbf{Experiment} & \textbf{Guide Method} & \textbf{Solver Steps} &
$\boldsymbol{\varepsilon}$ &
\textbf{Corrections} & $\boldsymbol{\tau}$ &
\textbf{Guidance Strength} $\zeta$ \\
\midrule
\texttt{Noisy}        & MMPS & 100 & 1 & 1 & 0.3 & 2  \\
\texttt{Sparse}       & MMPS & 100 & 1 & 1 & 0.3 & 2  \\
\texttt{Multimodal}   & MMPS & 100 & 1 & 1 & 0.5 & 2  \\
\texttt{Saturating}   & MMPS & 100 & 1 & 1 & 0.3 & 20 \\
\texttt{SEVIR}        & MMPS & 100 & 1 & 1 & 0.3 & 10 \\
\bottomrule
\end{tabular}
\end{table}

\paragraph{DAISI:} We use the same hyperparameters (see \cref{tab:hyperparams_daisi}) and pretrained backbone as in the original DAISI implementation \citep{andrae2026daisi}.
\begin{table}[H]
\centering
\caption{Hyperparameter configurations for DAISI for all SQG and \texttt{SEVIR} experiments.}
\label{tab:hyperparams_daisi}
\begin{tabular}{lccccc}
\toprule
\textbf{Experiment} & \textbf{Guide Method} & \textbf{Solver Steps} &
$\boldsymbol{\varepsilon}$ & $\boldsymbol{t_{\min}}$ &
\textbf{Guidance Strength} $\zeta$ \\
\midrule
\texttt{Noisy}        & DPS  & 100 & 0.03 & 0.4 & 10 \\
\texttt{Sparse}       & MMPS & 100 & 0.03 & 0.3 & 1  \\
\texttt{Multimodal}   & MMPS & 100 & 0.03 & 0.3 & 1  \\
\texttt{Saturating}   & MMPS & 100 & 0.03 & 0.3 & 20 \\
\texttt{SEVIR}        & MMPS & 100 & 0.03 & 0.3 & 1 \\
\bottomrule
\end{tabular}
\end{table}

\paragraph{Joint AR:} Both variants use the same window model as DAWIS and no separate forecast model. Each cycle generates part of the window from pure noise, conditioned on the rest of the window held at data ($\tau=1$) \citep{shysheya2024on}. Joint AR 1$|$W holds the $w=6$ most recent analyses fixed and generates only the new state, guided by its observation. Joint AR W$|$1 holds only the oldest state fixed and regenerates the other six, guided by every observation in the window, so it also revises past states. Neither variant has an inversion step. The hyperparameters, tuned per experiment, are shown in \cref{tab:hyperparams_jointar}.
\begin{table}[H]
\centering
\caption{Hyperparameter configurations for Joint AR for all SQG and \texttt{SEVIR} experiments. Both variants use MMPS and $\varepsilon=1$.}
\label{tab:hyperparams_jointar}
\begin{tabular}{lcccc}
\toprule
& \multicolumn{2}{c}{\textbf{Joint AR 1$|$W}} & \multicolumn{2}{c}{\textbf{Joint AR W$|$1}} \\
\cmidrule(lr){2-3}\cmidrule(lr){4-5}
\textbf{Experiment} & \textbf{Solver Steps} & $\zeta$ & \textbf{Solver Steps} & $\zeta$ \\
\midrule
\texttt{Noisy}        & 100 & 5  & 100 & 4   \\
\texttt{Sparse}       & 100 & 5  & 100 & 7   \\
\texttt{Multimodal}   & 100 & 5  & 25  & 5   \\
\texttt{Saturating}   & 100 & 20 & 100 & 150 \\
\texttt{SEVIR}        & 100 & 3  & 100 & 1   \\
\bottomrule
\end{tabular}
\end{table}

\paragraph{ForcingDAS-Pyr:} We implement the pyramid schedule of \citet{jia2026forcingdasunifiedrobustdata} on the DAWIS window model, so both methods share the same prior. The window of $T=w+1=7$ states sits on a noise ladder: at the start of each cycle, state $j$ is at flow time $\tau^j = (T-1-j)/(T-1)$ for $j \ge 1$. Each cycle advances every state one rung of $1/(T-1)$ under guidance from all observations in the window. The first state is held at data as clean context ($n_{\text{fixed}}=1$), so each state is finalized after $w-1=5$ future observations. All experiments use MMPS, 20 solver steps per cycle and $\varepsilon=0.03$. The guidance strength is tuned per experiment: $\zeta = 2, 15, 4, 150$ and $3$ for \texttt{Noisy}, \texttt{Sparse}, \texttt{Multimodal}, \texttt{Saturating} and \texttt{SEVIR}.

\paragraph{SDA:} We use the same backbone that we use for DAWIS and we tune the hyperparameters for each experiment. The final hyperparameters are shown in \cref{tab:hyperparams_sda}.
\begin{table}[H]
\centering
\caption{Hyperparameter configurations for SDA for all SQG and \texttt{SEVIR} experiments.}
\label{tab:hyperparams_sda}
\begin{tabular}{lcccccc}
\toprule
\textbf{Experiment} & \textbf{Guide Method} & \textbf{Solver Steps} &
$\boldsymbol{\varepsilon}$ & \textbf{Corrections} & $\boldsymbol{\tau}$ &
\textbf{Guidance Strength} $\zeta$ \\
\midrule
\texttt{Noisy}        & MMPS & 100 & 1 & 1 & 0.5 & 2  \\
\texttt{Sparse}       & MMPS & 100 & 1 & 1 & 0.5 & 5  \\
\texttt{Multimodal}   & MMPS & 200 & 1 & 3 & 1.0 & 3  \\
\texttt{Saturating}   & MMPS & 100 & 1 & 2 & 0.5 & 30 \\
\texttt{SEVIR}        & MMPS & 100 & 1 & 1 & 0.5   & 3  \\
\bottomrule
\end{tabular}
\end{table}

\subsection{Surface Quasi-Geostrophic (SQG)}
For the SQG experiments, we reuse the training dataset from \citet{andrae2026daisi}, consisting of 2,000 trajectories of 100 states sampled at 3-hour intervals. Each trajectory is initialized from a random state and spun up for 300 days to reach approximate stationarity before the recorded trajectory begins. Evaluation is performed on 10 held-out trajectories of the same length, with metrics averaged over the final 20 states. Since the data are approximately mean-zero, we normalize each channel only by its standard deviation. The experimental settings are summarized in \cref{tab:experiments}, and the corresponding observation operators are visualized in \cref{fig:obs}.
\setlength{\tabcolsep}{4pt}
\begin{table}[h]
\centering
\small
\caption{Experiment configurations.}
\label{tab:experiments}
\begin{tabular}{c c c c c}
\toprule
\textbf{Experiment} & \textbf{Data} & \textbf{Obs operator}& $\sigma_{\text{obs}}$ & \textbf{Sparsity} \\
\midrule
\texttt{Noisy} & SQG-$64$  & $x$& $5$& $25\%$\\
\texttt{Sparse} & SQG-$64$& $x$&  $1$& $5\%$\\
\texttt{Multimodal} & SQG-$64$  & $(x/7)^2$&  $1$& $25\%$\\
\texttt{Saturating} & SQG-$64$  & $\arctan(x)$&  $0.01$& $25\%$\\
\texttt{SEVIR} & \texttt{SEVIR}  & $x$&  $0.001$& $10\%$\\
\bottomrule
\end{tabular}
\end{table}

\begin{figure}[h]
  \centering
  \includegraphics[width=\linewidth]{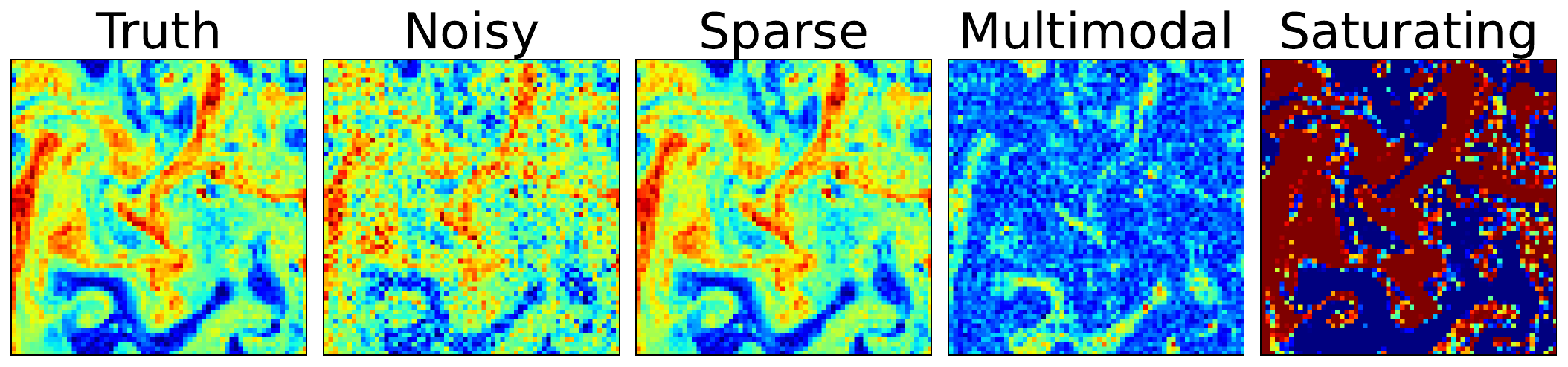}
  \caption{Visualization of the observations for each configuration before sparsity is applied.}
  \label{fig:obs}
\end{figure}

\subsection{SEVIR}
\textbf{Data and split.} We use the \texttt{SEVIR} VIL low-resolution product ($128\times128$, 25 frames per trajectory at \SI{10}{min} intervals). Following \citet{chen2025flowdas}, events after 2019-06-01 form the test set; earlier events are split into training and validation by a seeded random \SI{10}{\percent} holdout. This gives 13{,}434 / 1{,}492 / 4{,}053 train / validation / test events. We report results on 10 test trajectories chosen randomly from the test set.

\textbf{Normalization.} All networks trained here use the FlowDAS normalization $(\mathrm{VIL}/255-0.5)/0.1$ except for DAISI's pretrained prior which was trained on $\mathrm{VIL}/255$. All metrics are computed in raw VIL units ($0$--$255$), so scores are comparable across methods whose networks use different normalizations.

\textbf{Initialization.} The first 6 frames of each trajectory are given as initialization without any noise following \citet{chen2025flowdas}, and the remaining 19 frames are assimilated. Filters are scored over the last 10 steps and smoothers over all 19.

\section{Additional results}\label{apx:additional_results}
This appendix extends the CRPS evaluation in \cref{tab:crps} with the RMSE (\cref{tab:rmse}) and the spread-skill ratio (SSR, \cref{tab:ssr}), and breaks the scores down by setting, over time, by calibration and by spectral content. The RMSE ranking closely follows the CRPS ranking. On SQG, DAWIS Filter, DAWIS Lagged Smoother and DAWIS Block Smoother are the best, or tied for best, in their category in every setting. On \texttt{SEVIR}, the DAWIS Block Smoother has the lowest RMSE overall. The SSR shows that the DAWIS filters stay close to calibration on SQG. SDA-Filter and EnSF are clearly underdispersive in most settings. The DAWIS lagged smoothers are mildly overdispersive on \texttt{Saturating}. On \texttt{SEVIR}, all DAWIS variants are overdispersive  and the SSR varies widely between trajectories.

\begin{table}[H]
\centering
\small
\caption{The RMSE for experiments on SQG and \texttt{SEVIR}.
We display the mean and standard deviation across 10 independent trajectories, averaged over the last 20 (10 for \texttt{SEVIR}) steps.
The best score for each experiment is highlighted in \textbf{bold} and the second best with an \underline{underline}.
Smoothing methods calculate the metrics over the whole trajectory.}
\label{tab:rmse}
\setlength{\tabcolsep}{4pt}
\begin{tabular}{l c c c c c}
\toprule
\textbf{Method}
& \textbf{Noisy}
& \textbf{Sparse}
& \textbf{Multimodal}
& \textbf{Saturating}
& \textbf{SEVIR} \\

\midrule
\multicolumn{6}{l}{\textit{Filtering}} \\
\midrule

EnSF \cite{bao2024ensemble}
& $7.80{\scriptstyle \pm1.07}$
& $6.95{\scriptstyle \pm0.82}$
& $3.31{\scriptstyle \pm1.95}$
& $8.80{\scriptstyle \pm2.39}$
& $35.12{\scriptstyle \pm26.68}$ \\

FlowDAS \cite{chen2025flowdas}
& $4.87{\scriptstyle \pm0.79}$
& $6.09{\scriptstyle \pm0.94}$
& $4.84{\scriptstyle \pm0.81}$
& $6.77{\scriptstyle \pm0.61}$
& $27.31{\scriptstyle \pm13.16}$ \\

LETKF \cite{liang2025ensemble}
& $2.49{\scriptstyle \pm0.26}$
& $4.31{\scriptstyle \pm0.47}$
& $8.97{\scriptstyle \pm2.04}$
& $4.19{\scriptstyle \pm1.86}$
& $12.19{\scriptstyle \pm7.04}$ \\

DAISI \cite{andrae2026daisi}
& $2.42{\scriptstyle \pm0.25}$
& $3.31{\scriptstyle \pm0.40}$
& $2.99{\scriptstyle \pm0.19}$
& $3.65{\scriptstyle \pm0.99}$
& $11.02{\scriptstyle \pm6.10}$ \\

\addlinespace[0.2em]
\hdashline
\addlinespace[0.2em]

SDA-Filter
& $2.36{\scriptstyle \pm0.22}$
& $3.16{\scriptstyle \pm0.40}$
& $3.69{\scriptstyle \pm0.57}$
& $5.98{\scriptstyle \pm0.77}$
& $\underline{10.05}{\scriptstyle \pm5.00}$ \\

Joint AR 1$|$W \cite{shysheya2024on}
& $6.89{\scriptstyle \pm0.67}$
& $7.37{\scriptstyle \pm0.90}$
& $7.22{\scriptstyle \pm0.70}$
& $6.86{\scriptstyle \pm0.78}$
& $16.19{\scriptstyle \pm7.84}$ \\

Joint AR W$|$1 \cite{shysheya2024on}
& $2.28{\scriptstyle \pm0.22}$
& $3.23{\scriptstyle \pm0.31}$
& $\underline{2.22}{\scriptstyle \pm0.16}$
& $2.23{\scriptstyle \pm0.27}$
& $11.10{\scriptstyle \pm6.50}$ \\

DAWIS Filter
& $\mathbf{1.91}{\scriptstyle \pm0.18}$
& $\mathbf{2.40}{\scriptstyle \pm0.28}$
& $\mathbf{1.96}{\scriptstyle \pm0.15}$
& $\mathbf{1.71}{\scriptstyle \pm0.23}$
& $10.14{\scriptstyle \pm5.08}$ \\

DAWIS-Joint Filter
& $\underline{2.20}{\scriptstyle \pm0.23}$
& $\underline{2.47}{\scriptstyle \pm0.30}$
& $2.29{\scriptstyle \pm0.24}$
& $\underline{1.93}{\scriptstyle \pm0.29}$
& $\mathbf{9.97}{\scriptstyle \pm5.42}$ \\

\midrule
\multicolumn{6}{l}{\textit{Fixed-lag Smoothing}} \\
\midrule

SDA-Filter
& $2.43{\scriptstyle \pm0.13}$
& $2.73{\scriptstyle \pm0.21}$
& $3.14{\scriptstyle \pm0.42}$
& $4.91{\scriptstyle \pm0.60}$
& $10.94{\scriptstyle \pm5.83}$ \\

ForcingDAS-Pyr \cite{jia2026forcingdasunifiedrobustdata}
& $2.55{\scriptstyle \pm0.14}$
& $2.53{\scriptstyle \pm0.21}$
& $2.03{\scriptstyle \pm0.17}$
& $2.12{\scriptstyle \pm0.27}$
& $9.52{\scriptstyle \pm5.94}$ \\

Joint AR W$|$1 \cite{shysheya2024on}
& $2.01{\scriptstyle \pm0.09}$
& $2.53{\scriptstyle \pm0.10}$
& $\underline{1.94}{\scriptstyle \pm0.08}$
& $1.98{\scriptstyle \pm0.15}$
& $9.73{\scriptstyle \pm6.28}$ \\

DAWIS Lagged Smoother
& $\mathbf{1.72}{\scriptstyle \pm0.08}$
& $\mathbf{2.02}{\scriptstyle \pm0.11}$
& $\mathbf{1.75}{\scriptstyle \pm0.07}$
& $\mathbf{1.48}{\scriptstyle \pm0.10}$
& $\mathbf{9.05}{\scriptstyle \pm5.24}$ \\

DAWIS-Joint Lagged Smoother
& $\underline{1.98}{\scriptstyle \pm0.11}$
& $\underline{2.08}{\scriptstyle \pm0.12}$
& $2.04{\scriptstyle \pm0.11}$
& $\underline{1.62}{\scriptstyle \pm0.13}$
& $\underline{9.09}{\scriptstyle \pm5.31}$ \\

\midrule
\multicolumn{6}{l}{\textit{Joint Smoothing}} \\
\midrule

SDA \cite{rozet2023score}
& $1.92{\scriptstyle \pm0.12}$
& $\underline{1.94}{\scriptstyle \pm0.11}$
& $\underline{5.63}{\scriptstyle \pm0.59}$
& $2.11{\scriptstyle \pm0.10}$
& $9.89{\scriptstyle \pm5.78}$ \\

Block Gibbs Smoother
& $\underline{1.70}{\scriptstyle \pm0.10}$
& $\mathbf{1.92}{\scriptstyle \pm0.12}$
& $\mathbf{1.68}{\scriptstyle \pm0.09}$
& $\underline{1.66}{\scriptstyle \pm0.15}$
& $\underline{9.27}{\scriptstyle \pm5.63}$ \\

DAWIS Block Smoother
& $\mathbf{1.67}{\scriptstyle \pm0.09}$
& $\mathbf{1.92}{\scriptstyle \pm0.12}$
& $\mathbf{1.68}{\scriptstyle \pm0.08}$
& $\mathbf{1.36}{\scriptstyle \pm0.11}$
& $\mathbf{8.93}{\scriptstyle \pm5.45}$ \\

\bottomrule
\end{tabular}
\end{table}

\begin{table}[H]
\centering
\small
\caption{The SSR for experiments on SQG and \texttt{SEVIR}.
We display the mean and standard deviation across 10 independent trajectories, averaged over the last 20 (10 for \texttt{SEVIR}) steps.
The best score for each experiment is highlighted in \textbf{bold} and the second best with an \underline{underline}.
Smoothing methods calculate the metrics over the whole trajectory.}
\label{tab:ssr}
\setlength{\tabcolsep}{4pt}
\begin{tabular}{l c c c c c}
\toprule
\textbf{Method}
& \textbf{Noisy}
& \textbf{Sparse}
& \textbf{Multimodal}
& \textbf{Saturating}
& \textbf{SEVIR} \\

\midrule
\multicolumn{6}{l}{\textit{Filtering}} \\
\midrule

EnSF \cite{bao2024ensemble}
& $0.43{\scriptstyle \pm0.05}$
& $0.49{\scriptstyle \pm0.05}$
& $1.24{\scriptstyle \pm0.33}$
& $0.48{\scriptstyle \pm0.39}$
& $0.35{\scriptstyle \pm0.21}$ \\

FlowDAS \cite{chen2025flowdas}
& $0.66{\scriptstyle \pm0.07}$
& $\underline{0.96}{\scriptstyle \pm0.06}$
& $0.87{\scriptstyle \pm0.11}$
& $\mathbf{1.00}{\scriptstyle \pm0.09}$
& $0.53{\scriptstyle \pm0.23}$ \\

LETKF \cite{liang2025ensemble}
& $\mathbf{1.00}{\scriptstyle \pm0.05}$
& $1.05{\scriptstyle \pm0.07}$
& $0.78{\scriptstyle \pm0.07}$
& $0.63{\scriptstyle \pm0.12}$
& $1.16{\scriptstyle \pm0.40}$ \\

DAISI \cite{andrae2026daisi}
& $0.89{\scriptstyle \pm0.03}$
& $\underline{1.04}{\scriptstyle \pm0.04}$
& $1.04{\scriptstyle \pm0.19}$
& $1.34{\scriptstyle \pm0.18}$
& $\underline{0.92}{\scriptstyle \pm0.19}$ \\

\addlinespace[0.2em]
\hdashline
\addlinespace[0.2em]

SDA-Filter
& $0.72{\scriptstyle \pm0.02}$
& $0.60{\scriptstyle \pm0.06}$
& $0.64{\scriptstyle \pm0.06}$
& $0.43{\scriptstyle \pm0.07}$
& $1.34{\scriptstyle \pm0.82}$ \\

Joint AR 1$|$W \cite{shysheya2024on}
& $1.09{\scriptstyle \pm0.07}$
& $1.09{\scriptstyle \pm0.11}$
& $\underline{1.03}{\scriptstyle \pm0.07}$
& $1.28{\scriptstyle \pm0.10}$
& $\underline{0.92}{\scriptstyle \pm0.08}$ \\

Joint AR W$|$1 \cite{shysheya2024on}
& $0.74{\scriptstyle \pm0.04}$
& $\mathbf{0.97}{\scriptstyle \pm0.03}$
& $0.86{\scriptstyle \pm0.04}$
& $0.66{\scriptstyle \pm0.04}$
& $\mathbf{0.95}{\scriptstyle \pm0.34}$ \\

DAWIS Filter
& $0.89{\scriptstyle \pm0.01}$
& $1.09{\scriptstyle \pm0.06}$
& $0.96{\scriptstyle \pm0.01}$
& $1.11{\scriptstyle \pm0.04}$
& $1.29{\scriptstyle \pm0.81}$ \\

DAWIS-Joint Filter
& $\underline{0.92}{\scriptstyle \pm0.03}$
& $1.07{\scriptstyle \pm0.05}$
& $\mathbf{0.98}{\scriptstyle \pm0.03}$
& $\underline{1.07}{\scriptstyle \pm0.03}$
& $1.36{\scriptstyle \pm0.89}$ \\

\midrule
\multicolumn{6}{l}{\textit{Fixed-lag Smoothing}} \\
\midrule

SDA-Filter
& $0.74{\scriptstyle \pm0.01}$
& $0.61{\scriptstyle \pm0.05}$
& $0.64{\scriptstyle \pm0.06}$
& $0.51{\scriptstyle \pm0.08}$
& $1.16{\scriptstyle \pm0.63}$ \\

ForcingDAS-Pyr \cite{jia2026forcingdasunifiedrobustdata}
& $0.90{\scriptstyle \pm0.02}$
& $\underline{1.02}{\scriptstyle \pm0.02}$
& $\mathbf{0.97}{\scriptstyle \pm0.03}$
& $\mathbf{1.02}{\scriptstyle \pm0.02}$
& $\mathbf{0.97}{\scriptstyle \pm0.16}$ \\

Joint AR W$|$1 \cite{shysheya2024on}
& $0.77{\scriptstyle \pm0.04}$
& $\mathbf{0.99}{\scriptstyle \pm0.02}$
& $0.89{\scriptstyle \pm0.04}$
& $0.71{\scriptstyle \pm0.02}$
& $\underline{1.06}{\scriptstyle \pm0.50}$ \\

DAWIS Lagged Smoother
& $\underline{0.99}{\scriptstyle \pm0.01}$
& $1.15{\scriptstyle \pm0.03}$
& $\underline{1.06}{\scriptstyle \pm0.01}$
& $1.21{\scriptstyle \pm0.02}$
& $1.25{\scriptstyle \pm0.48}$ \\

DAWIS-Joint Lagged Smoother
& $\mathbf{1.00}{\scriptstyle \pm0.02}$
& $1.15{\scriptstyle \pm0.03}$
& $\underline{1.06}{\scriptstyle \pm0.02}$
& $\underline{1.16}{\scriptstyle \pm0.02}$
& $1.27{\scriptstyle \pm0.50}$ \\

\midrule
\multicolumn{6}{l}{\textit{Joint Smoothing}} \\
\midrule

SDA \cite{rozet2023score}
& $0.69{\scriptstyle \pm0.01}$
& $0.73{\scriptstyle \pm0.01}$
& $0.93{\scriptstyle \pm0.07}$
& $0.63{\scriptstyle \pm0.03}$
& $1.48{\scriptstyle \pm0.99}$ \\

Block Gibbs Smoother
& $\underline{0.89}{\scriptstyle \pm0.01}$
& $\mathbf{1.03}{\scriptstyle \pm0.04}$
& $\underline{0.94}{\scriptstyle \pm0.01}$
& $\mathbf{1.00}{\scriptstyle \pm0.02}$
& $\mathbf{1.16}{\scriptstyle \pm0.59}$ \\

DAWIS Block Smoother
& $\mathbf{0.92}{\scriptstyle \pm0.01}$
& $\underline{1.06}{\scriptstyle \pm0.04}$
& $\mathbf{0.95}{\scriptstyle \pm0.01}$
& $\underline{1.12}{\scriptstyle \pm0.02}$
& $\underline{1.31}{\scriptstyle \pm0.67}$ \\

\bottomrule
\end{tabular}
\end{table}

\subsection{Scorecards}
\Cref{fig:scorecard-crps,fig:scorecard-rmse} show the relative improvement of each method over a reference. The reference is DAWIS Filter for filtering and DAWIS Lagged Smoother for both smoothing categories, so a negative entry means the method is worse than DAWIS.  First, the margins on SQG are large: the strongest filtering baseline, Joint AR W$|$1, is still $16$--$39\%$ worse than DAWIS Filter in CRPS. On \texttt{SEVIR} the gaps close, and SDA-Filter and DAWIS-Joint Filter are slightly better than the reference. Second, the joint-smoothing rows show what the block sweep adds on top of its initialization. The DAWIS Block Smoother improves by $3$--$11\%$, most on \texttt{Saturating}. The Block Gibbs Smoother, which regenerates from pure noise, makes the initialization worse by $14\%$ on \texttt{Saturating} and $6\%$ on \texttt{SEVIR}.

\begin{figure}[H]
  \centering
  \includegraphics[width=\linewidth]{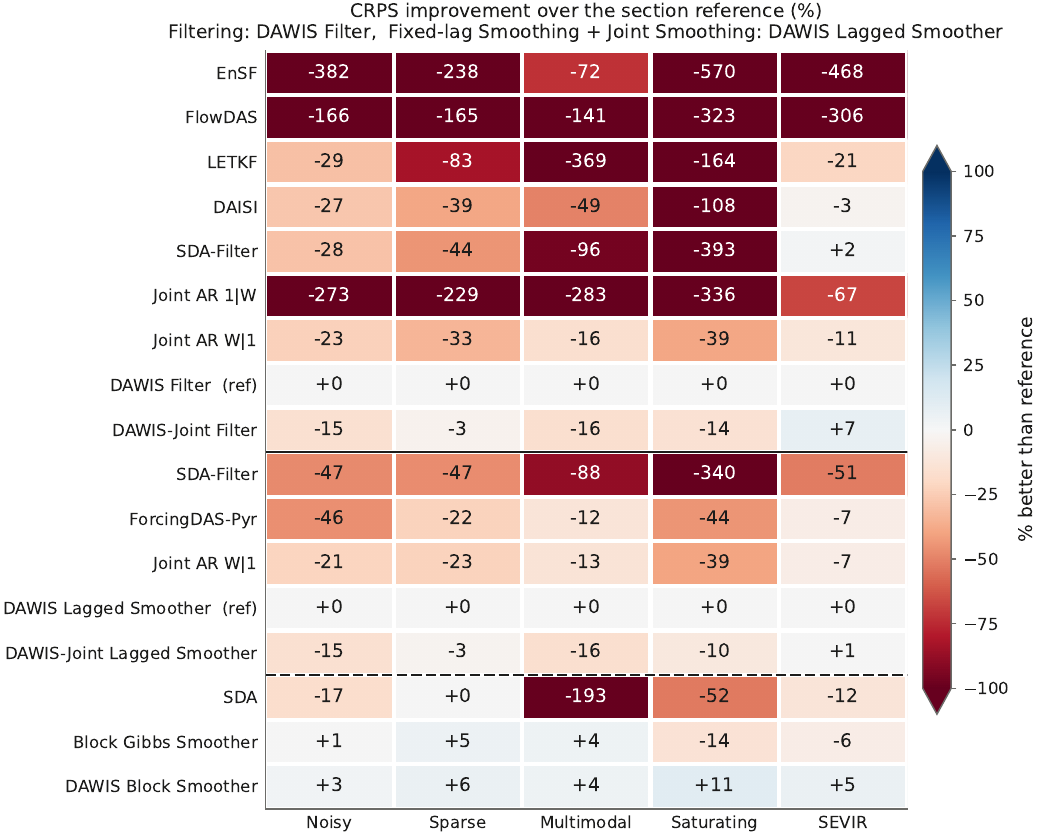}
  \caption{%
    CRPS scorecard against DAWIS.
  }
  \label{fig:scorecard-crps}
\end{figure}

\begin{figure}[H]
  \centering
  \includegraphics[width=\linewidth]{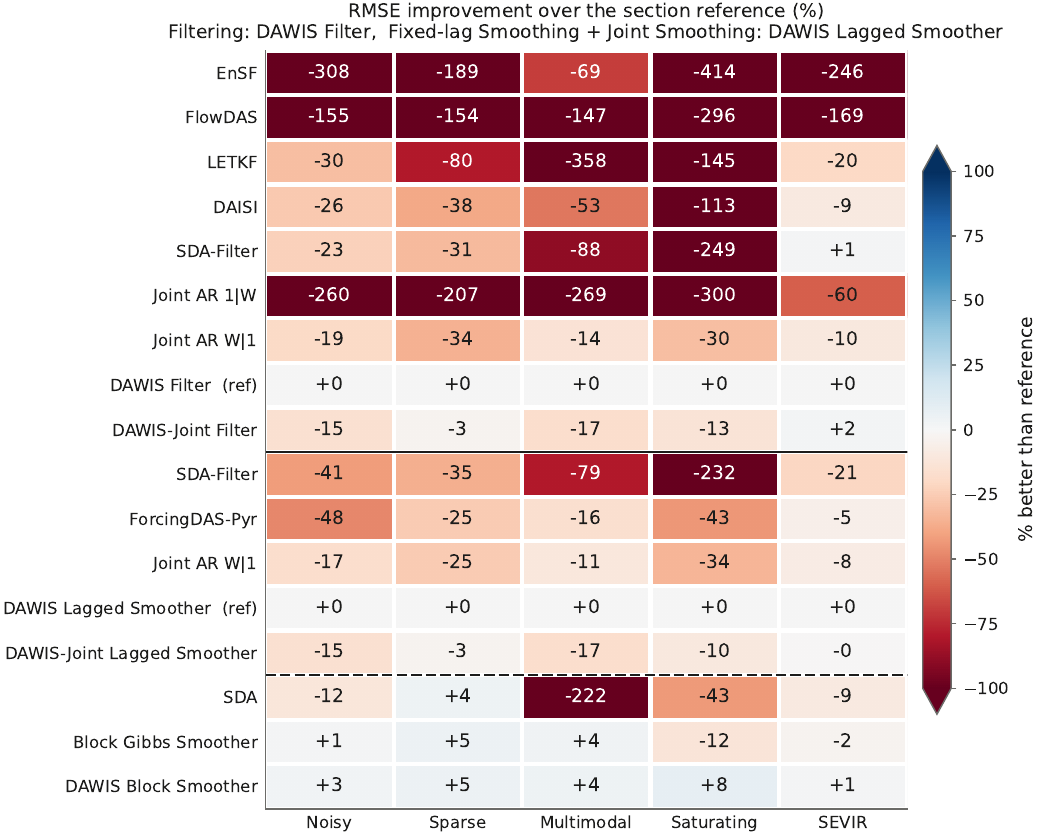}
  \caption{%
    RMSE scorecard against DAWIS.
  }
  \label{fig:scorecard-rmse}
\end{figure}

\subsection{Scores over time -- filtering}
\Cref{fig:crps-time-filtering,fig:rmse-time-filtering} show the analysis error at every assimilation step, averaged over the 10 trajectories. After a spin-up of roughly $50$\,h, both DAWIS filters settle at a constant error level and stay there for the rest of the trajectory. This supports averaging over the last 20 steps in \cref{tab:crps}. Several baselines instead drift. EnSF, FlowDAS and Joint AR 1$|$W accumulate error throughout the trajectory in every SQG setting. LETKF diverges within the first $\sim$50\,h on \texttt{Multimodal}. On \texttt{Saturating}, LETKF and DAISI start to degrade late in the trajectory.

\begin{figure}[H]
  \centering
  \includegraphics[width=\linewidth]{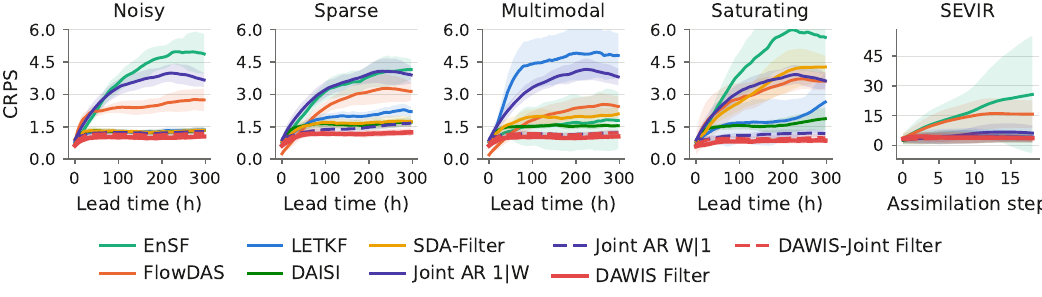}
  \caption{%
    The CRPS over the assimilation window for the filtering
    methods.
  }
  \label{fig:crps-time-filtering}
\end{figure}

\begin{figure}[H]
  \centering
  \includegraphics[width=\linewidth]{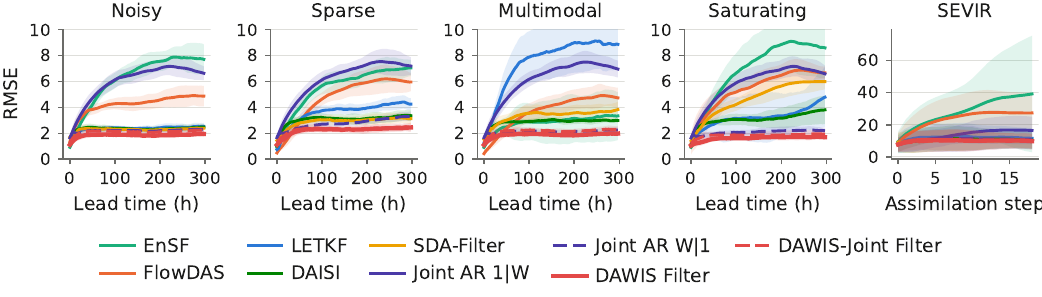}
  \caption{%
    The RMSE over the assimilation window for the filtering
    methods.
  }
  \label{fig:rmse-time-filtering}
\end{figure}

\subsection{Scores over time -- smoothing}
\Cref{fig:crps-time-smoothing,fig:rmse-time-smoothing} show the error at every assimilation step, averaged over the 10 trajectories for the smoothers. The DAWIS lagged smoothers have flat error over the whole trajectory. All fixed-lag smoothers show an increase in error over the last few steps, where fewer than $\ell$ future observations are available and the smoother reduces to a filter. \Cref{fig:filter-vs-smoother} isolates the gain from look-ahead within a single DAWIS run. After spin-up, the lagged estimate is below the filter estimate at every step, and the gap closes at the final step. This gain comes at no extra cost, because both estimates come from the same cycle.

\begin{figure}[H]
  \centering
  \includegraphics[width=\linewidth]{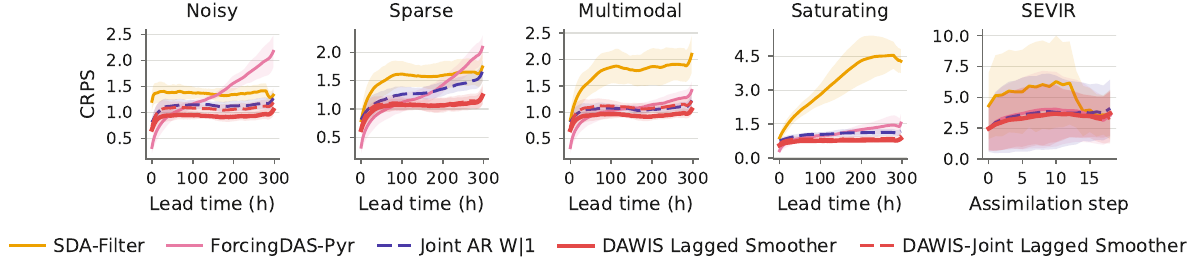}
  \caption{%
    The CRPS over the assimilation window for all smoothing
    methods.
  }
  \label{fig:crps-time-smoothing}
\end{figure}

\begin{figure}[H]
  \centering
  \includegraphics[width=\linewidth]{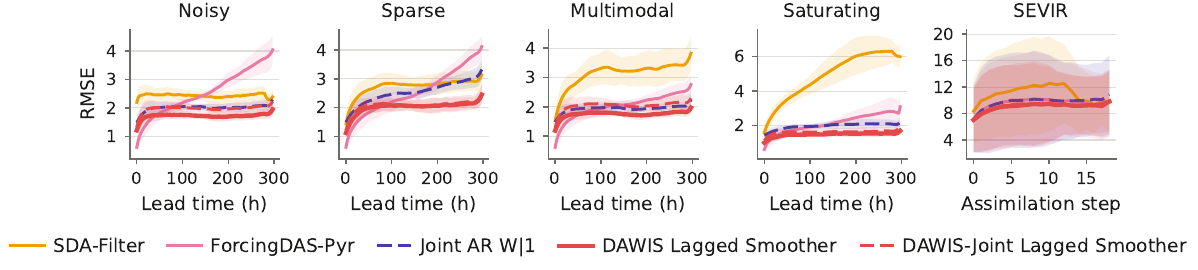}
  \caption{%
    The RMSE over the assimilation window for all smoothing
    methods.
  }
  \label{fig:rmse-time-smoothing}
\end{figure}

\begin{figure}[H]
  \centering
  \includegraphics[width=\linewidth]{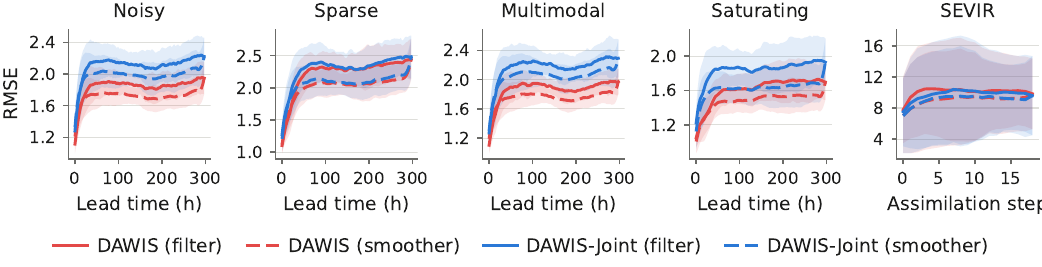}
  \caption{%
    The improvement in RMSE gained by using the lagged smoother for the DAWIS experiments.
  }
  \label{fig:filter-vs-smoother}
\end{figure}

\subsection{Calibration}
We assess calibration with the spread-skill ratio over time (\cref{fig:ssr-time-filtering}) and with verification-rank histograms. A calibrated ensemble has SSR $\approx 1$ and a flat histogram. SSR $<1$ with a U-shaped histogram indicates underdispersion, and SSR $>1$ with a dome-shaped histogram indicates overdispersion. All ensembles start overdispersive. After this transient, DAWIS stays close to one for the whole trajectory in every SQG setting. SDA-Filter and EnSF settle at $0.4$--$0.7$, and DAISI and Joint AR 1$|$W drift towards overdispersion on \texttt{Saturating}. The rank histograms agree. The DAWIS histograms are nearly flat, with only slightly raised end bins. On \texttt{Multimodal}, LETKF, FlowDAS, Joint AR W$|$1 and especially SDA-Filter are U-shaped, and EnSF is dome-shaped.

\begin{figure}[H]
  \centering
  \includegraphics[width=\linewidth]{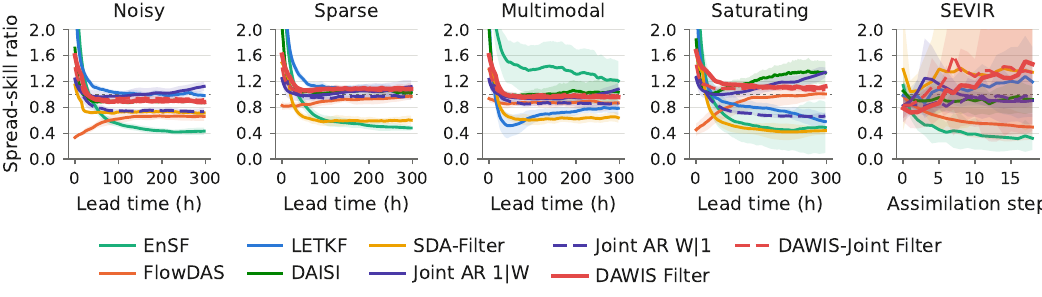}
  \caption{%
    Spread-skill ratio of the filtering methods against its ideal
    value of one.
  }
  \label{fig:ssr-time-filtering}
\end{figure}

\begin{figure}[H]
  \centering
  \includegraphics[width=\linewidth]{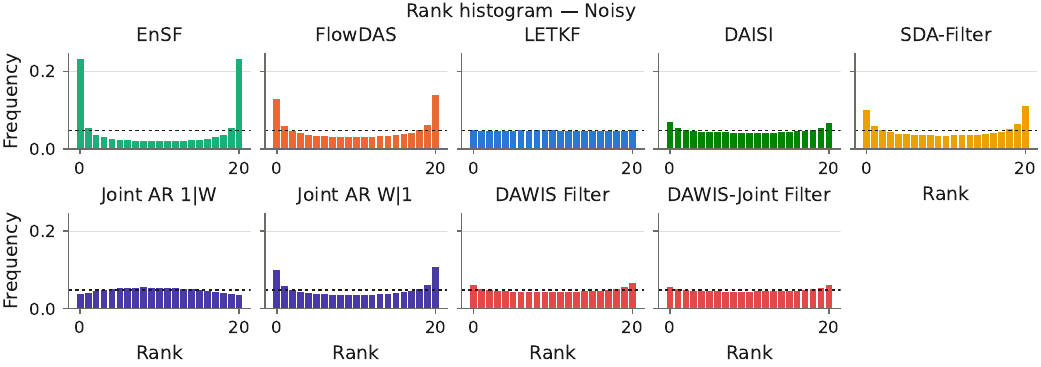}
  \caption{%
    Rank histograms of the filtering methods for \texttt{Noisy}.
  }
  \label{fig:rank-filtering-noisy}
\end{figure}

\begin{figure}[H]
  \centering
  \includegraphics[width=\linewidth]{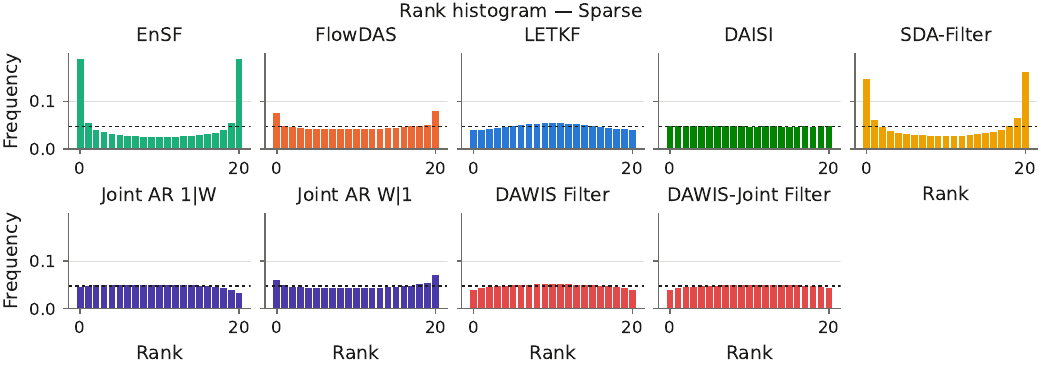}
  \caption{%
    Rank histograms of the filtering methods for \texttt{Sparse}.
  }
  \label{fig:rank-filtering-sparse}
\end{figure}

\begin{figure}[H]
  \centering
  \includegraphics[width=\linewidth]{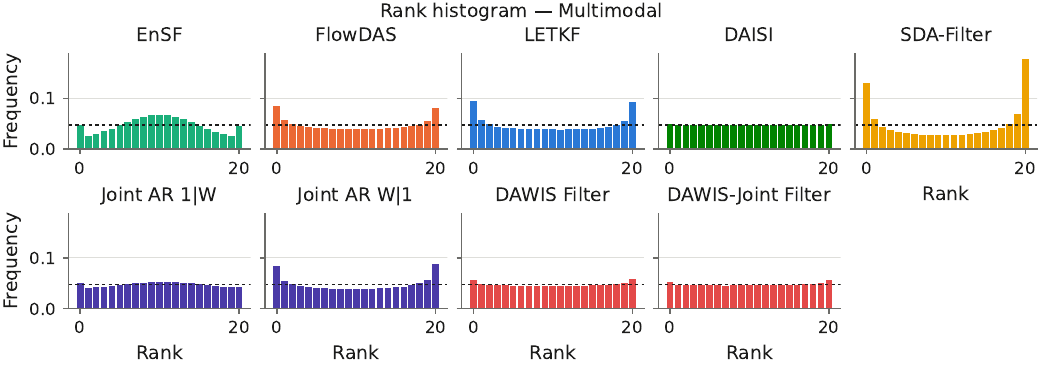}
  \caption{%
    Rank histograms of the filtering methods for \texttt{Multimodal}.
  }
  \label{fig:rank-filtering-multimodal}
\end{figure}

\begin{figure}[H]
  \centering
  \includegraphics[width=\linewidth]{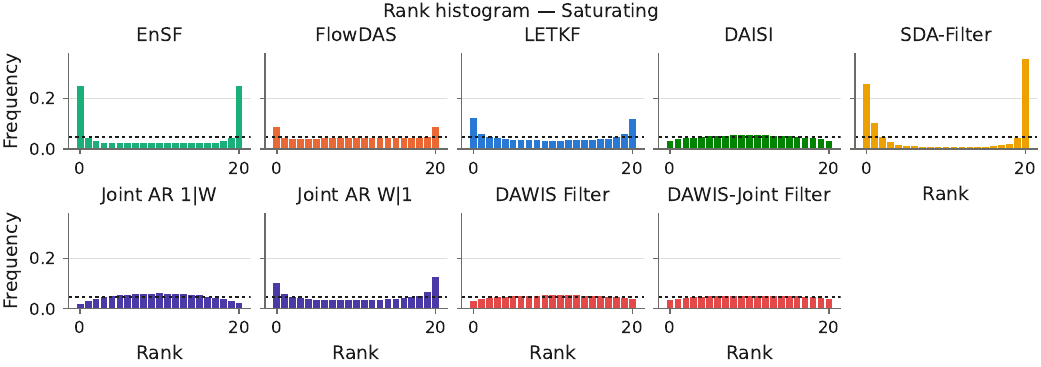}
  \caption{%
    Rank histograms of the filtering methods for \texttt{Saturating}.
  }
  \label{fig:rank-filtering-saturating}
\end{figure}

\begin{figure}[H]
  \centering
  \includegraphics[width=\linewidth]{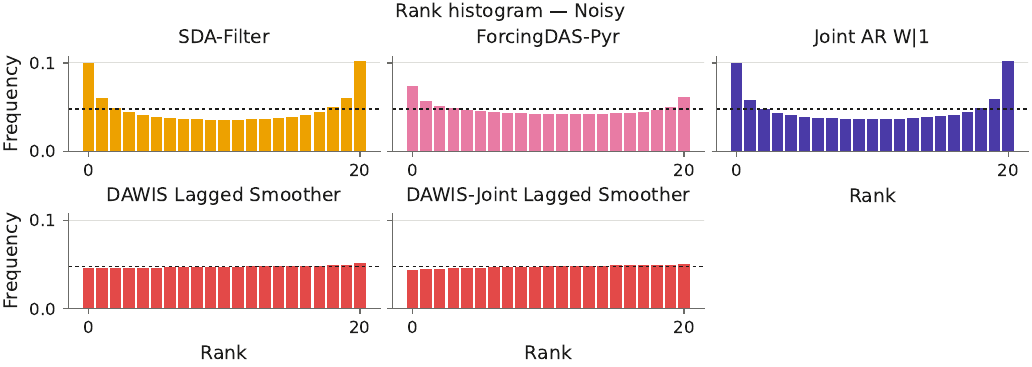}
  \caption{%
    Rank histograms of the smoothing methods for \texttt{Noisy}.
  }
  \label{fig:rank-smoothing-noisy}
\end{figure}

\begin{figure}[H]
  \centering
  \includegraphics[width=\linewidth]{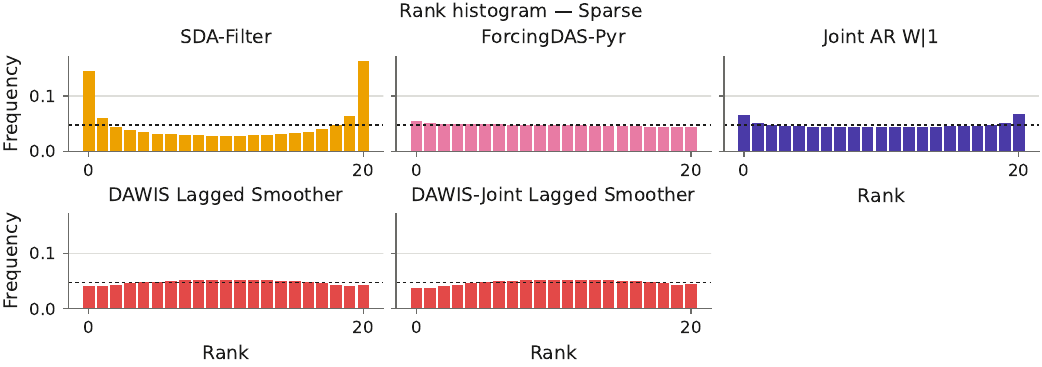}
  \caption{%
    Rank histograms of the smoothing methods for \texttt{Sparse}.
  }
  \label{fig:rank-smoothing-sparse}
\end{figure}

\begin{figure}[H]
  \centering
  \includegraphics[width=\linewidth]{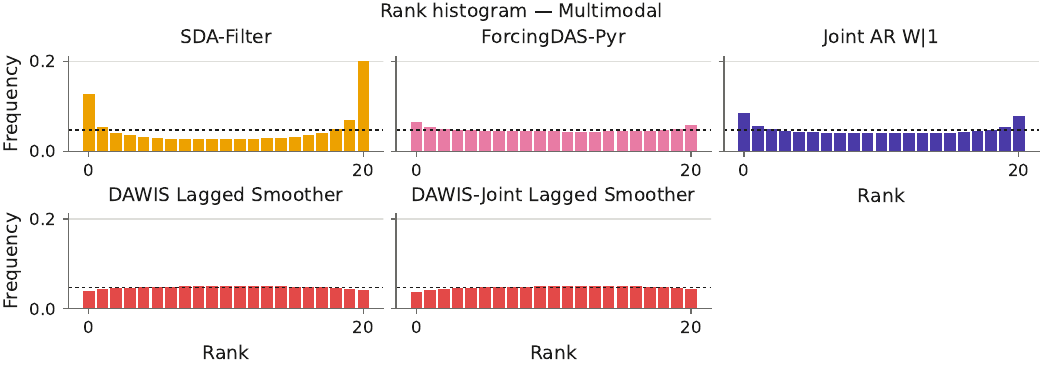}
  \caption{%
    Rank histograms of the smoothing methods for \texttt{Multimodal}.
  }
  \label{fig:rank-smoothing-multimodal}
\end{figure}

\begin{figure}[H]
  \centering
  \includegraphics[width=\linewidth]{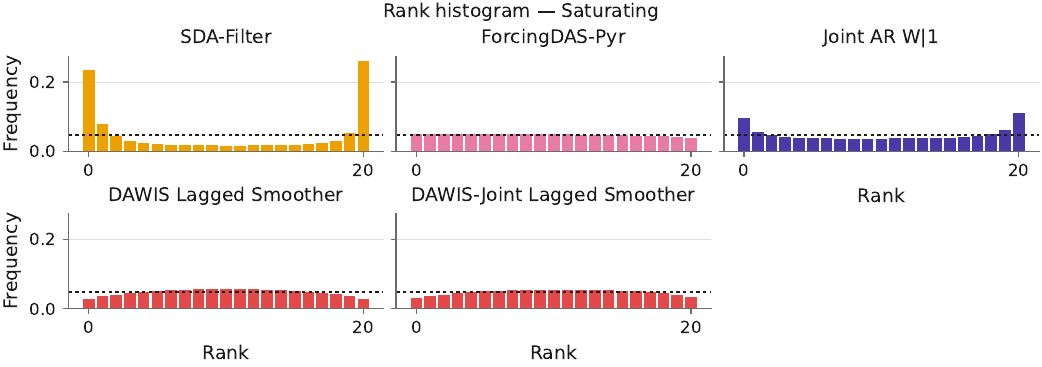}
  \caption{%
    Rank histograms of the smoothing methods for \texttt{Saturating}.
  }
  \label{fig:rank-smoothing-saturating}
\end{figure}

\subsection{Power spectra}
Pointwise scores such as CRPS cannot tell whether a method blurs the fields or adds spurious small-scale noise. The radially averaged power spectra can (\cref{fig:spectra-filtering,fig:spectra-smoothing}), and the log spectral distance (LSD, \cref{fig:lsd-time-filtering}) is used to aggregate and tracks the discrepancy over time. DAWIS matches the true spectrum across the resolved range in every setting. EnSF and FlowDAS have too much power at high wavenumbers on SQG. LETKF has too much power at intermediate and high wavenumbers on \texttt{Multimodal}, consistent with the grid-scale noise in \cref{fig:evolution-multimodal}. SDA-Filter has too little power at all scales on \texttt{Saturating}. On \texttt{SEVIR}, EnSF underestimates the large scales and overestimates the small ones. The LSD shows that these errors are not transient. The spectral error grows over time for LETKF on \texttt{Multimodal}, FlowDAS and Joint AR 1$|$W on \texttt{Sparse}, and SDA-Filter on \texttt{Saturating}, while DAWIS stays lowest and constant. After smoothing, every method reproduces the spectrum except SDA-Filter, which lacks power on \texttt{Saturating} and has too much at the smallest \texttt{SEVIR} scales.

\begin{figure}[H]
  \centering
  \includegraphics[width=\linewidth]{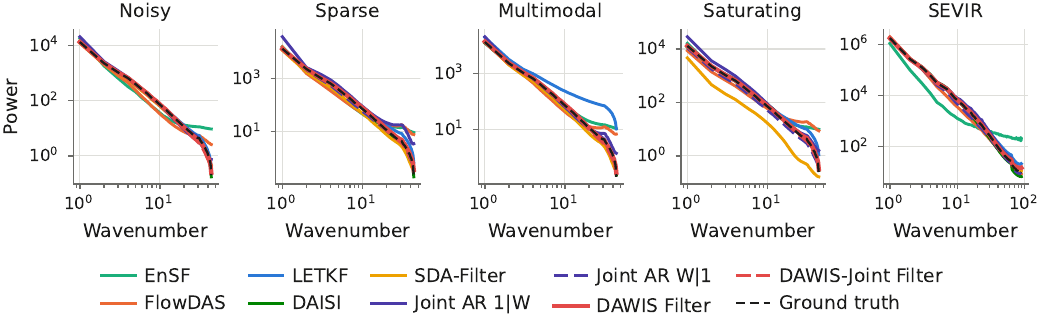}
  \caption{%
    Radially averaged power spectra of the filtering ensemble.
  }
  \label{fig:spectra-filtering}
\end{figure}

\begin{figure}[H]
  \centering
  \includegraphics[width=\linewidth]{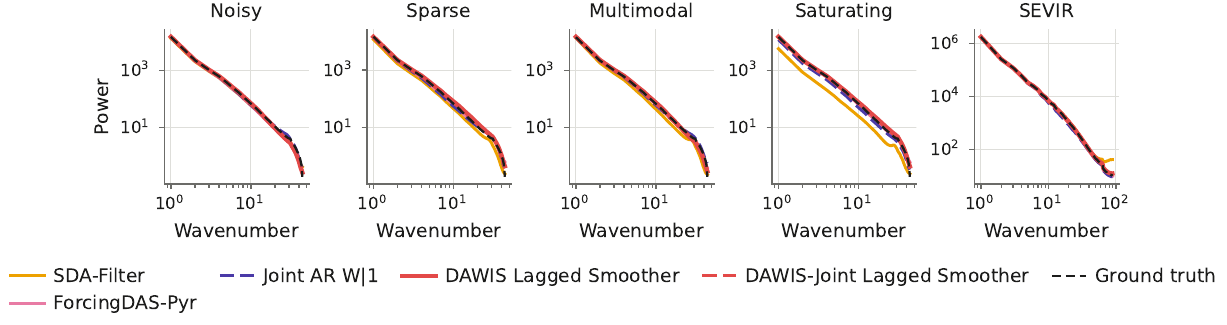}
  \caption{%
    Radially averaged power spectra of the lagged smoothing ensemble.
  }
  \label{fig:spectra-smoothing}
\end{figure}

\begin{figure}[H]
  \centering
  \includegraphics[width=\linewidth]{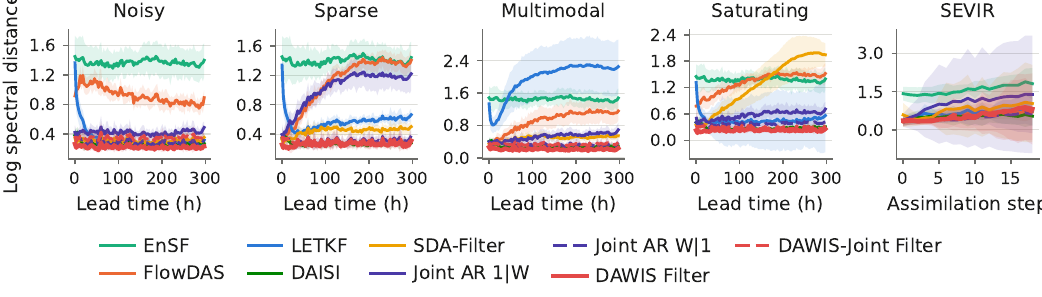}
  \caption{%
    Log spectral distance over the assimilation window for the filtering methods.
  }
  \label{fig:lsd-time-filtering}
\end{figure}

\section{Ablation Experiments}
Here we show ablation experiments. In the main SQG experiments, the forecast model is the model that generated the data, so \cref{sec:model_error} tests robustness to a misspecified forward model. \Cref{sec:dawis_soft_block} relaxes the fixed boundary of the block smoother. In both cases DAWIS degrades gracefully. Under model error, the DAWIS Filter CRPS rises by at most $0.15$. DAWIS-Joint does not integrate a forecast model and stays at or below its well-specified CRPS, overtaking DAWIS on \texttt{Sparse} and \texttt{Saturating}.

\subsection{Comparison with En4DVar under Model error}\label{sec:model_error}
Variational methods such as 4DVar optimize the initial state of an assimilation window so that the forecast model's trajectory fits all observations in the window. With an exact adjoint this is highly effective, but it yields a single MAP trajectory. The Ensemble of Data Assimilations \citep[EDA;][]{bonavita2012en4dvar} runs several 4DVar solves from perturbed backgrounds and observations to obtain an ensemble; we refer to this as \emph{En4DVar}.

\paragraph{Why model error.}
In the SQG experiments above, the numerical model available at test time is the model that generated the data. Methods that integrate the dynamics therefore have access to the exact governing equations, and 4DVar in particular can differentiate through them. In this perfect-model setting a strong-constraint 4DVar with an exact adjoint is close to optimal, which says little about the regime DA is used in, where the forecast model is imperfect. We therefore repeat the experiments with a misspecified model. The ground-truth trajectories are generated by an SQG simulation at $256 \times 256$ resolution and sprectrally truncated onto the $64 \times 64$ grid used for assimilation. Both the numerical forward model used by En4DVar and the trajectories used to train the DAWIS prior come from the $64 \times 64$ simulator, so the unresolved scales of the high-resolution run constitute model error that neither method has seen. Observation operators, noise levels and trajectories are otherwise identical to the corresponding experiments above; the resulting settings are marked (ds) in \cref{tab:ds}.

\paragraph{En4DVar implementation.}
Strong-constraint 4DVar minimizes, over the initial state $\vx_0$ of a window of $W$ steps,
\begin{align}
  J(\vx_0) = \tfrac12 (\vx_0 - \vx^{\rm b})^\top B^{-1} (\vx_0 - \vx^{\rm b})
           + \tfrac12 \sum_{t=0}^{W-1} \big\|\vy_t - H(M_t \vx_0)\big\|^2_{R^{-1}},
\end{align}
where $\vx^{\rm b}$ is the background, $M_t$ the forecast model integrated $t$ steps and $R = \sigma_y^2 \mat I$ \cite{bannister2008review}. Following \citet{solvik20254d}, the adjoint is obtained by backpropagation through the differentiable SQG solver, which is exact up to floating point; operational systems instead rely on approximate tangent-linear and adjoint models. The cost is minimized with L-BFGS. The background covariance $B$ is a static, homogeneous and isotropic spectral diagonal estimated from climatological anomalies \citep{bannister2008review}. We additionally tried a localized ensemble covariance of the kind used by LETKF, but got similar or worse results. Windows are cycled without overlap, the background of each window being the previous analysis propagated one step, and the analysis trajectory is the deterministic rollout from the optimized $\vx_0$. Consequently the first state of a window is estimated with $W - 1$ observations of look-ahead and the last with none, whereas the DAWIS Lagged Smoother has a constant lag of $w$. The ensemble has $J = 20$ members whose backgrounds are initialized as for LETKF. Observations are shared across members, so the ensemble spread stems from the initial perturbations alone. We use $W = w+1$ so that En4DVar and DAWIS see the same window of observations per cycle.

\paragraph{Results.}
The results in \cref{tab:ds} show that DAWIS consistently outperforms En4DVar in both CRPS and RMSE across all four observation settings. The DAWIS-Joint variants achieve comparable performance to the standard DAWIS variants while not requiring a separate forecasting model. The difference is particularly pronounced for the smoothing methods, where both DAWIS variants substantially reduce CRPS and RMSE relative to En4DVar. The SSR results further illustrate the effect of model misspecification on calibration. En4DVar is substantially underdispersive in the \texttt{Noisy} setting and overdispersive in \texttt{Sparse} and \texttt{Multimodal}, while the DAWIS variants remain closer to the target value of one across the different observation scenarios. Overall, these results indicate that DAWIS maintains accurate and comparatively well-calibrated uncertainty estimates even when the dynamics used during assimilation do not match those generating the data.

\begin{table}[h]
\centering
\small
\caption{The CRPS, RMSE and SSR under forward-model error.
The truth is a higher-resolution SQG run downsampled onto the $64\times64$ grid, so the model each method integrates is not the model that generated the data; the observation operator, the noise and the trajectories are those of the corresponding experiment above.
We display the mean and standard deviation across 10 independent trajectories, averaged over the last 20 steps for filtering methods and over the whole trajectory for smoothing methods.
The best score for each experiment is highlighted in \textbf{bold} and the second best with an \underline{underline}.
For SSR, best is closest to 1.}
\label{tab:ds}
\setlength{\tabcolsep}{4pt}
\begin{tabular}{l c c c c}
\toprule
\textbf{Method}
& \textbf{Noisy (ds)}
& \textbf{Sparse (ds)}
& \textbf{Multimodal (ds)}
& \textbf{Saturating (ds)} \\

\midrule
\multicolumn{5}{c}{\textbf{CRPS} $\downarrow$} \\
\midrule

\multicolumn{5}{l}{\textit{Filtering}} \\

DAWIS Filter
& $\mathbf{1.12}{\scriptstyle \pm0.14}$
& $\underline{1.22}{\scriptstyle \pm0.16}$
& $\mathbf{1.13}{\scriptstyle \pm0.13}$
& $\underline{1.01}{\scriptstyle \pm0.15}$ \\

DAWIS-Joint Filter
& $\underline{1.13}{\scriptstyle \pm0.13}$
& $\mathbf{1.19}{\scriptstyle \pm0.17}$
& $\underline{1.18}{\scriptstyle \pm0.13}$
& $\mathbf{0.95}{\scriptstyle \pm0.16}$ \\

\addlinespace[0.3em]
\multicolumn{5}{l}{\textit{Smoothing}} \\

En4DVar
& $1.35{\scriptstyle \pm0.04}$
& $1.33{\scriptstyle \pm0.17}$
& $1.16{\scriptstyle \pm0.04}$
& $0.94{\scriptstyle \pm0.08}$ \\

DAWIS Lagged Smoother
& $\mathbf{0.98}{\scriptstyle \pm0.05}$
& $\underline{1.06}{\scriptstyle \pm0.06}$
& $\mathbf{1.02}{\scriptstyle \pm0.05}$
& $\underline{0.83}{\scriptstyle \pm0.06}$ \\

DAWIS-Joint Lagged Smoother
& $\underline{1.04}{\scriptstyle \pm0.05}$
& $\mathbf{1.05}{\scriptstyle \pm0.06}$
& $\underline{1.10}{\scriptstyle \pm0.06}$
& $\mathbf{0.81}{\scriptstyle \pm0.07}$ \\

\midrule
\multicolumn{5}{c}{\textbf{RMSE} $\downarrow$} \\
\midrule

\multicolumn{5}{l}{\textit{Filtering}} \\

DAWIS Filter
& $\mathbf{2.08}{\scriptstyle \pm0.26}$
& $\underline{2.40}{\scriptstyle \pm0.33}$
& $\mathbf{2.15}{\scriptstyle \pm0.24}$
& $\underline{1.95}{\scriptstyle \pm0.28}$ \\

DAWIS-Joint Filter
& $\underline{2.13}{\scriptstyle \pm0.24}$
& $\mathbf{2.36}{\scriptstyle \pm0.35}$
& $\underline{2.27}{\scriptstyle \pm0.25}$
& $\mathbf{1.87}{\scriptstyle \pm0.31}$ \\

\addlinespace[0.3em]
\multicolumn{5}{l}{\textit{Smoothing}} \\

En4DVar
& $2.35{\scriptstyle \pm0.08}$
& $2.65{\scriptstyle \pm0.35}$
& $2.17{\scriptstyle \pm0.09}$
& $1.80{\scriptstyle \pm0.15}$ \\

DAWIS Lagged Smoother
& $\mathbf{1.84}{\scriptstyle \pm0.09}$
& $\underline{2.06}{\scriptstyle \pm0.12}$
& $\mathbf{1.94}{\scriptstyle \pm0.09}$
& $\underline{1.63}{\scriptstyle \pm0.10}$ \\

DAWIS-Joint Lagged Smoother
& $\underline{1.96}{\scriptstyle \pm0.10}$
& $\mathbf{2.04}{\scriptstyle \pm0.12}$
& $\underline{2.10}{\scriptstyle \pm0.12}$
& $\mathbf{1.59}{\scriptstyle \pm0.12}$ \\

\midrule
\multicolumn{5}{c}{\textbf{SSR} ($\to 1$)} \\
\midrule

\multicolumn{5}{l}{\textit{Filtering}} \\

DAWIS Filter
& $\underline{0.79}{\scriptstyle \pm0.01}$
& $\mathbf{1.06}{\scriptstyle \pm0.02}$
& $\underline{0.87}{\scriptstyle \pm0.03}$
& $\underline{0.92}{\scriptstyle \pm0.01}$ \\

DAWIS-Joint Filter
& $\mathbf{0.93}{\scriptstyle \pm0.02}$
& $\underline{1.09}{\scriptstyle \pm0.03}$
& $\mathbf{0.98}{\scriptstyle \pm0.03}$
& $\mathbf{1.07}{\scriptstyle \pm0.03}$ \\

\addlinespace[0.3em]
\multicolumn{5}{l}{\textit{Smoothing}} \\

En4DVar
& $0.77{\scriptstyle \pm0.01}$
& $1.36{\scriptstyle \pm0.03}$
& $1.15{\scriptstyle \pm0.07}$
& $\mathbf{1.00}{\scriptstyle \pm0.06}$ \\

DAWIS Lagged Smoother
& $\underline{0.92}{\scriptstyle \pm0.01}$
& $\mathbf{1.12}{\scriptstyle \pm0.01}$
& $\mathbf{0.98}{\scriptstyle \pm0.02}$
& $\underline{1.04}{\scriptstyle \pm0.01}$ \\

DAWIS-Joint Lagged Smoother
& $\mathbf{1.02}{\scriptstyle \pm0.02}$
& $\underline{1.17}{\scriptstyle \pm0.02}$
& $\underline{1.06}{\scriptstyle \pm0.03}$
& $1.14{\scriptstyle \pm0.01}$ \\

\bottomrule
\end{tabular}
\end{table}

\subsection{DAWIS Soft Block Smoother (adjust endpoints)}\label{sec:dawis_soft_block}
The DAWIS Block Smoother holds the two boundary states fixed, which is what makes each update an exact conditional sampling (\cref{prop:invariance}). A natural relaxation also lets the boundary move, inverting it to a turning point $\taumin^{\partial\mathcal B}<1$ (we use $0.9$) while the interior is inverted to $\taumin$ as before. The update is then no longer a Gibbs step on $\pi_{\mathcal B}$, so we make no invariance claim and instead evaluate empirical performance. \Cref{tab:block} shows that on SQG the soft variant performs within reported precision of the standard one on \texttt{Noisy}, \texttt{Sparse} and \texttt{Multimodal}, and slightly better on \texttt{Saturating} (CRPS $0.65$ vs.\ $0.68$). Holding the boundary therefore costs little in practice, while potentially improving empirical performance in certain settings.
\begin{table}[H]
\centering
\small
\caption{The CRPS, RMSE and SSR for experiments on SQG and \texttt{SEVIR}.
We display the mean and standard deviation across 10 independent trajectories, averaged over the whole trajectory.
The best score for each experiment is highlighted in \textbf{bold} and the second best with an \underline{underline}.
For SSR, best is closest to 1.}
\label{tab:block}
\setlength{\tabcolsep}{4pt}
\begin{tabular}{l c c c c c}
\toprule
\textbf{Method}
& \textbf{Noisy}
& \textbf{Sparse}
& \textbf{Multimodal}
& \textbf{Saturating}
& \textbf{SEVIR} \\

\midrule
\multicolumn{6}{c}{\textbf{CRPS} $\downarrow$} \\
\midrule

Block Gibbs Smoother
& $0.92{\scriptstyle \pm0.06}$
& $0.99{\scriptstyle \pm0.07}$
& $0.90{\scriptstyle \pm0.05}$
& $0.86{\scriptstyle \pm0.09}$
& $3.58{\scriptstyle \pm2.35}$ \\

DAWIS Block Smoother
& $0.90{\scriptstyle \pm0.05}$
& $0.98{\scriptstyle \pm0.06}$
& $0.89{\scriptstyle \pm0.05}$
& $0.68{\scriptstyle \pm0.06}$
& $3.18{\scriptstyle \pm2.17}$ \\

DAWIS Soft Block Smoother
& $0.91{\scriptstyle \pm0.05}$
& $0.97{\scriptstyle \pm0.06}$
& $0.89{\scriptstyle \pm0.05}$
& $0.65{\scriptstyle \pm0.06}$
& $3.15{\scriptstyle \pm2.19}$ \\

\midrule
\multicolumn{6}{c}{\textbf{RMSE} $\downarrow$} \\
\midrule

Block Gibbs Smoother
& $1.70{\scriptstyle \pm0.10}$
& $1.92{\scriptstyle \pm0.12}$
& $1.68{\scriptstyle \pm0.09}$
& $1.66{\scriptstyle \pm0.15}$
& $9.27{\scriptstyle \pm5.63}$ \\

DAWIS Block Smoother
& $1.67{\scriptstyle \pm0.09}$
& $1.92{\scriptstyle \pm0.12}$
& $1.68{\scriptstyle \pm0.08}$
& $1.36{\scriptstyle \pm0.11}$
& $8.93{\scriptstyle \pm5.45}$ \\

DAWIS Soft Block Smoother
& $1.68{\scriptstyle \pm0.09}$
& $1.90{\scriptstyle \pm0.12}$
& $1.67{\scriptstyle \pm0.08}$
& $1.32{\scriptstyle \pm0.11}$
& $8.92{\scriptstyle \pm5.47}$ \\

\midrule
\multicolumn{6}{c}{\textbf{SSR} ($\to 1$)} \\
\midrule

Block Gibbs Smoother
& $0.89{\scriptstyle \pm0.01}$
& $1.03{\scriptstyle \pm0.04}$
& $0.94{\scriptstyle \pm0.01}$
& $\mathbf{1.00}{\scriptstyle \pm0.02}$
& $1.16{\scriptstyle \pm0.59}$ \\

DAWIS Block Smoother
& $0.92{\scriptstyle \pm0.01}$
& $1.06{\scriptstyle \pm0.04}$
& $0.95{\scriptstyle \pm0.01}$
& $1.12{\scriptstyle \pm0.02}$
& $1.31{\scriptstyle \pm0.67}$ \\

DAWIS Soft Block Smoother
& $0.90{\scriptstyle \pm0.01}$
& $1.05{\scriptstyle \pm0.03}$
& $0.95{\scriptstyle \pm0.01}$
& $1.11{\scriptstyle \pm0.02}$
& $1.33{\scriptstyle \pm0.72}$ \\

\bottomrule
\end{tabular}
\end{table}

\section{State fields}
We show qualitative examples for each setting. For every method, the panels show the ensemble mean, one member, the ensemble spread and the absolute error of the mean: for filtering at the final assimilation step (\cref{fig:fields-filtering-noisy,fig:fields-filtering-sparse,fig:fields-filtering-multimodal,fig:fields-filtering-saturating,fig:fields-filtering-sevir}), and for smoothing halfway through the trajectory (\cref{fig:fields-smoothing-noisy,fig:fields-smoothing-sparse,fig:fields-smoothing-multimodal,fig:fields-smoothing-saturating,fig:fields-smoothing-sevir}). The failure modes seen in the aggregate scores show up clearly here. On \texttt{Multimodal}, LETKF's analysis turns into grid-scale noise. The Joint AR 1$|$W members are individually plausible but disagree with each other, so their mean collapses into a featureless band. FlowDAS members have noisy, unphysical patches. \Cref{fig:evolution-multimodal} follows the \texttt{Multimodal} filter mean over time. LETKF loses the flow structure within $48$\,h. DAISI stays coherent but gradually loses the high-frequency features, drifting towards an overly smooth field. The DAWIS filter tracks the high-frequency features better throughout the trajectory.

\begin{figure*}[tb]
  \centering
  \includegraphics[height=\textheight]{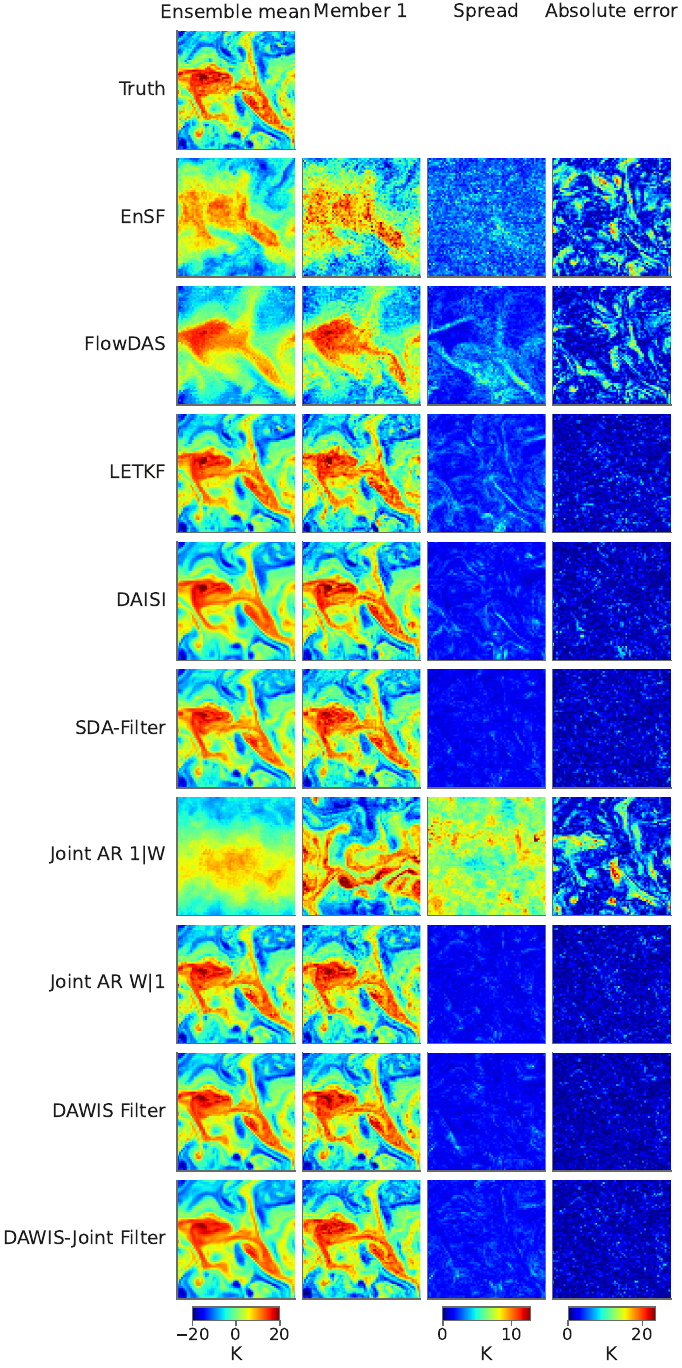}
  \caption{%
    Filtering states at the end of the trajectory for SQG \texttt{Noisy}.
  }
  \label{fig:fields-filtering-noisy}
\end{figure*}

\begin{figure*}[tb]
  \centering
  \includegraphics[height=\textheight]{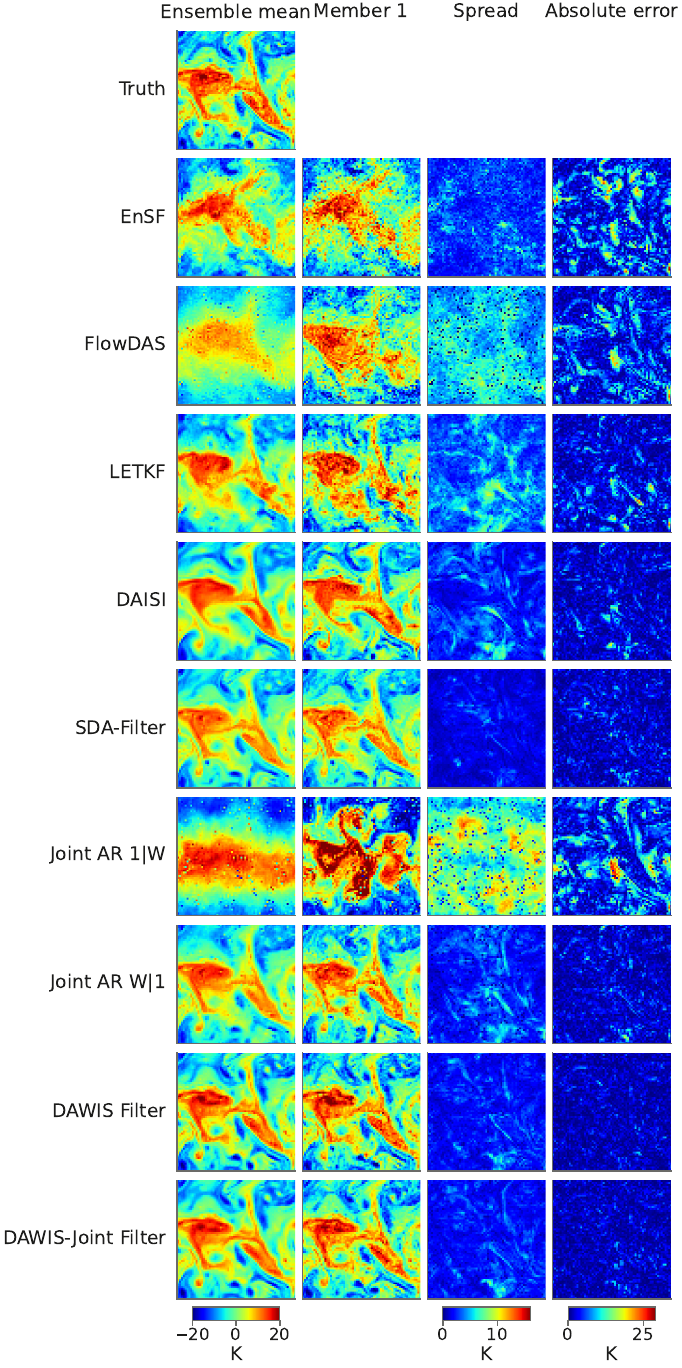}
  \caption{%
    Filtering states at the end of the trajectory for SQG \texttt{Sparse}.
  }
  \label{fig:fields-filtering-sparse}
\end{figure*}

\begin{figure*}[tb]
  \centering
  \includegraphics[height=\textheight]{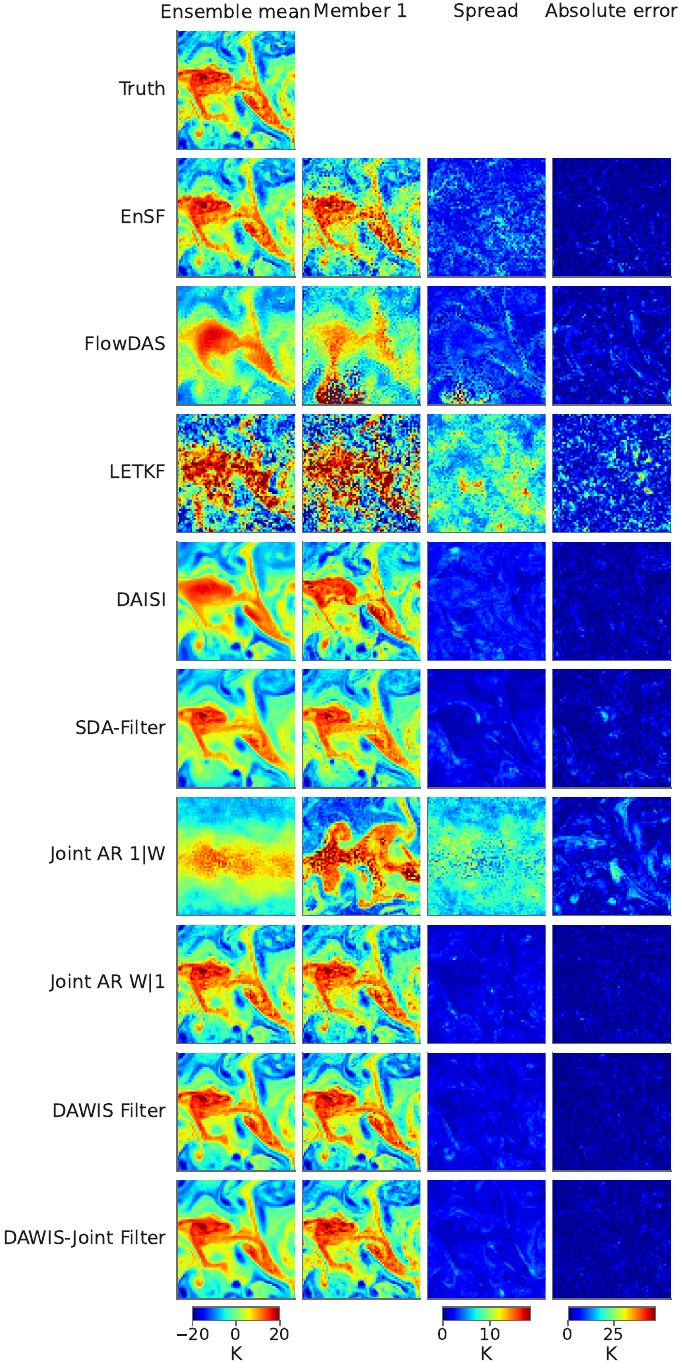}
  \caption{%
    Filtering states at the end of the trajectory for SQG \texttt{Multimodal}.
  }
  \label{fig:fields-filtering-multimodal}
\end{figure*}

\begin{figure*}[tb]
  \centering
  \includegraphics[height=\textheight]{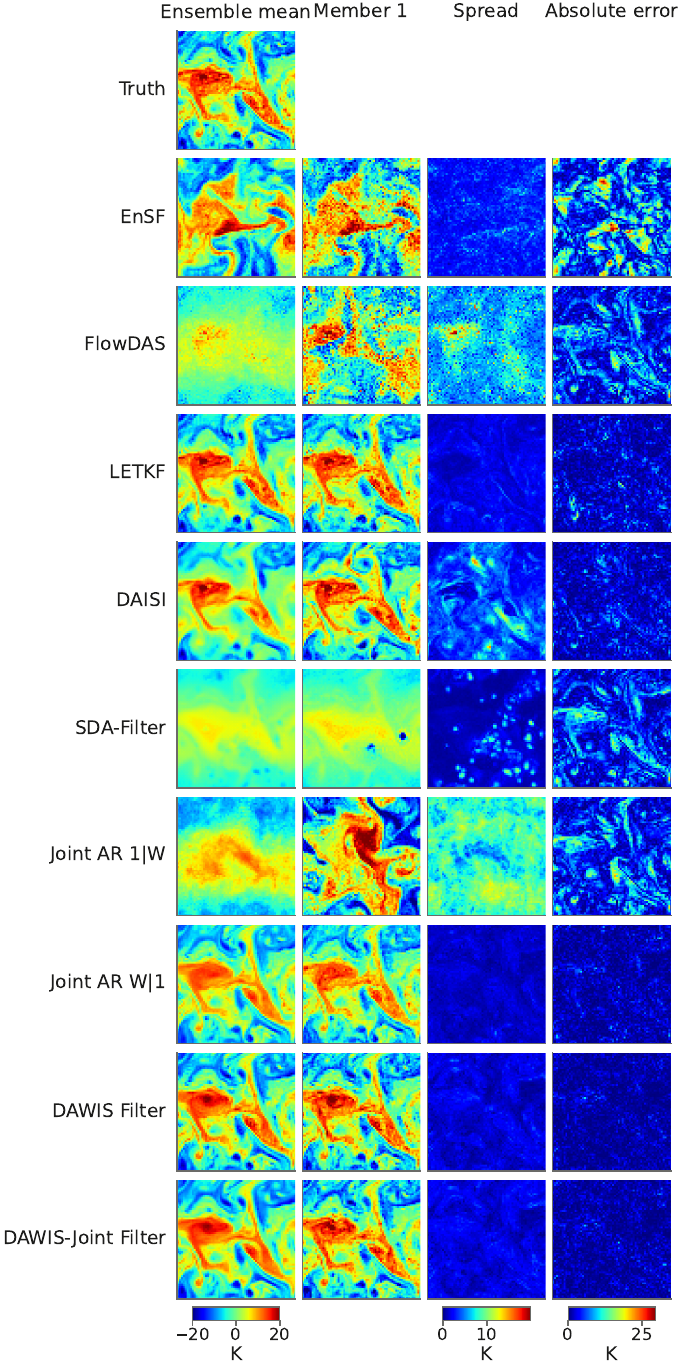}
  \caption{%
    Filtering states at the end of the trajectory for SQG \texttt{Saturating}.
  }
  \label{fig:fields-filtering-saturating}
\end{figure*}

\begin{figure*}[tb]
  \centering
  \includegraphics[height=\textheight]{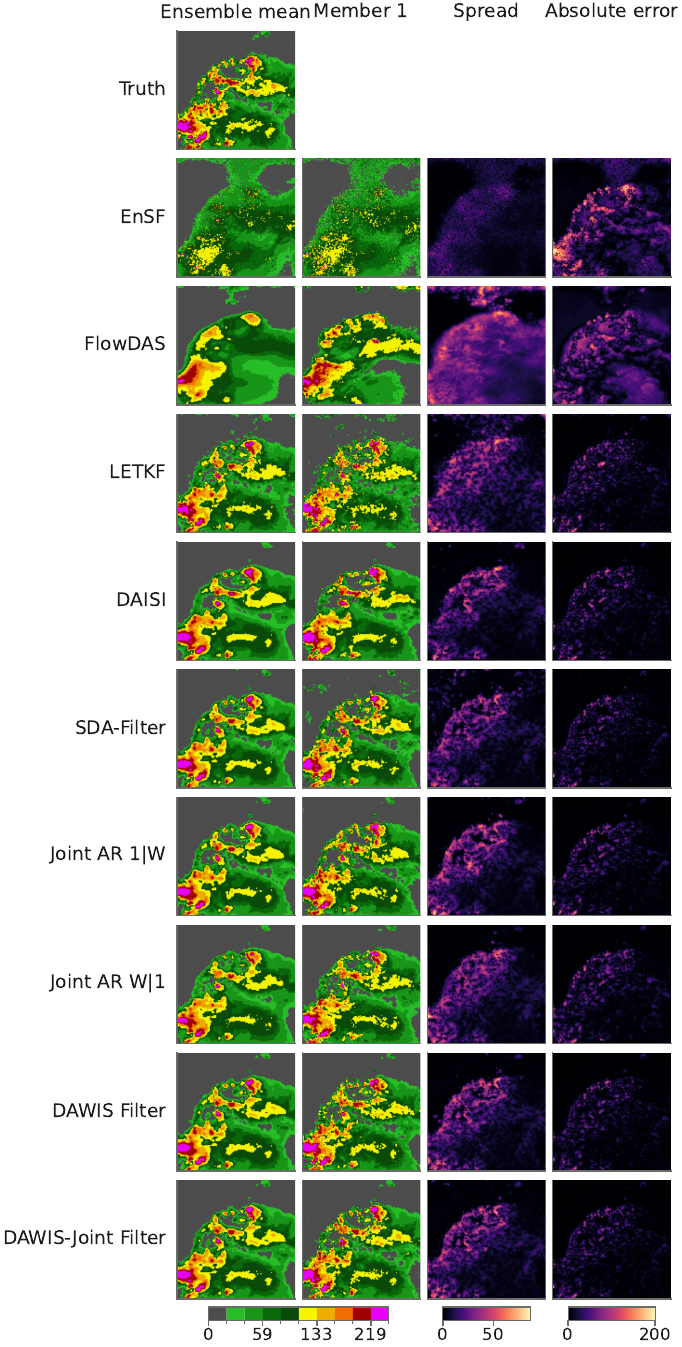}
  \caption{%
    Filtering states at the end of the trajectory for \texttt{SEVIR}.
  }
  \label{fig:fields-filtering-sevir}
\end{figure*}

\begin{figure*}[tb]
  \centering
  \includegraphics[width=0.95\linewidth]{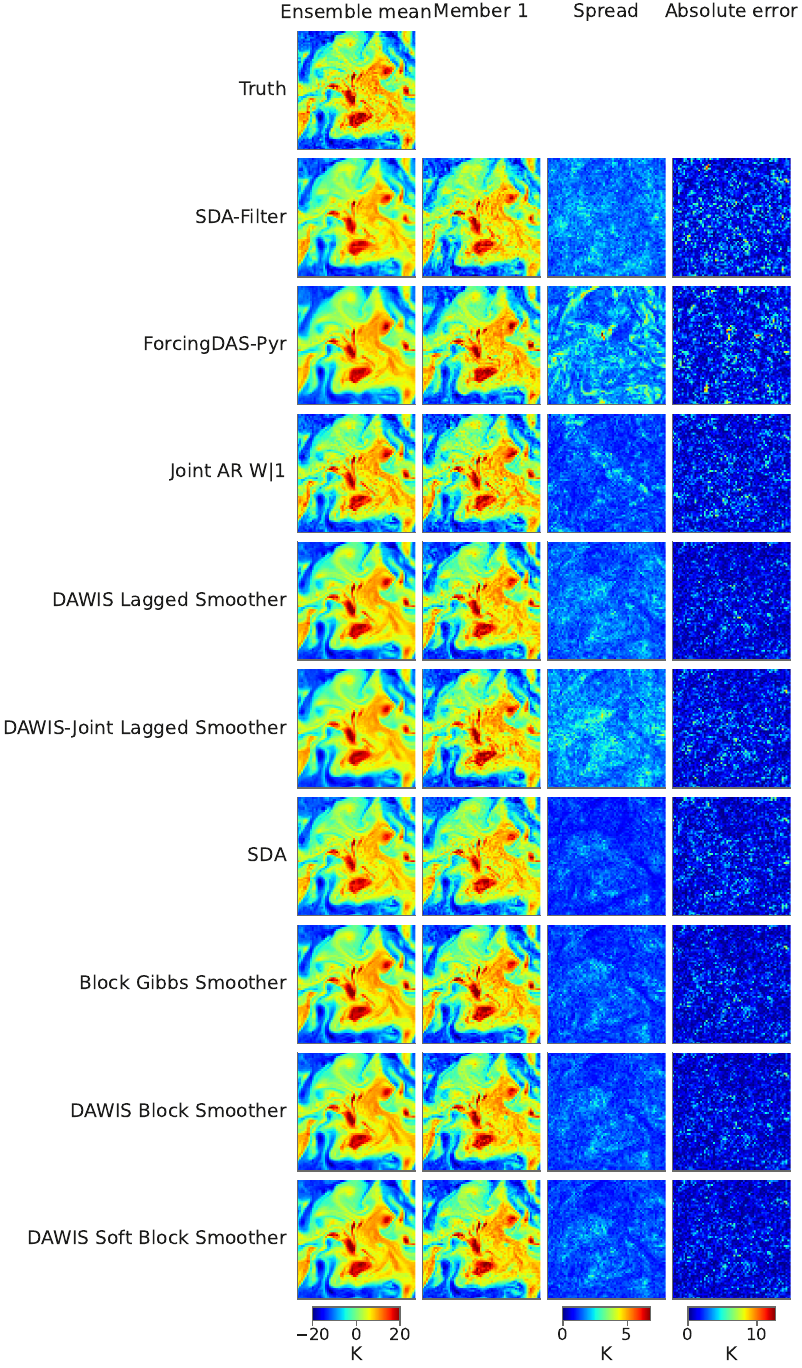}
  \caption{%
    Smoothing states halfway through the trajectory for SQG  \texttt{Noisy}.
  }
  \label{fig:fields-smoothing-noisy}
\end{figure*}

\begin{figure*}[tb]
  \centering
  \includegraphics[width=0.95\linewidth]{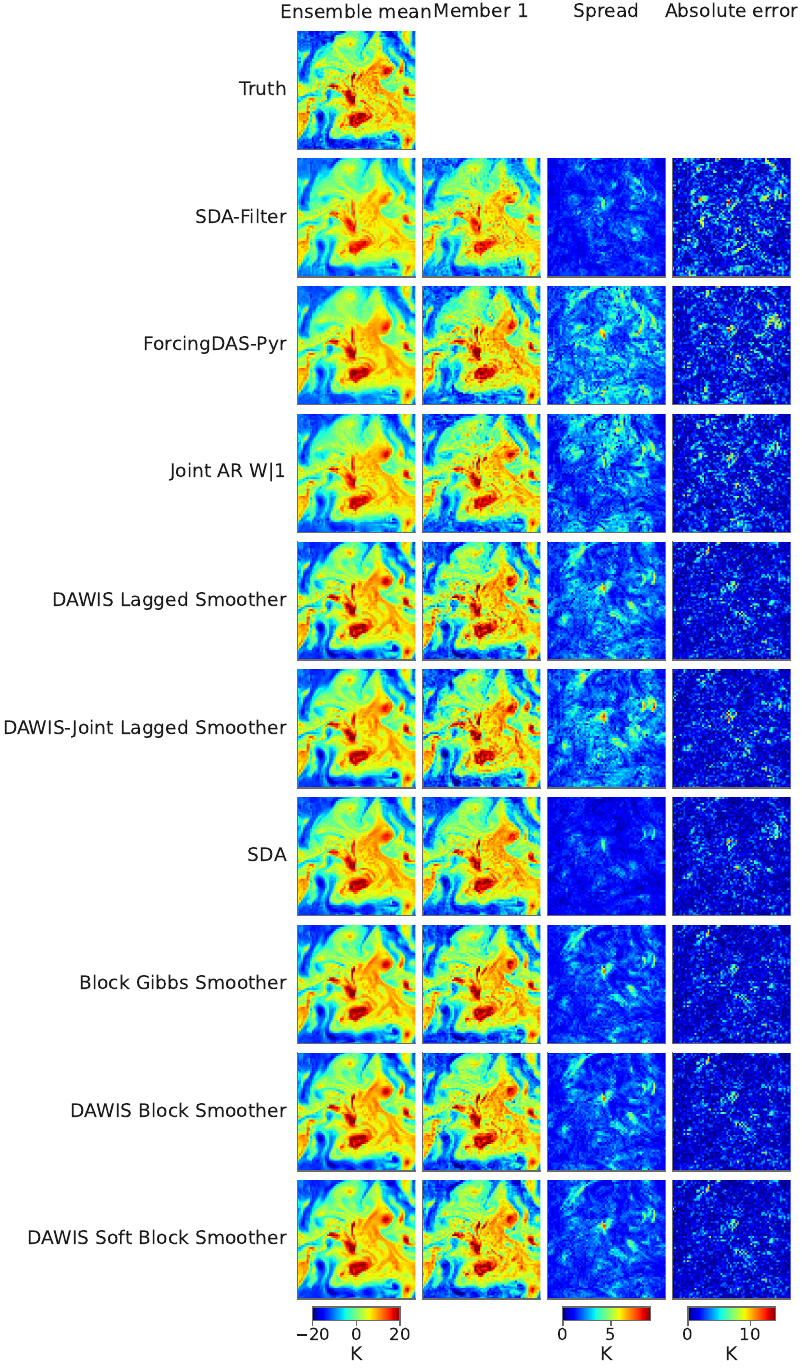}
  \caption{%
    Smoothing states halfway through the trajectory for SQG  \texttt{Sparse}.
  }
  \label{fig:fields-smoothing-sparse}
\end{figure*}

\begin{figure*}[tb]
  \centering
  \includegraphics[width=0.95\linewidth]{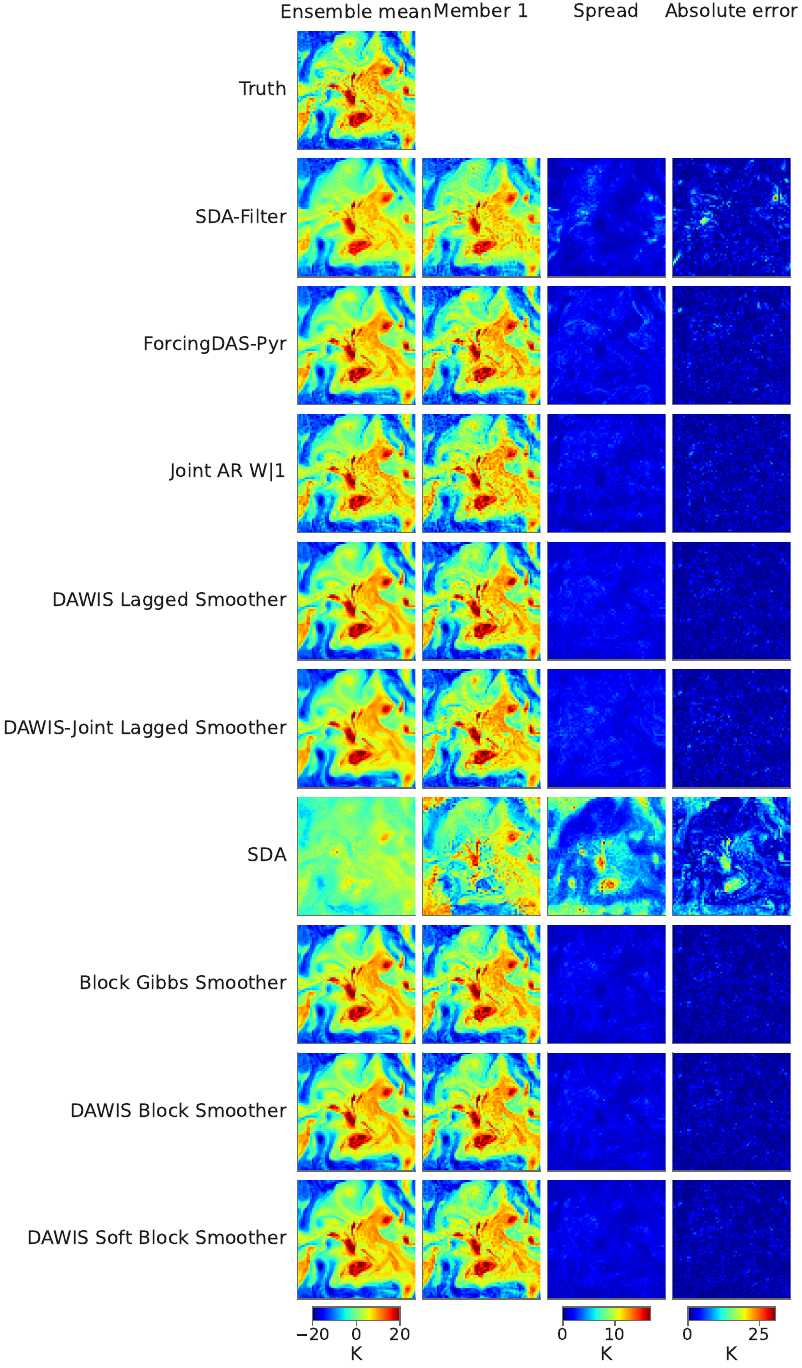}
  \caption{%
    Smoothing states halfway through the trajectory for SQG  \texttt{Multimodal}.
  }
  \label{fig:fields-smoothing-multimodal}
\end{figure*}

\begin{figure*}[tb]
  \centering
  \includegraphics[width=0.95\linewidth]{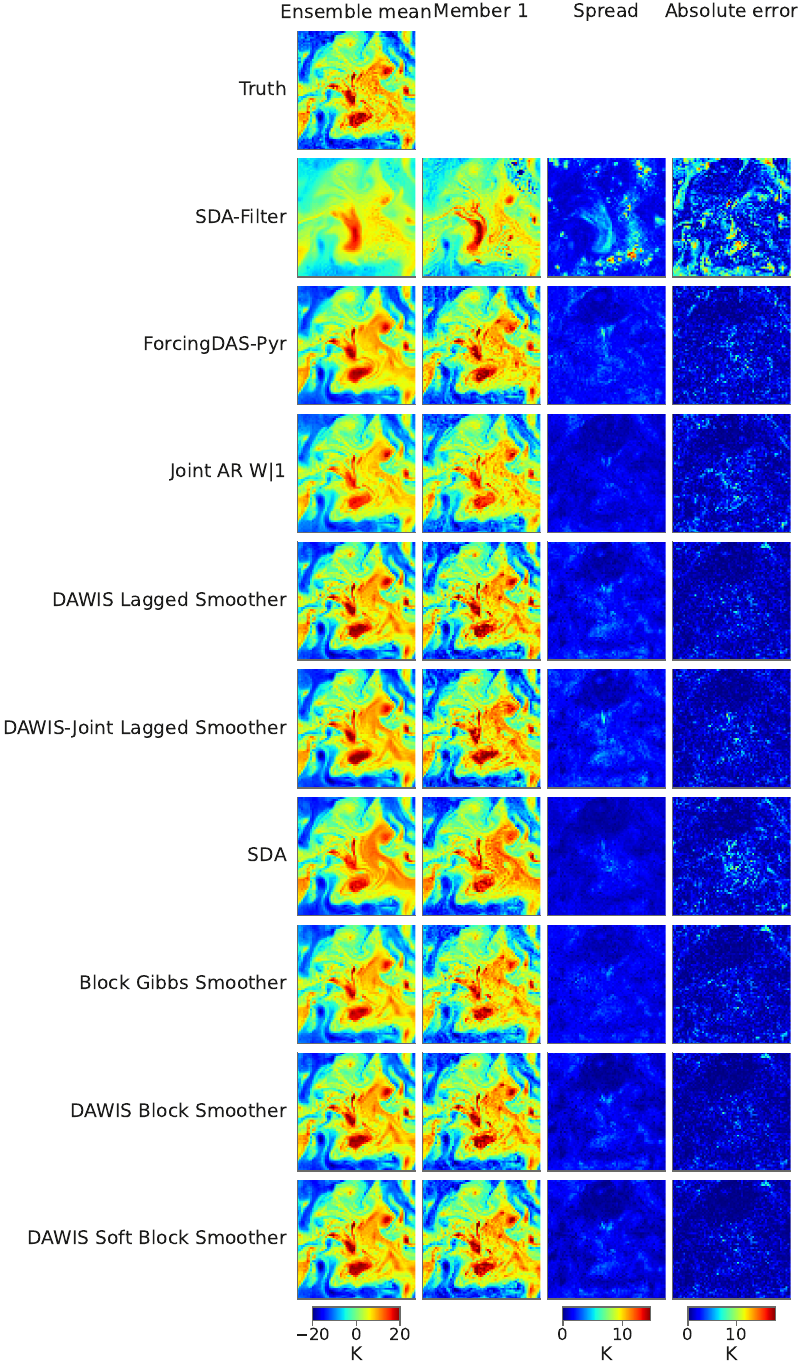}
  \caption{%
    Smoothing states halfway through the trajectory for SQG  \texttt{Saturating}.
  }
  \label{fig:fields-smoothing-saturating}
\end{figure*}

\begin{figure*}[tb]
  \centering
  \includegraphics[width=0.95\linewidth]{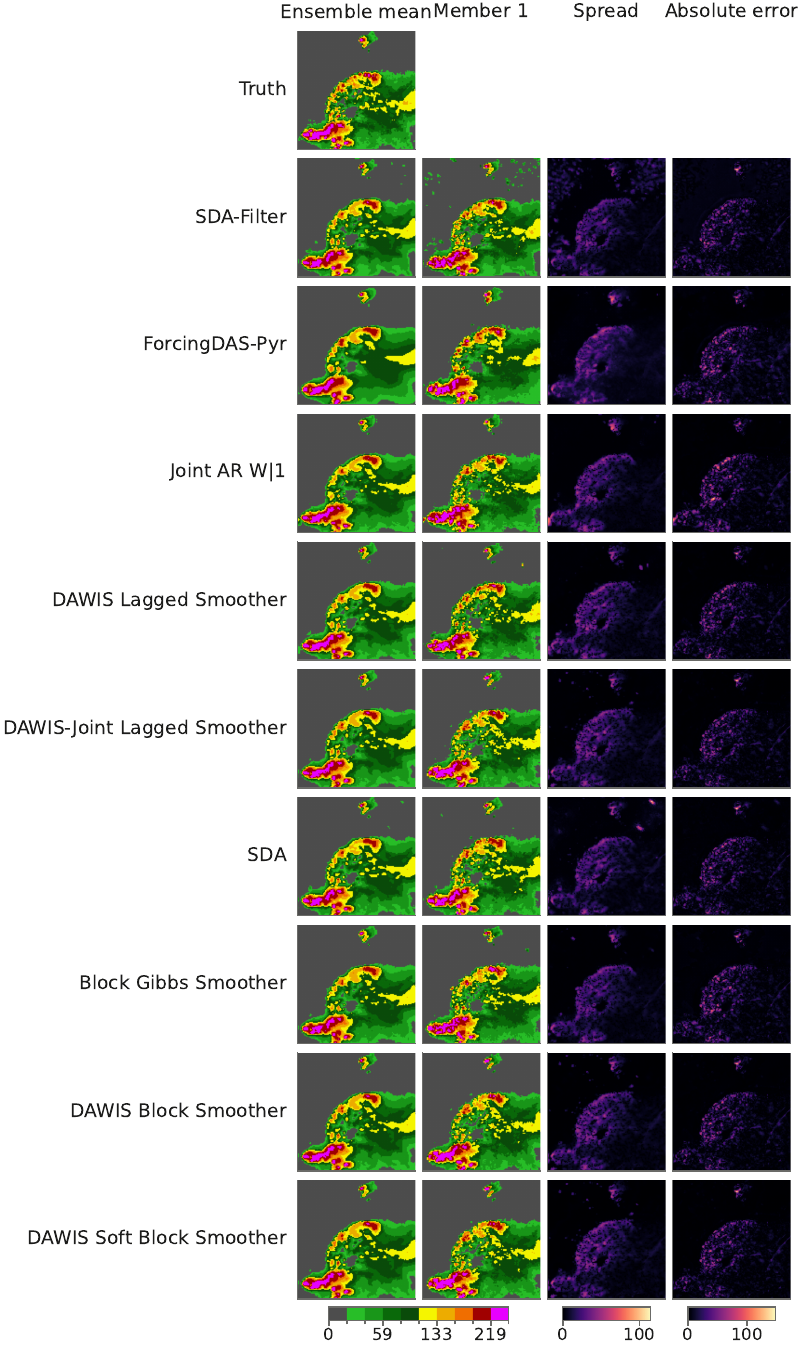}
  \caption{%
    Smoothing states halfway through the trajectory for \texttt{SEVIR}.
  }
  \label{fig:fields-smoothing-sevir}
\end{figure*}

\begin{figure*}[tb]
  \centering
  \includegraphics[width=\linewidth]{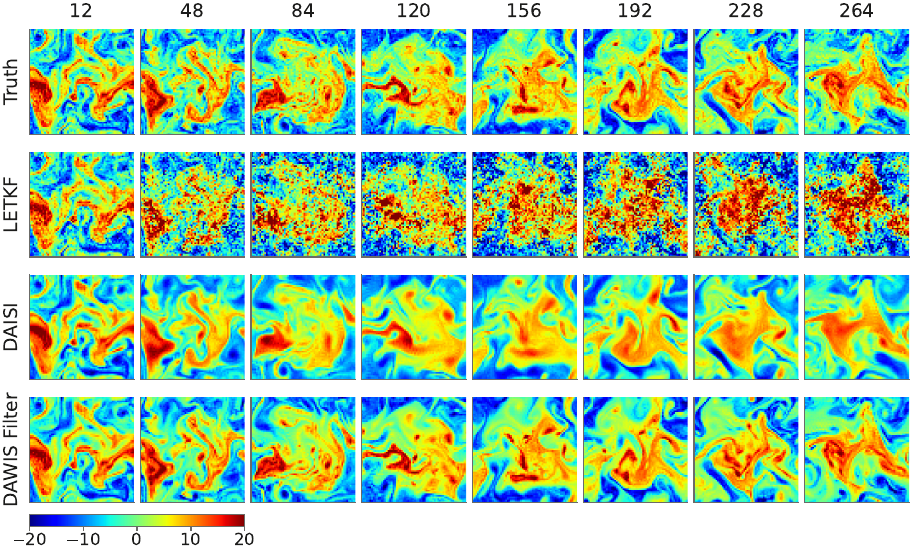}
  \caption{%
    The evolution of the ensemble mean of the filtering distribution over the assimilation window for SQG \texttt{Multimodal}.
  }
  \label{fig:evolution-multimodal}
\end{figure*}

\end{document}